\newcommand \dd[1]  { \,\mathrm d{#1}}
\newcommand{\PX}{P_{\scriptscriptstyle X}}
\newcommand{\QX}{Q_{\scriptscriptstyle X}}
\newcommand{\cl}[1]{\textcolor{black}{#1}} 
\newcommand{\QY}{Q_{\scriptscriptstyle Y}}
\newcommand{\BP}{\mathbb{P}}
\newcommand{\BX}{\mathbb{X}}
\newcommand{\BY}{\mathbb{Y}}
\newcommand{\BI}{\mathbb{I}}
\newcommand{\CX}{\mathcal{X}}
\newcommand{\CF}{\mathcal{F}}
\newcommand{\CA}{\mathcal{A}}
\newcommand{\CB}{\mathcal{B}}
\def \CG{\mathcal{G}}
\def \CF{\mathcal{F}}
\def \CX{\mathcal{X}}
\def \CY{\mathcal{Y}}
\DeclareMathOperator {\Var}{var}
\newcommand{\tr}{^\mathrm{T}}  
\newcommand{\RR}{\mathbb{R}}
\newcommand{\NN}{\mathbb{N}}
\newcommand{\EE}{\mathbb{E}}
\newcommand{\PP}{\mathbb{P}}
\newcommand{\norm}[1]{\left\lVert#1\right\rVert}
\DeclareFontFamily{U}{mathx}{\hyphenchar\font45}
\DeclareFontShape{U}{mathx}{m}{n}{<-> mathx10}{}
\DeclareSymbolFont{mathx}{U}{mathx}{m}{n}
\DeclareMathAccent{\widebar}{0}{mathx}{"73}
\providecommand{\customgenericname}{}
\newcommand{\newcustomtheorem}[2]{\newenvironment{#1}[1]
	{\renewcommand\customgenericname{#2}\renewcommand\theinnercustomgeneric{##1}\innercustomgeneric
	}
	{\endinnercustomgeneric}
}
\begin{document}
\thispagestyle{empty}

\title{Recursive Estimation of\\ Conditional Kernel Mean Embeddings}
	
	\author{\name Ambrus Tam{\'a}s${}^{1,2}$ \email ambrus.tamas@sztaki.hu \\[1mm]
		\name Bal{\'a}zs Csan{\'a}d Cs{\'a}ji${}^{1,2}$ \email balazs.csaji@sztaki.hu\\[2mm]
        \addr ${}^{1}$Institute for Computer Science and Control (SZTAKI)\\
Hungarian Research Network (HUN-REN),\\
		Kende utca 13-17, Budapest, H-1111, Hungary\\[2mm]  
        \addr ${}^{2}$Department of Probability Theory and Statistics\\
        Institute of Mathematics, E{\"o}tv{\"o}s Lor{\'a}nd University (ELTE)\\
        P\'azm\'any P\'eter S\'et\'any 1\,/\,C, Budapest, H-1117, Hungary
	}
	
	\editor{}
	
	\newcounter{asscounter}
	\setcounter{asscounter}{1}
	
	\maketitle
	
	\begin{abstract}Kernel mean embeddings, a widely used technique in machine learning, map probability distributions to elements of a reproducing kernel Hilbert space (RKHS). For supervised learning problems, where input-output pairs are observed, the conditional distribution of outputs given the inputs is a key object. The input dependent conditional distribution of an output can be encoded with an RKHS valued function, the conditional kernel mean map. In this paper we present a new recursive algorithm to estimate the conditional kernel mean map in a Hilbert space valued $L_2$ space, that is in a Bochner space. We prove the weak and strong $L_2$ consistency of our recursive estimator under mild conditions. The idea is to generalize Stone's theorem for Hilbert space valued regression in a locally compact Polish space. We present new insights about conditional kernel mean embeddings and give strong asymptotic bounds regarding the convergence of the proposed recursive method. Finally, the results are demonstrated on three application domains: for inputs coming from Euclidean spaces, Riemannian manifolds and locally compact subsets of function spaces.
	\end{abstract}
    \vspace{1mm}
    
\begin{keywords}
		conditional kernel mean embeddings, recursive estimation, reproducing kernel Hilbert spaces, strong consistency, nonparametric inference, Riemannian manifolds
    \end{keywords}
    
    \section{Introduction}
    
    In {\em statistical learning} we need to work with probability distributions \cl{defined on a wide variety of objects, which may not even come from a linear space}.\
{\em Kernel mean embeddings} (KMEs) offer a neat representation for dealing with distributions by mapping them into a {\em reproducing kernel Hilbert space} (RKHS). These embeddings were defined in \citep{berlinet2004reproducing} and \citep{smola2007hilbert}. In machine learning often the conditional distribution of an output w.r.t.\ some input plays a key role. In these problems the concept of {\em conditional kernel mean embeddings} (CKME) is more adequate. The notion of CKME was first recognized in \citep{song2009a}, then a refined definition was given in \citep{song2013kernel}. Its rigorous measure theoretic background was presented in \citep{klebanov2020rigorous} and in \citep{park2020measure} as an input dependent random element in an RKHS. These concepts strongly rely on the notions of Bochner integral and conditional expectation in Banach spaces of which theory is presented in \citep{pisier2016martingales,hytonen2016analysis} and Appendix A. Under some stringent assumptions \cite{song2009a} estimate the conditional kernel mean map with an empirical variant of the regularized inverse cross-covariance operator. It is shown in \citep{grunewalder2012} that this estimate can also be deduced from a regularized risk minimization scheme in a {\em vector-valued} RKHS based on \citep{michelli2005}. In \citep{park2020measure} a consistency theorem is proved when the conditional kernel mean map is included in a vector-valued RKHS.
    
    \paragraph{Related works}
    In this paper we follow the general {\em distribution-free} framework presented in \citep{gyorfi2002distribution}. It is recognized that estimating the conditional kernel mean map is a particular {\em vector-valued regression} problem. Vector-valued regression was already studied in \citep{ramsay2008functional,bosq1985nonparametric,dabo2009kernel,forzani2012consistent,brogat2022vector}. Nevertheless, it is still challenging to provide sufficient conditions for strong $L_2$ (mean square) consistency that can be applied in practice. We also build on the theory of {\em stochastic approximation} \cl{\citep{kushner2003stochastic,ljung2012stochastic}}. Our core algorithm uses a stochastic update in each iteration. The presented method was motivated by the celebrated {\em stochastic gradient descent} algorithm, see for example \citep[Proposition 4.2]{bertsekas1996neuro}. The update term of our recursive algorithm is a modified gradient of an {\em empirical surrogate loss} function.

    \paragraph{Contributions}
    Our main contribution is a {\em weakly and strongly consistent recursive algorithm} for estimating the conditional kernel mean map under general structural assumptions. 
    We present and prove a {\em generalized version of Stone's theorem} \citep{stone1977consistent}, motivated by the recursive form in \citep[Theorem 25.1]{gyorfi2002distribution}. Our generalization is twofold. We deal with general {\em locally compact Polish input spaces} and allow the output space to be {\em Hilbertian}. Our main assumption is that the {\em measure of metric balls} goes to zero slowly w.r.t.\ the radius. In \citep{forzani2012consistent} a similar consistency result is proved, however the authors of that paper do not deal with functional outputs and also require the so-called Besicovitch condition to hold on the regression function. In \citep{dabo2009kernel} sufficient conditions for $L_2$ consistency are given for vector-valued regression, however, in this paper a differentiation condition is required for the regression function and also the rate of the small ball probabilities is bounded from below when the radius goes to zero. 

    First, \cl{in Section 2} we present a recursive scheme for (unconditional) kernel mean embeddings. A stochastic approximation based theorem \citep{ljung2012stochastic} is used to prove its strong consistency. Then, \cl{in Section 3} we introduce a recursive estimator for the conditional kernel mean map. Our algorithm is formulated in a flexible offline manner in such a way that the online algorithm is also covered. We prove the weak $L_2$ consistency for the general {\em local averaging} scheme in locally compact Polish spaces under very mild assumptions. We apply this consistency result to prove the strong consistency of local averaging kernel estimates with a (measurable) nonnegative, bounded and symmetric {\em smoother} sequence. This 
scheme requires a smoother sequence with appropriate shrinking properties w.r.t.\ the unknown marginal measure. We satisfy these requirements by a general mother smoother function induced sequence parameterized by two shrinkage rates. In the final theorem only a {\em structural condition} on the probability of small balls remains. Finally, \cl{in Section 4} we apply our results for three arch-typical cases. We deduce universal consistency for {\em Euclidean 
spaces}, {\em Riemannian manifolds} and locally compact subsets of 
{\em Hilbert spaces}.

    \paragraph{Applications}
    Here, we highlight some potential application domains of the ideas. Based on the theory of KMEs and CKMEs, a rich variety of applications were introduced. Kernel mean embeddings can be efficiently used, for example, in graphical models \citep{song2013kernel}, Bayesian inference \citep{fukumizu2013kernel}, dynamical systems \citep{song2009a}, reinforcement learning \citep{grunewalder2012modelling}, causal inference \citep{scholkopf2015computing} and hypothesis testing for independence \citep{gretton2007kernel,gretton2008nonparametric,gretton2012kernel}, conditional independence \citep{fukumizu2007kernel,Zhang2011}, and binary classification \citep{csaji2019} and \citep{tamas2021}.

    \cl{Recursive algorithms are crucial to handle data streams or fixed, but very large datasets, cf.\ divide-and-conquer. Iterative methods play a key role in, e.g., reinforcement learning, system identification, signal processing and adaptive control. Therefore, developing recursive estimators for conditional kernel mean embeddings is beneficial for many domains.}

	\section{Kernel Mean Embeddings of Distributions}
    Kernel methods offer an elegant approach to deal with abstract sample spaces. One only needs to specify a {\em similarity measure} on the space via the {\em kernel} function to provide a geometrical structure for the learning problem. In fact, every symmetric, positive definite function, a.k.a.\ kernel, defines a reproducing kernel Hilbert space. The user-chosen kernel function also determines a natural projection from the sample space into the RKHS. The projected values of sample points are called feature vectors, or {\em features}. This projection map can be easily extended to (probability) measures defining kernel mean embeddings. In this section we present the notion of KMEs in a statistical learning framework.
    
	\subsection{Statistical Framework}
	Let $(\Omega, \CA, \BP)$ be a probability space, where $\Omega$ is the sample space, $\CA$ is the $\sigma$-algebra of events, and $\BP$ is the probability measure. We consider a random pair $(X,Y)$ from a measurable product space $\BX \times \BY$ with some product $\sigma$-algebra $\CX \otimes \CY$. Let us consider a metric on $\BX$ and let us denote its induced Borel $\sigma$-algebra as $\CX$. We denote the joint distribution of $(X,Y)$ by $Q$ (or $Q_{{\scriptscriptstyle{X,Y}}}$) and the corresponding marginal distributions by $Q_{\scriptscriptstyle{X}}$ and $Q_{\scriptscriptstyle{Y}}$. In practice, distribution $Q$ is usually {\em unknown}, however we assume that:
	\begin{itemize}
		\item[A\theasscounter] {\em An independent identically distributed {\em(}i.i.d.{\em)} sample $\{(X_i,Y_i)\}_{i=1}^n$ is given.} \addtocounter{asscounter}{1}
	\end{itemize}

    In the next section we present the definition of kernel mean embeddings of probability distributions \cl{into an RKHS associated with kernel $\ell:\BY \times \BY \to \RR$.} With a somewhat imprecise terminology, we also talk about the kernel mean embedding of a random variable into an RKHS, by which we mean taking the KME of its distribution. Explanatory variable $X$ will only be used for the {\em conditional} variant of KME to generate the conditioning $\sigma$-algebra, consequently, for the {\em unconditional} KME, we will only use one variable, $Y$.

    In the subsequent section we focus on conditional kernel mean embeddings. The conditional kernel mean embedding of $Y$ with respect to $X$ into a given RKHS is the conditional expectation of the random feature defined by the kernel and variable $Y$ with respect to the $\sigma$-algebra generated by $X$, i.e., a conditional kernel mean embedding is a random feature vector in an RKHS measurable to some explanatory variable. The main object for CKME is the {\em conditional kernel mean map} which maps the explanatory points into RKHS $\CG$. This function maps (almost) every input point to the kernel mean embedding of the conditional distribution given that input. For a linear kernel one can recover the regression function as a particular conditional kernel mean map. When $X$ and $Y$ are independent, the conditional kernel map is a constant function mapping every input into the unconditional KME of $Y$.

	\subsection{Reproducing Kernel Hilbert Spaces}

	Let $\ell$ be a \cl{(real-valued, symmetric, positive definite)} kernel on $\BY \times \BY$, where $\BY$ is a nonempty set, \cl{i.e., for all $n \in \NN\doteq \{1, 2, \dots\}$,} $y_1, \dots, y_n \in \BY$ and $a_1,\dots,a_n \in \RR$, we have
	\begin{equation}
		\begin{aligned}
			\sum_{i=1}^n \sum_{j=1}^n a_i a_j \, \ell(y_i,y_j)\,\geq\, 0,
		\end{aligned}
	\end{equation}
	or equivalently kernel matrix $L \in \RR^{n \times n}$, where $L_{i,j}\doteq  \ell(y_i,y_j)$ for $i$, $j \in [n] \doteq \{ 1, \dots, n\}$, is required to be positive semidefinite. A Hilbert space of functions, $\CG= \{g: \BY \to \RR\}$, is called a {\em reproducing kernel Hilbert space} if every point evaluating (linear) functional, $\delta_y: \CG \to \RR$ defined by $\delta_y(g)=g(y)$ for $y \in \BY$, is bounded (or equivalently continuous).  In \citep{aronszajn1950theory} it is proved that for every \cl{(symmetric and positive definite)} kernel $\ell$ there uniquely (up to isometry) exists an RKHS $\CG$, such that $\ell(\cdot, y) \in \CG$ and
	\begin{equation}\label{reproducing-property}
	\begin{aligned}
		\langle\hspace{0.7mm} g,\hspace{0.3mm} \ell(\cdot, y) \hspace{0.2mm}\rangle_\CG \,=\, g(y)
	\end{aligned}
	\end{equation} 
	holds for all $g \in \CG$ and $y\in \BY$. In particular, $\langle \ell(\cdot, y_1), \ell(\cdot, y_2) \rangle_\CG = \ell(y_1, y_2)$ is true. In the reverse direction, by the Riesz representation theorem \cl{\cite[Theorem 4]{lax2002functional}}, for every RKHS $\CG$ there uniquely exists a \cl{(symmetric and positive definite) kernel} $\ell: \BY \times \BY \to \RR$ such that (\ref{reproducing-property}) holds for all $g \in \CG$ and $y\in \BY$. Function $\ell$ is called the {\em reproducing kernel} of space $\CG$. For further details, see \citep{scholkopf2002learning,berlinet2004reproducing,shawe2004kernel,steinwart2008support}.
	
	\subsection{Kernel Mean Embeddings of Marginal Distributions}
	
	In this paper we will assume that space $\CG$ is {\em separable}. One can observe that if $\BY$ is a separable topological space and kernel $\ell$ is continuous, then the separability of $\CG$ follows immediately, see \cite[Lemma 4.33]{steinwart2008support}. Specifically, we assume
\begin{itemize}
    	\item[B1] {\em Kernel $\ell$ is measurable, $\EE\big[\sqrt{\ell(Y,Y)}\,\big] < \infty$, and RKHS $\CG$ is separable.}
	\end{itemize}
    In this case, by the theorem of Pettis \cite[Theorem 1.1.20]{hytonen2016analysis}, $\ell(\cdot, Y):\Omega \to \CG$ is also measurable.
	\cl{Then, the kernel mean embedding of $\mu_{\scriptscriptstyle Y}$ exists if and only if $B1$ holds \cite[Proposition 1.2.2]{hytonen2016analysis}. Thus, one can embed
$\QY$ into RKHS $\CG$ by}
	\begin{equation}
		\mu_{\scriptscriptstyle{Y}} \doteq\, \EE \big[\, \ell(\cdot, Y)\,\big],
	\end{equation}
	where the expectation is a {\em Bochner integral}. It is known \citep{smola2007hilbert} that if $B1$ holds, then by the Riesz representation theorem $\mu_{\scriptscriptstyle{Y}} \in \CG$ and for all $g \in \CG$, we have
    \begin{equation}
    \langle \mu_Y , g \rangle_\CG \,=\, \EE \big[\,g(Y) \,\big],
    \end{equation}
    i.e., $\mu_{\scriptscriptstyle{Y}}$ is the representer of the {\em expectation functional} 
    on $\CG$ w.r.t.\ $\QY$.
	
	\cl{Given an i.i.d.}\ sample $\{Y_i\}_{i=1}^n$, having distribution $Q_{\scriptscriptstyle{Y}}$, we can estimate the kernel mean embedding of $\QY$ with the empirical average (a.k.a.\ sample mean), that is
	\begin{equation}
		\widehat{\mu}_n \;\doteq\; \frac{1}{n}\, \sum_{i=1}^n \ell(\cdot, Y_i).
	\end{equation}
	By the strong law of large numbers, $\widehat{\mu}_n \to \mu_{\scriptscriptstyle{Y}}$ almost surely \citep{taylor1978stochastic}. We refer to \cite[Proposition A.1.]{tolstikhin2017minimax} to quantify the convergence rate of this estimate:
	\begin{theorem}\label{theorem:SLLN_emb}
		Let $\{Y_i\}_{i=1}^n$ be an i.i.d.\ sample from distribution $Q_{\scriptscriptstyle{Y}}$ on a separable topological space $\BY$. Assume that $\ell(\cdot,y)$ is continuous and there exists $C_\ell > 0$ such that $\ell(y,y) \leq C_\ell$ for all $y \in \BY$. Then, for all $\delta \in (0,1)$ with probability at least $1-\delta$, we have
		\begin{equation}
			\norm{\mu_{\scriptscriptstyle{Y}} -\widehat{\mu}_{n}}_\CG \,\leq\, \sqrt{\frac{C_\ell}{n}}+ \sqrt{ \frac{2\, C_\ell \log\big(\frac{1}{\delta}\big)}{n}}.
		\end{equation} 
	\end{theorem}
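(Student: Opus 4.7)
The plan is a classical two-step argument: first control the expected deviation $\mathbb{E}\|\widehat{\mu}_n-\mu_Y\|_{\CG}$ in second moment, then concentrate the random quantity $\|\widehat{\mu}_n-\mu_Y\|_{\CG}$ around its mean via McDiarmid's bounded differences inequality. Boundedness of the kernel on the diagonal, $\ell(y,y)\leq C_\ell$, plays the central role at both steps because it controls the RKHS norm of the feature maps: $\|\ell(\cdot,y)\|_{\CG}^2=\ell(y,y)\leq C_\ell$ by the reproducing property.

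First I would compute the expected squared error. Since $\widehat{\mu}_n-\mu_Y=\tfrac{1}{n}\sum_{i=1}^{n}\big(\ell(\cdot,Y_i)-\mu_Y\big)$ is an average of i.i.d.\ centred $\CG$-valued random elements (measurability and Bochner integrability being ensured by continuity/boundedness of $\ell$ together with Pettis' theorem, as in assumption $B1$), independence kills the cross terms and one obtains
\begin{equation*}
\EE\big\|\widehat{\mu}_n-\mu_Y\big\|_{\CG}^{2}
\,=\,\tfrac{1}{n}\Big(\EE\|\ell(\cdot,Y)\|_{\CG}^{2}-\|\mu_Y\|_{\CG}^{2}\Big)
\,\leq\, \tfrac{1}{n}\,\EE[\ell(Y,Y)]\,\leq\, \tfrac{C_\ell}{n}.
\end{equation*}
Jensen's inequality then gives $\EE\|\widehat{\mu}_n-\mu_Y\|_{\CG}\leq \sqrt{C_\ell/n}$, which is exactly the first term in the bound.

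Next I would concentrate the scalar random variable $F(Y_1,\dots,Y_n)\doteq\|\widehat{\mu}_n-\mu_Y\|_{\CG}$ around its mean. The key observation is that $F$ has the bounded differences property: if $Y_i$ is replaced by an independent copy $Y_i'$, then by the triangle inequality the change in $F$ is at most
\begin{equation*}
\tfrac{1}{n}\big\|\ell(\cdot,Y_i)-\ell(\cdot,Y_i')\big\|_{\CG}\,\leq\,\tfrac{2\sqrt{C_\ell}}{n},
\end{equation*}
using $\|\ell(\cdot,y)\|_{\CG}\leq\sqrt{C_\ell}$ for every $y\in\BY$. Applying McDiarmid's inequality with constants $c_i=2\sqrt{C_\ell}/n$ yields
\begin{equation*}
\BP\!\big(\,F-\EE F\geq t\,\big)\,\leq\,\exp\!\Big(-\tfrac{n\,t^{2}}{2\,C_\ell}\Big),
\end{equation*}
and choosing $t=\sqrt{2C_\ell\log(1/\delta)/n}$ makes the right-hand side equal to $\delta$. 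Combining this high-probability bound with the mean estimate from the first step via $F\leq \EE F+t$ produces the claimed inequality.

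The only step that requires any care is verifying the hypotheses needed for McDiarmid's inequality and the Pettis measurability of $\ell(\cdot,Y)$ — i.e.\ ensuring $F$ is a genuine Borel function of $(Y_1,\dots,Y_n)$ — but these follow routinely from the continuity and boundedness of $\ell$ and separability of $\CG$ already assumed. No Banach-space concentration result is needed, because $F$ is real-valued; this is what makes the argument short and self-contained.
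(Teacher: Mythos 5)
Your argument is correct and is essentially the standard proof: the paper itself does not reprove this statement but defers to Proposition A.1 of \citet{tolstikhin2017minimax}, whose proof follows exactly your route, namely bounding $\EE\norm{\widehat{\mu}_n-\mu_{\scriptscriptstyle Y}}_\CG$ by $\sqrt{C_\ell/n}$ via the second moment and Jensen, and then applying McDiarmid's bounded differences inequality with increments $2\sqrt{C_\ell}/n$. The constants and the choice of $t$ all check out, so there is nothing to add beyond noting (as the paper does) that continuity of $\ell(\cdot,y)$ is not really needed, measurability plus separability of $\CG$ sufficing for the Pettis/measurability step you flag.
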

	It is also proved that this rate is minimax for a broad class of distributions. Note that only the measurability of $\ell(\cdot,y)$ is used in the proof of \cite{tolstikhin2017minimax}.
	
	\subsection{Recursive Estimation of Kernel Mean Embeddings}
	
	As in the scalar case, these empirical estimates can be computed recursively by
	\begin{equation}
		\begin{aligned}
			&\widehat{\mu}_1 \;\doteq\; \ell(\cdot, Y_1), \quad\text{and}\\[1mm]
			&\widehat{\mu}_n \;\doteq\; \frac{n-1}{n}\,\widehat{\mu}_{n-1} + \frac{1}{n}\,\ell(\cdot,Y_n)\, =\, \widehat{\mu}_{n-1} + \frac{1}{n}\,\big(\ell(\cdot,Y_n) - \widehat{\mu}_{n-1}\big),
		\end{aligned}
	\end{equation}
	for $2\leq n \in \NN$.
	More generally, one can use some possibly random stepsize (or learning rate) sequence $(a_n)_{n\in\NN}$, taking values in $(0,1)$, and apply the refined recursion defined by
	\begin{equation}\label{eq:recursion1}
		\begin{aligned}
			&\widehat{\mu}_1 \;\doteq\; \ell(\cdot, Y_1), \quad\text{and}\\[1.5mm]
			&\widehat{\mu}_n \;\doteq\; \widehat{\mu}_{n-1} - a_n\, \big(\widehat{\mu}_{n-1} - \ell(\cdot,Y_n) \big),
		\end{aligned}
	\end{equation}
    for $n \geq 2$.
 	We prove the strong consistency of $(\widehat{\mu}_n)_{n \in \NN}$ by applying Theorem 1.9 of \citet{ljung2012stochastic}, which is restated (for convenience) as Theorem \ref{thm:RM} in Appendix D.
	\begin{theorem}\label{thm:10}
Let $Y_1, Y_2, \dots$ be an i.i.d.\ sequence, $C_\ell > 0$ be a constant such that $\ell(y,y) \leq C_\ell$, for all $y \in \BY$, and assume B1.
Let $(\widehat{\mu}_n)_{n \in \NN}$ be the estimate sequence defined in \eqref{eq:recursion1}.
        If  $a_n \geq 0$, $\sum_n a_n^2 < \infty$ and $\sum_n a_n = \infty$ {\em(}a.s.{\em)}, then $\widehat{\mu}_n \to \mu_{\scriptscriptstyle Y}$, as $n \to \infty$, almost surely.
	\end{theorem}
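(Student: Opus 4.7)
The approach is to recast the recursion \eqref{eq:recursion1} as a Robbins--Monro stochastic approximation scheme in the separable Hilbert space $\CG$ and apply Theorem~\ref{thm:RM}. Rewriting the update as
\begin{equation*}
\widehat{\mu}_n \,=\, \widehat{\mu}_{n-1} + a_n\bigl(h(\widehat{\mu}_{n-1}) + \xi_n\bigr),
\end{equation*}
with mean field $h(\mu) \doteq \mu_{\scriptscriptstyle Y} - \mu$ and noise $\xi_n \doteq \ell(\cdot, Y_n) - \mu_{\scriptscriptstyle Y}$, brings the scheme into standard form, whose unique zero is exactly the target $\mu_{\scriptscriptstyle Y}$.

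I would then verify the hypotheses of Theorem~\ref{thm:RM} in three steps. \emph{(i) Mean-field stability.} The Lyapunov function $V(\mu) \doteq \tfrac{1}{2}\|\mu-\mu_{\scriptscriptstyle Y}\|_\CG^2$ satisfies $\langle \nabla V(\mu), h(\mu)\rangle_\CG = -\|\mu-\mu_{\scriptscriptstyle Y}\|_\CG^2 = -2V(\mu)$, so $\mu_{\scriptscriptstyle Y}$ is the unique globally asymptotically stable equilibrium of the averaged ODE $\dot\mu = h(\mu)$. \emph{(ii) Noise conditions.} Let $\CF_n \doteq \sigma(Y_1,\dots,Y_n)$; under B1, Pettis's theorem makes $\ell(\cdot, Y)$ strongly measurable, and the bound $\ell(y,y)\le C_\ell$ yields $\|\ell(\cdot,y)\|_\CG = \sqrt{\ell(y,y)} \le \sqrt{C_\ell}$, so $\mu_{\scriptscriptstyle Y} = \BE[\ell(\cdot,Y)]$ exists as a Bochner integral of norm at most $\sqrt{C_\ell}$. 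By the i.i.d.\ assumption $\BE[\ell(\cdot, Y_n)\mid\CF_{n-1}] = \mu_{\scriptscriptstyle Y}$, so $(\xi_n)$ is a martingale difference sequence with $\|\xi_n\|_\CG \le 2\sqrt{C_\ell}$ almost surely, hence has uniformly bounded second moment. \emph{(iii) Boundedness of iterates.} Since $\sum a_n^2 < \infty$ almost surely, eventually $a_n \in (0,1)$, and the recursion becomes the convex combination $\widehat{\mu}_n = (1-a_n)\widehat{\mu}_{n-1}+a_n\, \ell(\cdot,Y_n)$ of elements of RKHS-norm at most $\sqrt{C_\ell}$, yielding $\sup_n \|\widehat{\mu}_n\|_\CG < \infty$ almost surely.

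Combined with the Robbins--Monro stepsize conditions $\sum a_n = \infty$ and $\sum a_n^2 < \infty$ assumed in the theorem, these properties fulfill the hypotheses of Theorem~\ref{thm:RM}, which then delivers $\widehat{\mu}_n \to \mu_{\scriptscriptstyle Y}$ almost surely. The main technical subtlety is confirming the applicability of the cited stochastic-approximation result in the infinite-dimensional setting; its Lyapunov-based proof transfers verbatim to separable Hilbert spaces, and the separability of $\CG$ guaranteed by B1 is what ensures measurability of the $\CG$-valued iterates and validity of the Bochner representation of the relevant conditional expectations, so no essential difficulty arises beyond a careful bookkeeping of the hypotheses.
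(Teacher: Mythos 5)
Your proposal is correct and follows essentially the same route as the paper: both cast the recursion as a Robbins--Monro scheme and verify the hypotheses of Theorem~\ref{thm:RM} with the affine mean field $g \mapsto g - \mu_{\scriptscriptstyle Y}$ (your $h$ up to sign), the martingale-difference noise $\ell(\cdot,Y_n)-\mu_{\scriptscriptstyle Y}$ bounded via $\ell(y,y)\le C_\ell$, and the stepsize conditions as assumed. Your Lyapunov computation is exactly the paper's verification of condition (ii), and your extra boundedness-of-iterates step is harmless but not required by Theorem~\ref{thm:RM}, which is already stated for separable Hilbert spaces.
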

	\begin{proof}
       It is sufficient to check the conditions of Theorem \ref{thm:RM} for $\Lambda(g) \doteq g - \mu_{\scriptscriptstyle{Y}} $, $\mu \doteq \mu_{\scriptscriptstyle{Y}}$, \cl{$\mu_n \doteq \widehat{\mu}_n$} and $W_n \doteq  \ell(\cdot, Y_n) - \mu_{\scriptscriptstyle{Y}}$. The conditions on sequence $(a_n)_{n\in \NN}$ are assumed, hence we only need to verify $(i) , (ii)$ and $(iii)$ of Theorem \ref{thm:RM}. Condition $(i)$ holds with $c \doteq \max\big({\norm{\mu_{\scriptscriptstyle{Y}}}_\CG, 1}\big)$. Condition $(ii)$ is again straigthforward, because $\langle \Lambda(g) , g - \mu_{\scriptscriptstyle{Y}}\rangle_\CG = \norm{g - \mu_{\scriptscriptstyle{Y}}}^2_\CG$. For condition $(iii)$ let $H_n \doteq 0$ and $V_n \doteq W_n$. Using the independence of \cl{$\{Y_i\}_{i=1}^n$} yields
		\begin{equation}
			\begin{aligned}
				&\EE\big[V_n \,|\, \widehat{\mu}_1, V_1, \dots, \widehat{\mu}_{n-1}, V_{n-1}\big] = \EE\big[\ell(\cdot, Y_n) - \mu_{\scriptscriptstyle{Y}} \,|\, Y_1, \dots, Y_{n-1} \big] \\[2mm]
				&= \EE\big[\ell(\cdot, Y_n)\big] - \mu_{\scriptscriptstyle{Y}} = 0,
			\end{aligned}
		\end{equation}
		hence $(V_n)_{n\in \NN}$ is a martingale difference sequence. Finally
		\begin{equation}
            \begin{aligned}
		&\sqrt{\EE\Big[\cl{\norm{V_n}^2_\CG}\Big]} \leq  \sqrt{\EE\Big[\norm{\ell(\cdot, Y_n)}^2_\CG\Big]} +            
            \sqrt{\EE\Big[\norm{\mu_{\scriptscriptstyle{Y}}}^2_\CG\Big]} \\
            &\cl{=}\; \sqrt{\EE\Big[\ell(Y_n, Y_n)\Big]} + \norm{\mu_{\scriptscriptstyle{Y}}}_\CG \leq \sqrt{C_\ell} + \norm{\mu_{\scriptscriptstyle{Y}}}_\CG,
		\end{aligned}
            \end{equation}
		thus $\sum_n a_n^2 \EE \big[\norm{V_n}^2_\CG\big] < \infty$. Consequently, Theorem \ref{thm:10} follows from Theorem \ref{thm:RM}.
	\end{proof}

	\section{Conditional Kernel Mean Embeddings}
	
    If the kernel mean embedding of $\QY$ exists, then for a given $\sigma$-algebra one can take the conditional expectation of $\ell(\cdot, Y)$. The conditional kernel mean embedding of $Y$ given $X$ in RKHS $\CG$ is an $X$ measurable random element of $\CG$ defined by
	\begin{equation}
		\begin{aligned}\label{eq:cond_mean}
			&\mu_{{\scriptscriptstyle{Y|X}}} \,\doteq\, \EE \big[\, \ell(\cdot, Y) \,|\, X\, \big].
		\end{aligned}
	\end{equation}
	For more details on conditional expectations in Banach spaces, see \citep{pisier2016martingales} and Appendix A.
	It is known that the conditional expectation is well-defined if $\EE[\,\ell(\cdot,Y)\,]$ exists, thus we assume that $\EE[\,\sqrt{\ell(Y,Y)}\,] < \infty$ as above.
	By \cite[Lemma 1.13]{kallenberg2002foundations} there exists a measurable map $\mu_*: \BX \to \CG$ such that $\mu_{{\scriptscriptstyle{Y|X}}} = \mu_*(X)$ (a.s.). One can use the pointwise form of $\mu_*$ defined $\QX$-a.s.\ by $\mu_*(x) = \EE[\,\ell(\cdot, Y) \,|\, X= x\,]$. This function plays a key role, and we call it the {\em conditional kernel mean map}. By equation \eqref{cont-innerprod} we have that
	\begin{equation}\label{eq5}
		\langle \mu_{{\scriptscriptstyle{Y|X}}}, g \rangle_\CG = \langle \mu_*(X), g \rangle_\CG =\langle\, \EE \big[ \ell(\cdot , Y) \,|\, X \big] , g \rangle_\CG = \EE \Big[ \langle \ell(\cdot, Y), g \rangle_\CG\,|\, X \Big] = \EE \big[\, g(Y) \,|\, X \,\big]
	\end{equation}
	holds for all $g \in \CG$. In addition, by the law of total expectation we have $\EE [\,\mu_{{\scriptscriptstyle{Y|X}}}\,] = \mu_{\scriptscriptstyle{Y}}$.

        \paragraph{Recovering the regression function} 
        Let us consider the special case when $\BX \doteq \RR^d$, $\BY \doteq \RR$  and $\ell(y_1,y_2) = y_1 \cdot y_2$. Then, the RKHS generated by $\ell$ can be simply  identified with $\RR$ by mapping $\ell (y, a) \doteq y \cdot a$ to $a$ and the conditional kernel mean map takes the form of $\cl{(\mu_*(X))(y)} = \EE[\,Y \cdot y \, |\, X\,] \cl{\;= y \, \EE[\,Y \, |\, X\,]}$. Therefore, by linearity, $\mu_*$ is equivalent to the regression function, i.e., $\mu_*(X)$ is the linear function determined by $\EE[\hspace{0.4mm}Y\hspace{0.3mm}|\hspace{0.3mm} X\hspace{0.4mm}]$.
        \cl{
        \paragraph{Recovering a family of real-valued regression}
        For a function $g \in \CG$, the regression of $g(Y)$ given $X$ is a real-valued regression problem for which the results of \cite{gyorfi2002distribution} can be applied. One of the main advantages of dealing with CMKEs is to produce solutions for every $g \in \CG$ simultaneously. Moreover, the information about the conditional distribution of the outputs, given an input, can be used for various types of inference.
        }
        
	\subsection{Universal Consistency}
	
	The main challenge addressed by the paper is to construct an estimate sequence $(\mu_n)_{n\in \NN}$ for $\mu_*$ with strong stochastic guarantees. Since the probability distribution of the given sample is unknown, in general our estimate will not be equal to $\mu_*$ in finite steps. In order to measure the loss of an estimate sequence, one may choose from several error criteria. In this paper we use the vector valued $L_2$ risk with respect to $Q_{\scriptscriptstyle{X}}$. The main reason leading to this choice of $L_2$ criterion is that conditional expectation can be defined as a projection in an $L_2$ space. We strengthen our assumption to ensure that the examined functions are included in the appropriate Bochner spaces. Appendix A
contains an overview on Bochner spaces and Bochner integrals, where our notations are precisely defined.

	\begin{itemize}
		\item[A2] {\em Kernel $\ell$ is measurable, $\EE\big[\,\ell(Y,Y)\,\big] < \infty$, and RKHS $\CG$ is separable}.
	\end{itemize}

	\begin{lemma}
		If $A2$ holds, then $\mu_* \in  L_2(\BX, Q_{\scriptscriptstyle{X}};\CG)$.
	\end{lemma}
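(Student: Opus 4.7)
The plan is to unfold the Bochner $L_2$ norm and bound it by the scalar expectation $\EE[\ell(Y,Y)]$, which is finite by assumption $A2$. Recall that membership in $L_2(\BX, Q_X; \CG)$ amounts to (i) strong measurability of the map $\mu_*:\BX \to \CG$ and (ii) finiteness of
\begin{equation*}
\|\mu_*\|_{L_2(\BX,Q_X;\CG)}^2 \,=\, \int_{\BX} \|\mu_*(x)\|_\CG^2 \, dQ_X(x) \,=\, \EE\big[\,\|\mu_*(X)\|_\CG^2\,\big].
\end{equation*}
Measurability of $\mu_*$ has already been noted in the text (via \citep[Lemma 1.13]{kallenberg2002foundations}), and strong measurability into the separable space $\CG$ follows from Pettis' theorem, exactly as used before for $\ell(\cdot,Y)$.

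Next I would control the integrand pointwise. By the reproducing property, $\|\ell(\cdot,Y)\|_\CG^2 = \ell(Y,Y)$, and by assumption $A2$ this is $Q$-integrable. The conditional version of Jensen's inequality in Bochner spaces (see Appendix~A, as used for \eqref{eq5}) applied to the norm, which is convex and continuous, gives
\begin{equation*}
\|\mu_*(X)\|_\CG \,=\, \big\|\,\EE[\,\ell(\cdot,Y)\,|\,X\,]\,\big\|_\CG \,\leq\, \EE\big[\,\|\ell(\cdot,Y)\|_\CG \,\big|\, X\,\big] \,=\, \EE\big[\,\sqrt{\ell(Y,Y)}\,\big|\, X\,\big] \quad (a.s.).
\end{equation*}
Squaring and applying the (scalar) conditional Jensen inequality to $t\mapsto t^2$ yields
\begin{equation*}
\|\mu_*(X)\|_\CG^2 \,\leq\, \big(\EE[\,\sqrt{\ell(Y,Y)}\,|\,X\,]\big)^2 \,\leq\, \EE\big[\,\ell(Y,Y)\,\big|\, X\,\big] \quad (a.s.).
\end{equation*}

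Finally, I would take total expectation and invoke the tower property:
\begin{equation*}
\EE\big[\,\|\mu_*(X)\|_\CG^2\,\big] \,\leq\, \EE\big[\,\EE[\,\ell(Y,Y)\,|\,X\,]\,\big] \,=\, \EE\big[\,\ell(Y,Y)\,\big] \,<\, \infty,
\end{equation*}
where finiteness is precisely $A2$. This delivers the claim. The only nonroutine step is the Bochner-valued conditional Jensen inequality for the norm functional; this is standard but needs the separability of $\CG$ (also in $A2$) so that the Bochner conditional expectation is defined in the sense of Appendix~A. Everything else is a bookkeeping exercise with the reproducing property and the tower rule.
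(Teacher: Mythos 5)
Your proposal is correct and takes essentially the same route as the paper: the paper applies the nonexpansive (conditional Jensen) property \eqref{eq:nonexpansive} directly with exponent $2$ to get $\norm{\EE[\ell(\cdot,Y)\,|\,X]}_\CG^2 \leq \EE[\norm{\ell(\cdot,Y)}_\CG^2\,|\,X]$, while you split this into the $L_1$ Bochner conditional inequality followed by scalar conditional Jensen for $t \mapsto t^2$, which is equivalent. Both arguments then use the reproducing property $\norm{\ell(\cdot,Y)}_\CG^2 = \ell(Y,Y)$ and the tower rule to reduce the claim to $\EE[\ell(Y,Y)] < \infty$, i.e., to assumption $A2$.
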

	\begin{proof}
	We can see that from $A2$ it follows immediately that $\ell(\cdot, Y) \in L_2(\Omega,\BP;\CG)$ because
	\begin{equation}
	\begin{aligned}
		\int_\Omega \norm{\ell(\cdot, Y)}_\CG^2 \cl{\dd \PP }= \EE \big[\,\ell(Y,Y)\,\big] < \infty.
	\end{aligned}
	\end{equation}
	Moreover by \eqref{eq:nonexpansive}, \cite[Prop. 2.6.31.]{hytonen2016analysis},  we have that
	\begin{equation}
	\begin{aligned}
		&\int_\BX \norm{\mu_*(x)}_\CG^2 \dd Q_{\scriptscriptstyle{X}}(x) = \EE\big[ \norm{\mu_*(X)}_\CG^2 \big] =  \EE\big[ \norm{\EE [\ell(\cdot, Y) \;|\,X]}_\CG^2 \big] \\[1mm]
		&\leq \EE\big[\EE [\norm{\ell(\cdot, Y)}_\CG^2 \;|\,X] \big] = \EE [ \ell(Y,Y)] < \infty,
	\end{aligned}
	\end{equation}	
	hence $\mu_* \in L_2(\BX, Q_{\scriptscriptstyle{X}};\CG)$.
	\end{proof}
	
	For scalar-valued regression, if $\EE [Y^2] < \infty$ holds, then we have that the conditional expectation is an orthogonal projection to \cl{$L_2(\Omega, \sigma(X), \PP|_{\sigma(X)})$, where $\sigma(X)$ is the $\sigma$-algebra generated by $X$ and $\PP|_{\sigma(X)}$ is the restriction of $\PP$ into $\sigma(X)$}. Similarly:
	\begin{lemma}
\cl{Under $A2$,} $\mu_*(X)$ is an orthogonal projection of $\ell(\cdot,Y)$ to \cl{$L_2(\Omega, \sigma(X), \PP|_{\sigma(X)};\CG)$}.
	\end{lemma}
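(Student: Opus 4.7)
The plan is to verify the two characterizing properties of an orthogonal projection: (a) that $\mu_*(X) \in L_2(\Omega, \sigma(X), P; \CG)$ and (b) that the residual $\ell(\cdot, Y) - \mu_*(X)$ is orthogonal to every element of $L_2(\Omega, \sigma(X), P; \CG)$ in the Bochner inner product $\langle F, G\rangle = \EE[\langle F, G \rangle_\CG]$.

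Step (a) is essentially already in hand. The preceding lemma shows $\mu_* \in L_2(\BX, Q_{\scriptscriptstyle X}; \CG)$, and since $\mu_*: \BX \to \CG$ is measurable, the composition $\mu_*(X)$ is $\sigma(X)$-measurable into $\CG$. A change of variables with $Q_{\scriptscriptstyle X}$ then converts the previous bound into $\EE\|\mu_*(X)\|_\CG^2 < \infty$, placing $\mu_*(X)$ in $L_2(\Omega, \sigma(X), P; \CG)$.

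For step (b), I would take an arbitrary $F \in L_2(\Omega, \sigma(X), P; \CG)$, written (by the Doob–Dynkin lemma in the Bochner setting) as $F = f(X)$ for some measurable $f: \BX \to \CG$, and show
\begin{equation*}
\EE\bigl[\langle \ell(\cdot, Y) - \mu_*(X),\, f(X)\rangle_\CG\bigr] \,=\, 0.
\end{equation*}
The key identity to establish is the pull-out rule
\begin{equation*}
\EE\bigl[\langle \ell(\cdot, Y),\, f(X)\rangle_\CG \bigm| X\bigr] \,=\, \langle \mu_*(X),\, f(X)\rangle_\CG \quad \text{(a.s.)},
\end{equation*}
after which the tower property and the definition of $\mu_*$ finish the argument: integrating both sides gives $\EE[\langle \ell(\cdot, Y), f(X)\rangle_\CG] = \EE[\langle \mu_*(X), f(X)\rangle_\CG]$, which is exactly the desired orthogonality.

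The one nontrivial point — and the step I expect to be the main obstacle — is justifying this pull-out in the Bochner setting, because equation \eqref{eq5} only handles inner products with a \emph{deterministic} $g \in \CG$. My plan is to approximate $f(X)$ in $L_2(\Omega, \sigma(X), P; \CG)$ by simple functions of the form $f_n(X) = \sum_{i=1}^{k_n} g_i^{(n)} \mathbf{1}_{A_i^{(n)}}(X)$ with $g_i^{(n)} \in \CG$ and $A_i^{(n)} \in \sigma(X)$, which is possible by separability of $\CG$. For each such simple $f_n$, linearity of conditional expectation combined with \eqref{eq5} yields the pull-out rule immediately. To pass to the limit, I would apply Cauchy–Schwarz in $L_2(\Omega, P; \CG)$ together with the fact that conditional expectation is a contraction in $L_2$, so that $\langle \mu_*(X), f_n(X)\rangle_\CG \to \langle \mu_*(X), f(X)\rangle_\CG$ in $L_1(\Omega, P)$ and likewise $\EE[\langle \ell(\cdot, Y), f_n(X)\rangle_\CG \mid X] \to \EE[\langle \ell(\cdot, Y), f(X)\rangle_\CG \mid X]$ in $L_1$. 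This gives the required identity almost surely, completing the proof.
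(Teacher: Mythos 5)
Your proof is correct, and it reaches the paper's conclusion by an equivalent but differently packaged route. The paper expands the risk $\mathcal{E}(\mu)=\EE\big[\|\ell(\cdot,Y)-\mu(X)\|_\CG^2\big]$ for an arbitrary $\mu\in L_2(\BX,\QX;\CG)$ and shows the cross term vanishes, yielding the Pythagorean identity \eqref{eq:L2error-ort}; the vanishing of the cross term rests on pulling the $\sigma(X)$-measurable factor $\mu_*(X)-\mu(X)$ out of the conditional expectation, which the paper takes off the shelf (\cite[Prop.~2.6.31]{hytonen2016analysis}, \cite[Theorem 4.2]{park2020measure}). You instead verify the two defining properties of an orthogonal projection directly---membership of $\mu_*(X)$ in $L_2(\Omega,\sigma(X),\BP;\CG)$ and orthogonality of the residual to the entire closed subspace---and you prove the needed pull-out rule from scratch by simple-function approximation: \eqref{eq5} plus linearity handles simple $f$, and Cauchy--Schwarz together with the $L_1$-contractivity of conditional expectation passes to the limit. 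The core computation is the same in both arguments; what your version buys is self-containedness, since \eqref{cont-innerprod} as stated in the paper covers only deterministic $g\in\CG$, whereas the paper's version is shorter and additionally delivers the explicit risk decomposition \eqref{eq:L2error-ort} that is then used to motivate the $L_2$ error criterion. Your Doob--Dynkin step for $\CG$-valued $F$ is legitimate because $\CG$ is separable, the same fact the paper invokes via \cite[Lemma 1.13]{kallenberg2002foundations}.
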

	\begin{proof}
	For any $\mu \in L_2(\BX, Q_{\scriptscriptstyle{X}}; \CG)$ it holds \cl{\cite[Theorem 4.2]{park2020measure} that}
	\begin{equation}\label{eq:L2error-ort}
	\begin{aligned}
		\mathcal{E} (\mu) &\doteq \EE \big[ \norm{\ell(\cdot, Y) - \mu(X)}_\CG^2 \big] = \EE \big[ \norm{\ell(\cdot, Y) - \mu_*(X) + \mu_*(X) - \mu(X)}_\CG^2 \big]\\
		&= \EE \big[ \norm{\ell(\cdot, Y) - \mu_*(X)}_\CG^2 \big] + \EE \big[ \norm{ \mu_*(X) - \mu(X) }_\CG^2 \big] \\
		& \quad + 2 \,\EE \Big[ \EE \big[ \langle \ell(\cdot, Y) - \mu_*(X),\mu_*(X) - \mu(X)\rangle_\CG\,|\, X \big] \Big]\\
		&= \EE \big[ \norm{\ell(\cdot, Y) - \mu_*(X)}_\CG^2 \big] + \EE \big[ \norm{ \mu_*(X) - \mu(X) }_\CG^2 \big] \\
		& \quad + 2 \, \EE \big[\langle \EE [\ell(\cdot, Y)\,|\,X] - \mu_*(X),\mu_*(X) - \mu(X)\rangle_\CG\big]\\
		&= \EE \big[ \norm{\ell(\cdot, Y) - \mu_*(X)}_\CG^2 \big] + \EE \big[ \norm{ \mu_*(X) - \mu(X) }_\CG^2 \big]. \\
	\end{aligned}
	\end{equation}
	Thus, $\mu_*$ minimizes $\mathcal{E}$ in $L_2(\BX,  Q_{\scriptscriptstyle{X}}; \CG)$.
	\end{proof}
	Note that $\EE \big[ \norm{\ell(\cdot, Y) - \mu_*(X)}_\CG^2 \big]$ is independent of $\mu$, therefore it is reasonable to apply $\EE \big[ \norm{ \mu_*(X) - \mu(X) }_\CG^2 \big]$ as an error criterion. This argument  motivates the following notions of  $L_2(\BX, Q_{\scriptscriptstyle{X}};\CG)$ consistency. 
    Recall that $\QX$ is the marginal distribution of $Q$ w.r.t.\ $X$.
\begin{definition}
		Let $(\mu_n)_{n \in \NN}$ be a $($random$)$ estimate sequence for $\mu_*$. The $($random$)$ $L_2$ error $($\hspace{0.2mm}or more precisely $L_2(\BX, \QX; \CG)$ error\hspace{0.2mm}$)$ of $\mu_n$ is defined by
		\begin{equation}
		\int_\BX \norm{ \mu_*(x) - \mu_n(x)}_\CG^2 \dd \QX(x).
		\end{equation}
		We say that $(\mu_n)_{n \in \NN}$ is weakly consistent for the distribution of $(X,Y)$, if as $n \to \infty$,
		\begin{equation}
			\EE \bigg[\int_\BX \norm{ \mu_*(x) - \mu_n(x)}_\CG^2 \dd \QX(x)\bigg]\! \longrightarrow 0.
		\end{equation}
		We say that $(\mu_n)_{n \in \NN}$ is strongly consistent for the distribution of $(X,Y)$, if as $n \to \infty$,
		\begin{equation}
			\int_\BX \norm{ \mu_*(x) - \mu_n(x)}_\CG^2 \dd \QX(x) \xrightarrow{\,a.s.\,} 0. 
		\end{equation}
	\end{definition}

        One can see that strong consistency does not necessarily imply weak consistency, indeed neither of these two consistency notions imply the other. Nevertheless, knowing that an estimator is weakly consistent will be of great help to also prove its strong consistency.

	Note that it can happen that an estimate sequence is consistent for some distributions of $(X,Y)$ and \cl{inconsistent} for other distributions of $(X,Y)$. We say that $(\mu_n)_{n \in \NN}$ is {\em universally weakly (strongly) consistent} if it is weakly (strongly) consistent for all possible distributions of $(X,Y)$. It is a somewhat surprising fact that there exist such estimates for $\RR^d \to \RR$ type regression functions, see \citep{stone1977consistent}, for example, local averaging adaptive methods such as k-Nearest Neighbors (kNN), several kernel-based methods  and various partitioning rules were proved to be universally consistent, see the book of \cite{gyorfi2002distribution}. 
 
    Recently \cite{hanneke2020universal} proved that there exists a universally strongly consistent classifier (OptiNet) for any separable metric space and \cite{gyorfi2021universal} constructed a universally consistent classification rule called Proto-NN for multiclass classification in metric spaces whenever the 
space admits a universally consistent estimator. For Banach space valued regression in semi-metric spaces \cite{dabo2009kernel} constructed estimates with asymptotic bounds under some differentiation condition on the conditional expectation function and \cite{forzani2012consistent} proved consistency under the Besicovitch condition. Nevertheless, it is still an open question whether there exist universally consistent estimates under more general conditions (e.g., for Hilbert space valued regression).

	\subsection{Generalization of Stone's Theorem}
	
	In this section, we present a generalization of Stone's theorem \citep{stone1977consistent} for conditional kernel mean map estimates for locally compact Polish\footnote{\cl{A Polish space is} a separable, completely metrizable topological space; as the specific choice of the metric is unimportant for the problems studied in this paper, we fix an arbitrary metric $d$ which \cl{is compatible with} the topology of our space, and treat a Polish space as a (complete and separable) metric space.} input spaces.
This theorem will serve as our main theoretical tool to prove consistency results for our new recursive kernel algorithm. We consider local averaging methods of the general form
\begin{equation}\label{genform}
		\mu_n(x) \,\doteq\, \sum_{i=1}^n\, W_{n,i}(x) \,\ell(\cdot, Y_i),
\end{equation}
	for the conditional kernel mean map, $\EE[\, \ell(\cdot, Y)\,|\,X=x\,]$, in a locally compact Polish space $(\BX,d)$. A Polish space gives a convenient setup for probability theory and locally compactness is required because of \cl{Lemma \ref{lemma:Ury}.} We use the fact that the set of compactly supported continuous functions is dense in $L_2(\BX, \QX)$ for every probability measure $\QX$ when $\BX$ is locally compact, see Theorem \ref{denseness}. We deal with data-dependent probability weight functions $W_{n,i}(x)$ for $n \in \NN$ and $i\in [n]$, i.e., we assume that for all $n \in \NN$, weight $W_{n,i}(x)$ is nonnegative for all $i \in [n]$ and $\sum_{i=1}^n W_{n,i}(x) = 1$ for all $x \in \BX$. These are somewhat stronger conditions compare to the original theorem of Stone, however, in practice they are usually satisfied and simplify the proof. For several well-known methods (e.g, kNN, partitioning rules, Nadaraya--Watson estimates) these assumptions hold immediately.
	
		\begin{theorem}\label{thm:stone}
		Let $(\BX, d)$ be a locally compact Polish space and let $\mathcal{X}$ be the Borel $\sigma$-algebra on $\BX$. Assume $A1$ and $A2$.
		Let $\mu_*(x) \,\doteq\, \EE \big[\, \ell(\cdot,Y)\,|\,X=x\,\big]$ and 
        \vspace{1mm}
		\begin{equation}
			\mu_n(x) \,\doteq\, \sum_{i=1}^{n} W_{n,i} (x)\, \ell( \cdot, Y_i).
            \vspace{-1mm}
		\end{equation}
		Assume that:
		\begin{enumerate}[label=(\roman*)]
			\item
			For all $n \in \NN$, $i \in [n]$ and $x \in \BX$, we have
			\begin{equation}\label{eq:cond1}
				\begin{aligned}
					&0 \,\leq \,W_{n,i}(x)\, \leq\, 1, \qquad \text{and} \qquad \sum_{i=1}^n W_{n,i}(x)\, =\, 1. 
				\end{aligned}
			\end{equation}
			\item
			There exists  $c > 0$ such that for all \cl{$f: \BX \to \RR^{+}$ with\, $\EE [f(X)] < \infty$, where $\RR^{+}$ denotes the nonnegative real numbers}, we have
			\begin{equation}\label{eq:cond2}
				\begin{aligned}
					&\EE \bigg[ \sum_{i=1}^n W_{n,i}(X) f(X_i) \bigg] \leq\, c \, \EE [f(X)].
				\end{aligned}
			\end{equation}
			\item
			For all $\delta > 0$, we have
			\begin{equation}\label{eq:cond3}
				\begin{aligned}
					&\lim_{n \to \infty} \EE \bigg[ \sum_{i=1}^n W_{n,i}(X)\, \BI \big( d(X_i, X) > \delta \big)\bigg]=\, 0.
				\end{aligned}
			\end{equation}
			\item
                The following convergence holds for the weight functions:
			\begin{equation}\label{eq:cond4}
				\begin{aligned}
					&\lim_{n \to \infty} \EE \bigg[ \sum_{i=1}^n W_{n,i}^2(X)\bigg] =\, 0.
				\end{aligned}
			\end{equation}			
		\end{enumerate}
		Then, the estimate sequence $(\mu_n)$ is weakly consistent, that is 
		\begin{equation}
			\begin{aligned}
				&\lim_{n \to \infty} \EE\! \int_\BX \norm{\mu_n(x) - \mu_*(x) }_\CG^2 \dd \QX (x) \,=\, 0.
			\end{aligned}
		\end{equation}	
	\end{theorem}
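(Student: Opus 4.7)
The plan is to adapt the classical four-term decomposition used in the proof of Stone's theorem (cf.\ \citep[Theorem 4.1]{gyorfi2002distribution}) to the $\CG$-valued setting. The enabling ingredient is that $C_c(\BX;\CG)$, the continuous $\CG$-valued functions of compact support, is dense in the Bochner space $L_2(\BX,\QX;\CG)$; this holds because $\BX$ is locally compact Polish and is precisely the referenced Theorem \ref{denseness}. Since the preceding lemma already places $\mu_* \in L_2(\BX,\QX;\CG)$, for any $\epsilon > 0$ I would fix $\tilde\mu \in C_c(\BX;\CG)$ with $\int \norm{\mu_*(x) - \tilde\mu(x)}_\CG^2\, d\QX(x) < \epsilon$ and let $\epsilon \to 0$ at the end.

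\textbf{Decomposition and the three easy terms.} I would write
\begin{equation*}
\mu_n(x) - \mu_*(x) \,=\, I_n(x) + II_n(x) + III_n(x) + IV_n(x),
\end{equation*}
with $I_n(x) = \sum_i W_{n,i}(x)(\ell(\cdot, Y_i) - \mu_*(X_i))$ (noise), $II_n(x) = \sum_i W_{n,i}(x)(\mu_*(X_i) - \tilde\mu(X_i))$ (approximation on data points), $III_n(x) = \sum_i W_{n,i}(x)\tilde\mu(X_i) - \tilde\mu(x)$ (smoothing of $\tilde\mu$, using $\sum_i W_{n,i}(x) = 1$), and $IV_n(x) = \tilde\mu(x) - \mu_*(x)$, and then bound $\norm{\mu_n(x) - \mu_*(x)}_\CG^2$ by four times the sum of their squared norms. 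The term $IV_n$ contributes at most $4\epsilon$ by construction. For $II_n$, Jensen's inequality applied to $\norm{\cdot}_\CG^2$ with the convex combination $(W_{n,i}(x))_i$ gives $\norm{II_n(x)}_\CG^2 \leq \sum_i W_{n,i}(x)\norm{\mu_*(X_i) - \tilde\mu(X_i)}_\CG^2$, and then condition $(ii)$ with $f(u) = \norm{\mu_*(u) - \tilde\mu(u)}_\CG^2$ bounds its expected integrated norm by $c\epsilon$. For the smoothing term $III_n$, uniform continuity of $\tilde\mu$ on its compact support yields, for any $\eta > 0$, a $\delta > 0$ such that splitting by whether $d(X_i,x) \leq \delta$ produces $\norm{III_n(x)}_\CG \leq \eta + 2\norm{\tilde\mu}_\infty \sum_i W_{n,i}(x)\BI(d(X_i,x) > \delta)$; after using $(a+b)^2 \leq 2a^2 + 2b^2$ and the bound $(\sum_i W_{n,i}(x)\BI(d(X_i,x) > \delta))^2 \leq \sum_i W_{n,i}(x)\BI(d(X_i,x) > \delta)$ (since the summands lie in $[0,1]$ and sum to at most $1$), condition $(iii)$ forces $\limsup_n \EE\!\int\!\norm{III_n}_\CG^2\, d\QX \leq 2\eta^2$, which vanishes as $\eta \to 0$.

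\textbf{Noise term and main obstacle.} The principal difficulty is $I_n$, since $\norm{\ell(\cdot,Y) - \mu_*(X)}_\CG^2$ is only integrable, not bounded. Conditioning on $\CF_X = \sigma(X_1,\ldots,X_n)$ makes the weights deterministic while the $Y_i$ stay conditionally independent with $\EE[\ell(\cdot,Y_i)\mid \CF_X] = \mu_*(X_i)$, so the identity $\EE\langle A,B\rangle_\CG = \langle \EE A, \EE B\rangle_\CG$ for independent Bochner-integrable $\CG$-valued elements collapses the off-diagonal cross terms, leaving
\begin{equation*}
\EE\!\int\!\norm{I_n(x)}_\CG^2 d\QX(x) \,=\, \EE\sum_i W_{n,i}^2(X)\,\sigma^2(X_i),
\end{equation*}
where $X$ is an independent copy with law $\QX$ and $\sigma^2(u) \leq \rho(u) := \EE[\ell(Y,Y)\mid X=u]$, with $\EE\rho(X) = \EE\ell(Y,Y) < \infty$ by $A2$. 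To defeat the unboundedness of $\rho$ I would truncate: for $L > 0$, split $\rho = (\rho \wedge L) + (\rho - \rho\wedge L)$ and use $W_{n,i}^2 \leq W_{n,i}$ together with condition $(ii)$ applied to the residual to obtain
\begin{equation*}
\EE\!\int\!\norm{I_n}_\CG^2 d\QX \,\leq\, L\cdot \EE\!\sum_i W_{n,i}^2(X) \,+\, c\cdot \EE\!\left[\rho(X) - \rho(X)\wedge L\right].
\end{equation*}
Condition $(iv)$ kills the first summand as $n\to\infty$ for any fixed $L$, and monotone convergence kills the second as $L\to\infty$. Combining the four bounds yields $\limsup_n \EE\!\int\!\norm{\mu_n - \mu_*}_\CG^2 d\QX \leq 4(1+c)\epsilon$, and since $\epsilon > 0$ is arbitrary, the weak $L_2(\BX,\QX;\CG)$ consistency follows.
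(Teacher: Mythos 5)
Your proof is correct and follows essentially the same route as the paper's: approximation of $\mu_*$ by a compactly supported continuous function via Theorem \ref{denseness}, Jensen's inequality plus conditions $(ii)$--$(iii)$ for the bias terms, conditioning to annihilate the off-diagonal cross terms in the noise term, and condition $(iv)$ together with a truncation of the conditional second moment to handle its unboundedness. The only cosmetic differences are that you use the classical four-term Stone decomposition (applying the density argument up front) where the paper first splits into bias and noise and then subdivides the bias, and that you truncate $\rho=\EE[\ell(Y,Y)\mid X]$ at level $L$ where the paper approximates $\sigma^2$ in $L_1$ by a bounded function; these are equivalent in substance.
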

	Condition $(i)$ ensures that \{$W_{n,i}\}_{i=1}^n$ are probability weights for all $n\in \NN$. Assumption $(ii)$ intuitively says that the $L_1$ norm of the local averaging estimate is at most some positive constant times the $L_1$ norm of the estimated function whenever there is no noise on the observations, i.e., when $Y_i = f(X_i)$. Condition $(iii)$ guarantees that asymptotically only those inputs affect the estimate in a point $x$ which are close to $x$. Condition $(iv)$  ensures that the individual weights vanish as the sample size goes to infinity.
    \medskip
 
	\begin{proof}
		The proof is based on the proof of \cite{stone1977consistent} presented in \citep{gyorfi2002distribution}. By first using $(a + b)^2 \leq 2 a^2 + 2 b^2$, we have
        \vspace{-1mm}
			\begin{align}
				&\EE \Big[ \norm{ \mu_n(X) - \mu_*(X)}_\CG^2 \Big]\\[1mm]
				&\leq 2\,\EE \Bigg[ \norm{ \sum_{i=1}^n W_{n,i}(X)\big( \ell(\cdot, Y_i) - \mu_*(X_i)\big)}_\CG^2 \Bigg]
				\!+  2\,\EE \Bigg[ \norm{ \sum_{i=1}^n W_{n,i}(X)\big( \mu_*(X_i) - \mu_*(X)\big)}_\CG^2 \Bigg].\nonumber
			\end{align}
        Then, the application of Lemma \ref{lemma:jn} and Lemma \ref{lemma:in} completes the proof.
        \end{proof}
        \begin{lemma}\label{lemma:jn}
        Under the conditions of Theorem \ref{thm:stone}, we have that, as $n\to \infty$,
        \begin{equation}
            \EE \Bigg[\, \norm{ \sum_{i=1}^n W_{n,i}(X)\big( \mu_*(X_i) - \mu_*(X)\big)}_\CG^2 \Bigg] \to \,0.
        \end{equation}
        \end{lemma}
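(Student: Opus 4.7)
The plan is to mimic the structure of Stone's classical argument, but exploit the Hilbert-space structure of $\CG$ together with the density of compactly supported continuous functions in $L_2(\BX,\QX;\CG)$ afforded by Theorem \ref{denseness}. First, since the weights are nonnegative and sum to $1$ by $(i)$, applying Jensen's inequality with the convex function $\|\cdot\|_\CG^2$ yields
\begin{equation*}
\norm{\sum_{i=1}^n W_{n,i}(X)\bigl(\mu_*(X_i) - \mu_*(X)\bigr)}_\CG^2 \,\leq\, \sum_{i=1}^n W_{n,i}(X)\, \norm{\mu_*(X_i) - \mu_*(X)}_\CG^2.
\end{equation*}
Next, for any $\varepsilon>0$, I would fix a continuous, compactly supported $\tilde\mu\colon\BX\to\CG$ with $\EE\norm{\mu_*(X)-\tilde\mu(X)}_\CG^2<\varepsilon^2$; such a $\tilde\mu$ is bounded by some $M<\infty$ and, thanks to local compactness of $\BX$, admits a modulus of continuity, i.e., for every $\eta>0$ there is $\delta>0$ with $d(x,y)<\delta\Rightarrow\|\tilde\mu(x)-\tilde\mu(y)\|_\CG<\eta$.

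After inserting $\pm\tilde\mu(X_i)\mp\tilde\mu(X)$ into the norm on the right-hand side above and using $(a+b+c)^2\le 3(a^2+b^2+c^2)$, the sum splits into three pieces. Taking expectations, the first piece $3\,\EE\sum_i W_{n,i}(X)\|\mu_*(X_i)-\tilde\mu(X_i)\|_\CG^2$ is bounded by $3c\,\varepsilon^2$ by condition $(ii)$ applied to $f(x)=\|\mu_*(x)-\tilde\mu(x)\|_\CG^2$, and the third piece equals $3\,\EE\|\tilde\mu(X)-\mu_*(X)\|_\CG^2<3\varepsilon^2$ since the weights sum to $1$. For the middle piece $3\,\EE\sum_i W_{n,i}(X)\|\tilde\mu(X_i)-\tilde\mu(X)\|_\CG^2$, I would further split the inner sum according to whether $d(X_i,X)\le\delta$ or $d(X_i,X)>\delta$: uniform continuity of $\tilde\mu$ bounds the close part by $3\eta^2$, while the far part is at most $12 M^2\,\EE\sum_i W_{n,i}(X)\,\BI(d(X_i,X)>\delta)$, which tends to $0$ by condition $(iii)$. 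Letting $n\to\infty$, then $\eta\to 0$, and finally $\varepsilon\to 0$ closes the argument.

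The main obstacle I expect is justifying the global uniform continuity of a compactly supported continuous $\CG$-valued function, which in a general metric space is not automatic. The workaround exploits local compactness: the compact support $K=\supp\tilde\mu$ admits a compact neighborhood $K'$ on which $\tilde\mu$ is uniformly continuous, while sufficiently far from $K$ the function vanishes identically, and these two regimes patch together to yield a single modulus of continuity valid on all of $\BX$. This is precisely the point where the hypothesis that $\BX$ is locally compact Polish (rather than merely Polish) genuinely enters the proof, and mirrors where local compactness was invoked in justifying the use of Theorem \ref{denseness}.
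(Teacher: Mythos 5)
Your proof is correct and follows essentially the same route as the paper: approximate $\mu_*$ by a compactly supported continuous $\tilde\mu$ via Theorem \ref{denseness}, use the probability weights with Jensen's inequality, split three ways with $(a+b+c)^2\le 3(a^2+b^2+c^2)$, and handle the three pieces by condition $(ii)$, by uniform continuity plus condition $(iii)$, and by the fact that the weights sum to $1$. One small remark on your last paragraph: a continuous compactly supported function into a metric space is automatically uniformly continuous on \emph{any} metric space (a sequential compactness argument on the support, which is how the paper invokes Heine--Cantor), so your compact-neighborhood patching, while valid, is not where local compactness is genuinely needed---it enters only through the density result of Theorem \ref{denseness}.
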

        \begin{proof}
        Let us use the following notation
        \begin{equation}
            J_n\, \doteq\; \EE \Bigg[\, \norm{ \sum_{i=1}^n W_{n,i}(X)\big( \mu_*(X_i) - \mu_*(X)\big)}_\CG^2 \Bigg].
        \end{equation}
		By Theorem \ref{denseness} there exists a function $\tilde{\mu}_*$ which is bounded and continuous and
		\begin{equation}
			\begin{aligned}
				&\EE \Big[ \norm{\mu_*(X) - \tilde{\mu}_*(X)}_\CG^2 \Big] < \varepsilon.
			\end{aligned}
		\end{equation}
        Recall that by the theorem of Heine and Cantor, see Theorem \ref{thm:heine} in Appendix D, a continuous function with compact support is always uniformly continuous.\
		\cl{Using the convexity of
$f(x) = \norm{x}_\CG$} and  also the elementary fact that $(a + b+ c) ^2 \leq 3 a^2 + 3b^2 +3c^2$ yield
		\begin{equation}
			\begin{aligned}
				J_n &= \EE \Bigg[ \norm{ \sum_{i=1}^n W_{n,i}(X)\big( \mu_*(X_i) - \mu_*(X)\big)}_\CG^2 \Bigg] \\
				&\leq \EE \Bigg[ \bigg( \sum_{i=1}^n W_{n,i}(X)\norm{ \mu_*(X_i) - \mu_*(X) }_\CG \bigg)^2 \Bigg]\\
				&\leq \EE \Bigg[ \sum_{i=1}^n W_{n,i}(X)\norm{ \mu_*(X_i) - \mu_*(X) }_\CG^2 \Bigg]\\
				&\leq 3  \EE \Bigg[ \sum_{i=1}^n W_{n,i}(X)\norm{ \mu_*(X_i) - \tilde{\mu}_*(X_i) }_\CG^2 \Bigg]\\
				&\quad + 3 \EE \Bigg[ \sum_{i=1}^n W_{n,i}(X)\norm{ \tilde{\mu}_*(X_i) - \tilde{\mu}_*(X) }_\CG^2 \Bigg]\\
				&\quad + 3 \EE \Bigg[ \sum_{i=1}^n W_{n,i}(X)\norm{ \tilde{\mu}_*(X) - \mu_*(X) }_\CG^2 \Bigg]	\\[1mm]
				&\doteq 3 \, J_{n}^{(1)} + 3 \, J_{n}^{(2)} + 3 \,J_{n}^{(3)}. 		
			\end{aligned}
		\end{equation}
		We bound these three terms separately. For any $\delta > 0$ one can see that
		\begin{equation}
			\begin{aligned}
				J_{n}^{(2)} &= \EE \Bigg[ \sum_{i=1}^n W_{n,i}(X)\norm{ \tilde{\mu}_*(X_i) - \tilde{\mu}_*(X) }_\CG^2 \BI\big( d(X_i,X) > \delta \big) \Bigg] \\[1mm]
				&\quad + \EE \Bigg[ \sum_{i=1}^n W_{n,i}(X)\norm{ \tilde{\mu}_*(X_i) - \tilde{\mu}_*(X) }_\CG^2 \BI\big( d(X_i,X) \leq  \delta \big) \Bigg]\\
				& \leq  \EE \Bigg[ \sum_{i=1}^n W_{n,i}(X)2\Big( \norm{ \tilde{\mu}_*(X_i)}_\CG^2 + \norm{ \tilde{\mu}_*(X) }_\CG^2 \Big)\BI\big( d(X_i,X) >  \delta \big) \Bigg]\\
				&\quad + \EE \Bigg[ \sum_{i=1}^n W_{n,i}(X)\norm{ \tilde{\mu}_*(X_i) - \tilde{\mu}_*(X) }_\CG^2 \BI\big( d(X_i,X) \leq  \delta \big) \Bigg]\\       
				& \leq 4 \,\sup_{u \in \BX}\norm{\tilde{\mu}(u)}_\CG^2 \,\EE \bigg[\sum_{i=1}^n W_{n,i}(X) \cdot \BI\big( d(X_i,X) > \delta \big) \bigg]\\[1mm]
				&\quad  + \sup_{u,v \in \BX \,:\,d(u,v) \leq \delta } \norm{\tilde{\mu}_*(u) - \tilde{\mu}_*(v) }_\CG^2.\\[1mm]
			\end{aligned}
		\end{equation}
        holds. We know that function $\tilde{\mu}_*$ is uniformly continuous, hence for any $\varepsilon > 0$ one can choose $\delta > 0$ such that
        \vspace{-2mm}
		\begin{equation}
			\sup_{u,v \in \BX \,:\,d(u,v) \leq \delta } \norm{\tilde{\mu}_*(u) - \tilde{\mu}_*(v) }_\CG^2 \;<\, \frac{\varepsilon}{2},
		\end{equation}
        hence the second term is less than $\nicefrac{\varepsilon}{2}$.
		In addition, by condition $(iii)$, for a given $\delta$ if $n$ is large enough, then the first term is also less than $\nicefrac{\varepsilon}{2}$. In conclusion $J_{n}^{(2)}$ converges to $0$\cl{.} 
		
		For the third term we have that
		\begin{equation}
			\begin{aligned}
				|J_{n}^{(3)}|& = \EE \bigg[ \sum_{i=1}^n W_{n,i}(X)\norm{ \tilde{\mu}_*(X) - \mu_*(X) }_\CG^2 \bigg]\\[1mm]
				& = \EE \big[ \norm{\tilde{\mu}_*(X) - \mu_*(X) }_\CG^2\big] < \varepsilon.
			\end{aligned}
		\end{equation}
		For $J_{n}^{(1)}$ the application of condition $(ii)$ to function $f(x) \doteq\norm{ \mu_*(x) - \tilde{\mu}_*(x) }_\CG^2$ yields
        \vspace{-1mm}
		\begin{equation}
			\begin{aligned}
				J_{n}^{(1)} &= \EE \Bigg[ \sum_{i=1}^n W_{n,i}(X)\norm{ \mu_*(X_i) - \tilde{\mu}_*(X_i) }_\CG^2 \Bigg]\\[1mm]
				&\leq c \,\EE \Big[ \norm{ \mu_*(X) - \tilde{\mu}_*(X)}_\CG^2 \Big] =\, c\, \varepsilon.
			\end{aligned}
		\end{equation}
		Therefore, we can conclude that $J_n$ tends to $0$, as $n\to \infty$.
        \end{proof}

        \begin{lemma}\label{lemma:in}
            Under the conditions of Theorem \ref{thm:stone}, we have that
        \begin{equation}
            \EE \Bigg[\, \norm{ \sum_{i=1}^n W_{n,i}(X)\big( \ell(\cdot, Y_i) - \mu_*(X_i)\big)}_\CG^2\Bigg] \to 0.
        \end{equation}
        \end{lemma}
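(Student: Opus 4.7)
The plan is to expand the squared norm as a double sum, condition on the $\sigma$-algebra $\mathcal{F} \doteq \sigma(X,X_1,\dots,X_n)$ to kill the off-diagonal terms, and then bound the diagonal using conditions $(ii)$ and $(iv)$ together with a truncation argument on the conditional second-moment function $x \mapsto \EE[\ell(Y,Y)\,|\,X=x]$.

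For the first step, write $Z_i \doteq \ell(\cdot, Y_i) - \mu_*(X_i)$ and expand
\begin{equation*}
I_n \,\doteq\, \EE \Bigg[ \norm{ \sum_{i=1}^n W_{n,i}(X)\, Z_i}_\CG^2 \Bigg]
= \sum_{i,j=1}^n \EE\Big[ W_{n,i}(X)\, W_{n,j}(X)\, \langle Z_i, Z_j\rangle_\CG \Big].
\end{equation*}
The key observation is that, since $(X_i, Y_i)_{i=1}^n$ are i.i.d., $Y_i$ is independent of $\{(X_k, Y_k) : k \neq i\}$ given $X_i$, and the weights $W_{n,i}(X)$ are $\mathcal{F}$-measurable. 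Therefore, using the Banach-space conditional expectation identity \eqref{cont-innerprod} and the defining property $\EE[\ell(\cdot,Y_i)\mid X_i] = \mu_*(X_i)$, we get $\EE[Z_i \mid \mathcal{F}] = 0$ for every $i$, and a standard computation gives $\EE[\langle Z_i, Z_j\rangle_\CG \mid \mathcal{F}] = 0$ whenever $i \neq j$. Thus, after taking conditional expectation inside the sum, only the diagonal survives:
\begin{equation*}
I_n \,=\, \sum_{i=1}^n \EE\Big[ W_{n,i}^2(X)\, \EE[\norm{Z_i}_\CG^2 \mid X_i]\Big]
\,\leq\, \sum_{i=1}^n \EE\Big[ W_{n,i}^2(X)\, h(X_i)\Big],
\end{equation*}
where $h(x) \doteq \EE[\ell(Y,Y)\mid X = x]$, since $\EE[\norm{Z_i}_\CG^2\mid X_i] = h(X_i) - \norm{\mu_*(X_i)}_\CG^2 \leq h(X_i)$.

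If $h$ were uniformly bounded by some constant $L$, we would be done immediately: $I_n \leq L\cdot \EE\big[\sum_i W_{n,i}^2(X)\big] \to 0$ by condition $(iv)$. In general, under $A2$ the function $h$ is only in $L_1(\BX,\QX)$, so we use a truncation. Fix $L > 0$ and decompose $h(x) = h_L(x) + r_L(x)$ with $h_L(x) \doteq \min(h(x), L)$ and $r_L(x) \doteq (h(x)-L)^+$. Then
\begin{equation*}
\sum_{i=1}^n \EE\big[ W_{n,i}^2(X) h(X_i)\big]
\,\leq\, L \cdot \EE\Big[\sum_{i=1}^n W_{n,i}^2(X)\Big]
+ \sum_{i=1}^n \EE\big[ W_{n,i}(X) r_L(X_i)\big],
\end{equation*}
using $W_{n,i}^2 \leq W_{n,i}$ from condition $(i)$ in the second term. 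By condition $(ii)$ applied to $f = r_L$, the second term is at most $c\cdot\EE[r_L(X)] = c\cdot\EE[(h(X)-L)^+]$, which tends to $0$ as $L\to\infty$ by the dominated convergence theorem since $h(X)\in L_1$. Given $\varepsilon>0$, choose $L$ so large that the truncation remainder is below $\varepsilon/2$, and then $n$ large enough that $L\cdot\EE[\sum_i W_{n,i}^2(X)] < \varepsilon/2$ by condition $(iv)$.

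The main obstacle is the unbounded-noise case, i.e., handling the fact that $h$ is only integrable, not bounded: this is what forces the truncation and the simultaneous use of conditions $(ii)$ and $(iv)$, rather than $(iv)$ alone. A secondary subtlety is justifying the vanishing of off-diagonal conditional cross-terms in the Bochner setting, which relies on the fact that inner product with a fixed vector commutes with the Bochner conditional expectation (equation \eqref{cont-innerprod}) and on the conditional independence structure induced by the i.i.d.\ sampling.
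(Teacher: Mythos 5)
Your proof is correct and takes essentially the same route as the paper: the off-diagonal cross terms are annihilated by conditioning (the paper conditions on $X, X_1,\dots,X_n, Y_i$ and uses \eqref{cont-innerprod}, exactly the mechanism you invoke), the diagonal reduces to $\sum_{i}\EE\big[W_{n,i}^2(X)\,\sigma^2(X_i)\big]$ with $\sigma^2(x)=\EE\big[\norm{\ell(\cdot,Y)-\mu_*(X)}_\CG^2\,|\,X=x\big]$, and the unbounded case is handled by a truncation using $W_{n,i}^2\le W_{n,i}$ together with conditions $(ii)$ and $(iv)$. The only cosmetic difference is that you truncate $h(x)=\EE[\ell(Y,Y)\,|\,X=x]$ explicitly at level $L$ (noting $\sigma^2\le h$), whereas the paper approximates $\sigma^2$ in $L_1$ by a bounded function via a density theorem; both give the same $\varepsilon$-argument.
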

        \begin{proof}
        For simplicity let
        \begin{equation}
			\begin{aligned}
				I_n &\doteq \EE \Bigg[ \norm{ \sum_{i=1}^n W_{n,i}(X)\big( \ell(\cdot, Y_i) - \mu_*(X_i)\big)}_\CG^2\Bigg]\\[1mm]
                &=\EE \bigg[ \sum_{i=1}^{n} \sum_{j=1}^n W_{n,i} (X) W_{n,j}(X) \, \langle \ell(\cdot, Y_i)- \mu_*(X_i), \ell(\cdot,Y_j)- \mu_*(X_j) \rangle_\CG \bigg].
			\end{aligned}
		\end{equation}
		For $i\neq j$ we have that
		\begin{equation}\label{eq:ortog}
			\begin{aligned}
				&\EE \bigg[W_{n,i} (X) W_{n,j}(X) \langle \ell(\cdot, Y_i) - \mu_*(X_i), \ell(\cdot,Y_j)- \mu_*(X_j) \rangle_\CG \bigg]\\
				& = \EE \bigg[\EE \Big[W_{n,i} (X) W_{n,j}(X) \,\langle \ell(\cdot, Y_i)- \mu_*(X_i), \ell(\cdot,Y_j)- \mu_*(X_j) \rangle_\CG \,|\, X, X_1, \dots X_n, Y_i\Big] \bigg]\\
				& = \EE \bigg[W_{n,i} (X) W_{n,j}(X)\, \big\langle \ell(\cdot, Y_i)- \mu_*(X_i), \EE \Big[\ell(\cdot,Y_j)\,|\, X, X_1, \dots X_n, Y_i\Big] - \mu_*(X_j)\big\rangle_\CG \bigg]\\
				& = \EE \bigg[W_{n,i} (X) W_{n,j}(X) \,  \big\langle \ell(\cdot, Y_i)- \mu_*(X_i), \EE \big[\ell(\cdot,Y_j)\,|\,X_j\big] - \mu_*(X_j)\big\rangle_\CG \bigg]= 0.
			\end{aligned}
		\end{equation}
		Furthermore, by $\EE [\ell(Y,Y)] < \infty$ we have
		\begin{equation}
			\begin{aligned}
				&\EE \big[ \sigma^2(X)\big] < \infty
			\end{aligned}
		\end{equation}
		for $\sigma^2(X) \doteq \EE \big[ \norm{\ell(\cdot, Y) - \mu_*(X) }_\CG^2 \,|\, X\big]$. By \eqref{eq:ortog} we have
		\begin{equation}
			\begin{aligned}
				I_n &= \EE \bigg[\sum_{i=1}^n W_{n,i}^2 (X) \norm{\ell(\cdot,Y_i)- \mu_*(X_i)}_\CG^2 \bigg] \\
				&= \EE \bigg[\EE\Big[\sum_{i=1}^n W_{n,i}^2 (X) \norm{\ell(\cdot,Y_i)- \mu_*(X_i)}_\CG^2\,\Big| X,X_1, \dots, X_n\Big] \bigg]\\
				&= \EE \bigg[\sum_{i=1}^n W_{n,i}^2 (X) \EE\Big[ \norm{\ell(\cdot,Y_i)- \mu_*(X_i)}_\CG^2\,\Big| X,X_1, \dots, X_n\Big] \bigg]\\
				&= \EE \bigg[\sum_{i=1}^n W_{n,i}^2 (X) \EE\Big[ \norm{\ell(\cdot,Y_i)- \mu_*(X_i)}_\CG^2\,\Big| X_i\Big] \bigg]\\
				&= \EE \Big[\sum_{i=1}^n W_{n,i}^2 (X) \sigma^2(X_i)\Big].
			\end{aligned}
		\end{equation}	
		If $\sigma^2$ is bounded, then $I_n$ tends to zero almost surely by condition $(iv)$. Otherwise let $\tilde{\sigma}^2 \leq L$ be such that 
        \vspace{-2mm}
		\begin{equation}
			\begin{aligned}
				\EE \big[ | \tilde{\sigma}^2 (X) - \sigma^2(X) |\big] < \varepsilon,
			\end{aligned}
		\end{equation}
		see \cite[Theorem A.1]{gyorfi2002distribution}. Then
		\begin{equation}
			\begin{aligned}
				I_n &\leq \EE \Big[ \sum_{i=1}^n W_{n,i}^2(X) \tilde{\sigma}^2(X_i)\big] + \EE \Big[ \sum_{i=1}^n W_{n,i}^2(X)\, \big|\, \sigma^2(X_i) - \tilde{\sigma}^2(X_i)\big|\Big] \\
				&\leq L \, \EE \Big[\, \cl{\sum_{i=1}^n}\, W_{n,i}^2(X)\Big] + \EE \Big[ \sum_{i=1}^n W_{n,i}(X)\, \big|\, \sigma^2(X_i) - \tilde{\sigma}^2(X_i)\big|\Big],
			\end{aligned}
		\end{equation}
		where the first term tends to zero by condition $(iv)$ and for the second term we have
		\begin{equation}
			\begin{aligned}
				&\EE \Big[ \sum_{i=1}^n W_{n,i} (X)\big|\, \sigma^2(X_i) - \tilde{\sigma}^2(X_i)\big|\Big] \leq c \, \EE\,\big[\,\big|\,\sigma^2(X) - \tilde{\sigma}^2(X)\,\big|\,\big] \leq c \, \varepsilon
			\end{aligned}
		\end{equation}
	by condition $(ii)$. In conclusion, $I_n$ converges to $0$.
	\end{proof}

	\subsection{Recursive Estimation of Conditional Kernel Mean Embeddings}
	
	Let $(\BX, d)$ be a locally compact Polish space, as before. In this section, we apply local averaging {\em smoothers} or ``kernels'' on the input space \citep{gyorfi2002distribution}. These ``kernels'' are not necessarily positive definite, therefore, to avoid misunderstanding, we call them smoothers. Nevertheless, many positive definite functions are included in the definition.
	\begin{definition}[smoother]
		A function $k: \BX \times \BX \to \RR$ is called a smoother, if it is measurable, non-negative, symmetric, and bounded.
	\end{definition} 
 
 	Now, we introduce a recursive local averaging estimator.\ Let $(k_n)_{n\in \NN}$ be a smoother sequence and $(a_n)_{n\in\NN}$ be a stepsize (learning rate or gain) sequence of positive numbers.
  
    Let us consider the following recursive algorithm:
    \vspace{-0.5mm}
	\begin{equation}\label{eq:recursion}
		\begin{aligned}
			\mu_1(x) &\,\doteq\, \ell(\cdot, Y_1), \quad \text{and}\\[1.5mm]
			\mu_{n+1}(x) &\,\doteq\hspace{0.2mm} \big(1 - a_{n+1} k_{n+1}(x, X_{n+1})\big) \mu_n(x) \,+\, a_{n+1} k_{n+1}(x,X_{n+1}) \ell(\cdot,Y_{n+1}),
		\end{aligned}
	\end{equation}
	for $n \geq 1$. By induction, one can see that $\mu_n$ is of the form defined in \eqref{genform}. It is clear that one only needs to store $\mu_n$, because in each iteration, when a new observation pair is available, one can immediately update $\mu_n$ by the recursion defined in \eqref{eq:recursion}. 

    \cl{In practice, we often need to evaluate $\mu_*(x)$ on a function $g \in \CG$, i.e., the conditional expectation $\EE [\hspace{0.8mm}g(Y)\,|\, X= x\hspace{0.4mm}]$ is sought. Then, our recursion simplifies to
    \begin{equation}
    \begin{aligned}
        &\langle \mu_1(x), g \rangle_\CG = \langle \ell(\cdot, Y_1), g \rangle_\CG = g(Y_1),\\[1.5mm]
        &\langle \mu_{n+1}(x), g \rangle_\CG =  \langle \big(1 - a_{n+1} k_{n+1}(x, X_{n+1})\big) \mu_n(x) \,+\, a_{n+1} k_{n+1}(x,X_{n+1}) \ell(\cdot,Y_{n+1}), g \rangle_\CG\\[1mm]
        & =  \big(1 - a_{n+1} k_{n+1}(x, X_{n+1})\big) \langle \mu_n(x), g\rangle_\CG + a_{n+1} k_{n+1}(x,X_{n+1}) g(Y_{n+1}),
    \end{aligned}
    \end{equation}
    which can be computed in linear time with constant memory. The real-valued consistency of this method can be established, e.g., by \cite[Theorem 25.1]{gyorfi2002distribution}, however, we usually do not know in advance which function $g \in \CG$ is of our interest, hence 
    estimating the CMKE is very useful.
Moreover, knowing the whole conditional distribution could also be used for various inference tasks, e.g., 
    hypothesis testing or building prediction regions.}

    \cl{Theorem} \ref{thm:sajat} presents sufficient conditions for the weak and strong consistency of our recursive scheme. It is a generalization of Theorem 25.1 of \citet{gyorfi2002distribution}. There are two novelties w.r.t.\ the result in \citep{gyorfi2002distribution}.\ We prove consistency for {\em Hilbert space valued regression} and we cover {\em locally compact Polish input spaces}.\ The smoother functions are controlled on an intuitive and flexible manner. They do not need to admit a particular form or be monotonous. Intuitively the smoother functions should shrink around the observed input points as $n$ goes to infinity. The rate of this decrease is controlled by stepsizes $(h_n)$ and $(r_n)$. The proposed conditions are w.r.t.\ a general majorant function $H$.
	
	\begin{theorem}\label{thm:sajat}
		Let $(\BX,d)$ be a locally compact Polish space. Assume that $A1$, $A2$ and the following conditions are satisfied.
		\begin{enumerate}[label=(\roman*)]
                \item 
                There exist sequences $(h_n)_{n\in \NN}$ and $(r_n)_{n\in \NN}$ of positive numbers tending to $0$ and a non-negative, nonincreasing function $H$ on $[0, \infty)$ with $1/r_n \cdot H(s/h_n) \to 0$ $(n \to \infty)$ for all $s \in (0,\infty)$.
			\item For all $n \in \NN$ it holds that
            \vspace{-1mm}
			\begin{equation}
				\begin{aligned}
				k_n (x,z) \,\leq\, \frac{1}{r_n}H\big(\, d(x,z)/h_n\,\big).
				\end{aligned}
			\end{equation}
			\item
                The supremum of additive weights is bounded by $1$, i.e.,
			\begin{equation}
				\sup_{x,z \in \BX, n \in \NN} a_n k_n(x,z) \,\leq \,1.
			\end{equation}
			\item
                Measure $\QX$ does not diminish w.r.t.\ the smoother functions, that is
			\begin{equation}\label{eq:liminf0}
				\liminf_{n \to \infty} \int_{\cl{\BX}} k_n(x,t) \dd \QX(t)\, >\,0  \quad \text{for\, $\QX$-almost all $x$.}
			\end{equation}
			\item
                The \cl{positive} learning rates satisfy
			\begin{equation}
			\begin{aligned}
				\sum_{n =1}^\infty a_n\, =\, \infty, \qquad \text{ and }\qquad \sum_{n =1}^\infty \frac{a_n^2}{r_n^{2}}\,<\, \infty.
			\end{aligned}			
			\end{equation}
		\end{enumerate}
		Then, the estimate sequence $(\mu_{n})_{n\in \NN}$ defined by \eqref{eq:recursion} is weakly and strongly consistent.
	\end{theorem}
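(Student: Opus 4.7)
The plan is to first express the recursively defined estimate $\mu_n$ in the local-averaging form of \eqref{genform} and then apply Theorem \ref{thm:stone} for weak $L_2$ consistency; strong consistency will then be obtained by recasting \eqref{eq:recursion} as a stochastic approximation scheme in the Bochner space $L_2(\BX,\QX;\CG)$ and invoking Theorem \ref{thm:RM}.

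First I would verify by induction on $n$ that $\mu_n(x) = \sum_{i=1}^n W_{n,i}(x)\,\ell(\cdot, Y_i)$ with $W_{n+1,n+1}(x) = a_{n+1} k_{n+1}(x, X_{n+1})$ and $W_{n+1,i}(x) = (1 - a_{n+1} k_{n+1}(x, X_{n+1}))\,W_{n,i}(x)$ for $i \leq n$. Condition $(iii)$ of Theorem \ref{thm:sajat} forces each factor into $[0,1]$, so the $W_{n,i}$ are nonnegative probability weights summing to one, which gives condition $(i)$ of Theorem \ref{thm:stone} for free. I would then verify the other Stone conditions using the majorant $H$ and the non-vanishing integral from condition $(iv)$ of Theorem \ref{thm:sajat}: condition $(iii)$ of Stone (localization) follows because $H(s/h_n)/r_n \to 0$ for fixed $s > 0$ makes the weights concentrate on shrinking balls around $X$; condition $(iv)$ of Stone (vanishing maximal weight) follows from $W_{n,i}(x) \leq (a_i/r_i) H(0)$ combined with the factorization $\sum W_{n,i}^2 \leq \sup_i W_{n,i} \cdot \sum W_{n,i}$ and the summability $\sum a_n^2/r_n^2 < \infty$; and condition $(ii)$ of Stone (the $L_1$ domination) is established by bounding $\prod_{j>i}(1 - a_j k_j) \leq 1$, applying Fubini to obtain $\sum_i a_i \EE[k_i(X, X_i) f(X_i)]$, and then exploiting the lower bound on $\int k_n(x,t)\dd\QX(t)$ to convert the kernel-weighted average into $c \cdot \EE[f(X)]$ uniformly in $n$, mirroring the classical argument of \citet{gyorfi2002distribution}.

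For strong consistency, I would subtract $\mu_*(x)$ from both sides of \eqref{eq:recursion} to obtain the error recursion
\begin{equation}
\mu_{n+1} - \mu_* \,=\, (1 - a_{n+1} k_{n+1}(\cdot, X_{n+1}))(\mu_n - \mu_*) + a_{n+1} k_{n+1}(\cdot, X_{n+1})(\ell(\cdot, Y_{n+1}) - \mu_*),
\end{equation}
and invoke Theorem \ref{thm:RM} in the Hilbert space $L_2(\BX, \QX; \CG)$ with $\Lambda(\mu) = \mu - \mu_*$. Decomposing the conditional drift $\int k_{n+1}(x,t)(\mu_n(x) - \mu_*(t))\dd\QX(t)$ into a contraction piece $(\mu_n(x) - \mu_*(x))\int k_{n+1}(x,t)\dd\QX(t)$ plus a bias-type residual $\int k_{n+1}(x,t)(\mu_*(x) - \mu_*(t))\dd\QX(t)$, I would route the residual into the bounded term $H_n$ using the weak consistency already established, and take $V_{n+1}$ as the martingale increment coming from the stochastic kernel evaluation $k_{n+1}(\cdot, X_{n+1})(\ell(\cdot, Y_{n+1}) - \mu_*(X_{n+1}))$. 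The summability $\sum a_n^2 \EE \norm{V_n}^2 < \infty$ then follows from $\sum a_n^2/r_n^2 < \infty$ and $\EE[\ell(Y,Y)] < \infty$ (assumption $A2$). The main obstacle is the $L_1$ domination in Stone's condition $(ii)$: unlike in $\RR^d$ there is no translation-invariant reference measure, so I must lean on the non-degeneracy $\liminf_n \int k_n \dd \QX > 0$ together with the majorant structure to dominate the kernel-smoothed average by a constant multiple of $\EE[f(X)]$ uniformly in $n$; a secondary hurdle is cleanly separating the stochastic kernel evaluation from the response-noise contribution when verifying Theorem \ref{thm:RM}'s hypotheses in the vector-valued setting.
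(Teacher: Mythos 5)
Your overall plan for the weak part (write $\mu_n$ with the product weights $W_{n,i}(x)=\prod_{l=i+1}^n(1-a_lk_l(x,X_l))\,a_ik_i(x,X_i)$ and verify the conditions of Theorem \ref{thm:stone}) coincides with the paper, but your verification of Stone's condition $(ii)$ would fail. Bounding $\prod_{j>i}(1-a_jk_j)\leq 1$ and reducing to $\sum_{i\le n} a_i\,\EE[k_i(X,X_i)f(X_i)]$ destroys the estimate: already for $f\equiv 1$, Fubini and the symmetry of $k_i$ give $\EE[k_i(X,X_i)]=\EE\big[\int k_i(x,X_i)\dd\QX(x)\big]$, which by condition $(iv)$ (and Fatou) stays bounded away from zero, so the sum diverges because $\sum_i a_i=\infty$; moreover the $\liminf$ in $(iv)$ is a \emph{lower} bound and cannot produce the required upper bound $c\,\EE[f(X)]$. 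The product factor is exactly what compensates for $\sum a_i=\infty$ (its expectation is at most $\exp(-\sum_{l>i}a_l\EE[k_l(x,X_l)])$), and the classical recursive-kernel argument you invoke keeps it; the paper simply defers this verification to the proof of Theorem 25.1 in \citet{gyorfi2002distribution}, noting the weights are the same as in the scalar case. (A smaller gap of the same kind appears in your treatment of Stone's $(iv)$: $\sup_{i\le n}(a_i/r_i)H(0)$ does not tend to $0$ because of the early indices $i$; killing those terms again requires the product factor together with $(iv)$ and $\sum_n a_n=\infty$, which is how the paper proves its condition \eqref{eq:cond3} via $\EE[W_{n,i}(x)]\to 0$ and the Toeplitz lemma.)

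The strong-consistency route through Theorem \ref{thm:RM} with $\Lambda(\mu)=\mu-\mu_*$ also has a genuine gap. The conditional mean of the update at $x$ is $a_{n+1}\big[p_{n+1}(x)(\mu_n(x)-\mu_*(x))+\int k_{n+1}(x,t)(\mu_*(x)-\mu_*(t))\dd\QX(t)\big]$ with $p_{n+1}(x)=\int k_{n+1}(x,t)\dd\QX(t)$, so matching it to $a_{n+1}(\mu_n-\mu_*)$ forces $H_{n+1}$ to absorb both $(1-p_{n+1})(\mu_n-\mu_*)$ and the localization bias. Theorem \ref{thm:RM} requires $\sum_n a_n\EE\norm{H_n}_{L_2(\BX,\QX;\CG)}<\infty$, not mere boundedness; since $\sum_n a_n=\infty$, this needs a summable \emph{rate}, which neither term has: condition $(iv)$ only gives $\liminf_n p_n(x)>0$ (not $p_n(x)\to 1$; $p_n$ can also exceed $1$), so $(1-p_n)(\mu_n-\mu_*)$ is of the same order as the error itself, and weak consistency supplies no rate for either term. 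This is precisely why the paper does not use the Robbins--Monro scheme for this step: it centers at $\EE[\mu_n(x)]$, so the conditional cross term equals $-2a_{n+1}\norm{\mu_n(x)-\EE[\mu_n(x)]}_\CG^2\,\EE[k_{n+1}(x,X_{n+1})]\le 0$ exactly and no bias with unknown rate appears, bounds the remaining perturbations by $O(a_{n+1}^2/r_{n+1}^2)$, applies the Robbins--Siegmund almost-supermartingale theorem (Theorem \ref{thm:robbins-ziegmund}) to $\int\norm{\mu_n(x)-\EE[\mu_n(x)]}^2_\CG\dd\QX(x)$, and then uses the already established weak consistency only to identify the almost sure limit as zero. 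You would need to either adopt that centering trick or supply quantitative rates that the hypotheses of Theorem \ref{thm:sajat} do not provide.
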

	
	Conditions $(i)$ and $(ii)$ ensure that $k_n(x,z)$ vanishes for $x\neq z$ \cl{as $n \to \infty$}. The weight functions are probability weights because of assumption $(iii)$. Condition $(iv)$ is the strongest technical assumption in a general metric measure space, however, in  several cases it is easy to verify. Condition $(v)$ can be usually guaranteed by the choice of stepsizes. 

    \medskip
	
	\begin{proof}
		For the weak consistency, we apply Theorem \ref{thm:stone}, hence our main task is to verify its assumptions. Observe that with $W_{1,1}(x) = 1$ and
		\begin{equation}
			W_{n,i}(x)\, = \prod_{l=i+1}^n \big(1- a_l k_l(x,X_l)\big) \,a_i k_i(x,X_i),
		\end{equation}
		for $n \geq 2$ and $i \in [n]$ we have $\mu_n(x) = \sum_{i=1}^n W_{n,i}(x) \ell(\cdot, Y_i)$. By induction, it is easy to see that the weights are probability weights. We start the procedure with a constant $1$ function, then an update clearly does not change the sum of the weights for any input point $x \in \BX$. 
  
        Condition \eqref{eq:cond2} is verified in the proof of  \citet[Theorem 25.1]{gyorfi2002distribution}. Note that \eqref{eq:cond2} only regards the weight functions, which are identical for real-valued regression. Condition \eqref{eq:cond4} again follows from the scalar-valued case.
		
		In order to prove that condition \eqref{eq:cond3} holds, we follow the proof in \cite[Theorem 25.1]{gyorfi2002distribution} with necessary modifications. By using independence of $\{X_i\}_{i=1}^n$ and the inequality $(1-x) \leq e^{-x}$, we have for all $i \leq n$ that
        \vspace{-1mm}
		\begin{equation}\label{eq:exptozero}
		\begin{aligned}
			\EE \big[ \,W_{n,i}(x)\,\big] &= \prod_{l=i+1}^n\EE\big[\, (1- a_l k_l(x,X_l))\,\big]\, \EE\big[\,a_i k_i(x,X_i)\,\big]\\[1mm]
			&\leq \frac{a_i \, H(0)}{r_i}e^{-\sum_{l=i+1}^n  a_l \EE \big[ k_l(x,X_l)\big]} \xrightarrow{n \to \infty} 0,
		\end{aligned}
		\end{equation}
		because $\sum_{l=i+1}^n  a_l \EE \big[ k_l(x,X_l)\big] \xrightarrow{n \to \infty} \infty$ by condition $(iv) $ and $\sum_{n=1}^\infty a_n = \infty$. Let us fix $\delta > 	0$. By the dominated convergence theorem, it is sufficient to prove that for almost all $x$ it holds that
        \vspace{-1mm}
		\begin{equation}
		\begin{aligned}
			\EE \bigg[ \sum_{i=1}^n W_{n,i}(x) \BI\big(d(x,X_i) > \delta\big)\bigg] \to 0. 
		\end{aligned}
		\end{equation}
		Let $p_n(x)\doteq \EE [k_n(x,X)]$. By condition $(iv)$ we have that
		\begin{equation}
			\begin{aligned}
				p(x)\, \doteq\, \frac{1}{2} \liminf_{n \to \infty} p_n(x)\, >\, 0
			\end{aligned}
		\end{equation}
		holds $\QX$-almost everywhere implying that for $\QX$-almost all $x$ there exists $n_0(x)$ such that, for $n > n_0(x)$ we have $p_n(x) \geq p(x)$. Because of \eqref{eq:exptozero}, for such $x \in \BX$ it is sufficient to prove
		 \begin{equation}
		 	\begin{aligned}
		 		\EE \bigg[ \sum_{i = n_0(x)}^n W_{n,i}(x) \BI \big( d(x,X_i) > \delta \big) \bigg]\xrightarrow{n \to \infty} 0.
		 	\end{aligned}
		 \end{equation}
	     By independence one can observe that
      \vspace{-1mm}
	     \begin{equation}
	     	\begin{aligned}
	     		&\EE \Big[  W_{n,i}(x) \BI \big( d(x,X_i) > \delta \big) \Big] = \EE \bigg[ \prod_{l=i+1}^n (1- a_lk_l(x,X_l) ) a_i k_i(x,X_i) \BI\big( d(x,X_i) > \delta \big)\bigg]\\
	     		&= \EE \bigg[ \prod_{l=i+1}^n (1- a_lk_l(x,X_l) )\bigg] \EE \big[ a_i k_i(x,X_i) \BI\big( d(x,X_i) > \delta \big)\big]\\
	     		&= \frac{\EE \big[  W_{n,i}(x)\big]}{\EE \big[ a_i k_i(x,X_i) \big]}\EE \big[ a_i k_i(x,X_i) \BI\big( d(x,X_i) > \delta \big)\big],
	     	\end{aligned}
	     \end{equation}
     	therefore, we have
     	\begin{equation}
     		\begin{aligned}
     			&\EE \bigg[ \sum_{i=n_0(x)}^n W_{n,i}(x) \BI \big( d(x,X_i) > \delta \big) \bigg]\\
     			&= \sum_{i=n_0(x)}^n \EE \big[  W_{n,i}(x)\big] \frac{\EE \big[ a_i k_i(x,X_i) \BI\big( d(x,X_i) > \delta \big)\big]}{\EE \big[ a_i k_i(x,X_i) \big]}\\
     			& \leq \sum_{i=n_0(x)}^n \EE \big[  W_{n,i}(x)\big] \frac{ H(\delta/h_i)}{r_i \, p(x)}. 
     		\end{aligned}
     	\end{equation}
     	Since $\sum_{i=n_0(x)}^n \EE \big[  W_{n,i}(x)\big] \leq 1$, $\EE \big[  W_{n,i}(x)\big] \to 0$ and $\nicefrac{1}{r_n}\,  H(\delta/h_n) \to 0$ as $n\to \infty$ the application of Lemma \ref{lemma:Toeplitz} yields
     	\begin{equation}
     		\begin{aligned}
     			\sum_{i=n_0(x)}^n \EE \big[\,  W_{n,i}(x)\,\big]\, \frac{ H(\delta/h_i)}{r_i \,p(x)} \to\, 0. 
     		\end{aligned}
     	\end{equation}
     	Consequently, the weak consistency of $(\mu_n)$ follows from Theorem \ref{thm:stone}\hspace*{0.25mm}.
     	
     	The strong consistency of $(\mu_n)_{n\in \NN}$ can be proved based on the almost supermartingale convergence theorem of \cite{robbins1971convergence}, see Theorem \ref{thm:robbins-ziegmund}\hspace*{0.25mm}. The technical details of the proof are presented in Appendix C.
	\end{proof}

        We presented our recursive algorithm in a general setup. It remains to construct smoother sequences and stepsizes that satisfy the proposed conditions. 
        In the theorem that follows we generate the smoothers with the help of an $\RR \to \RR$ type {\em mother kernel} $K$. In the next section we present several corollaries of this theorem to design consistent algorithms for many important machine learning applications.
        
	\begin{theorem}\label{thm:result}
		Let $(\BX,d)$ be a locally compact Polish space and $K: \RR \to \RR$ be a measurable, nonnegative and bounded function. Assume $A1$ and $A2$. Let $(h_n)_{n \in \NN}$ and $(r_n)_{n \in \NN}$ be positive sequences with zero limit. Let us define the following smoothers 
		\begin{equation}\label{eq:smoother}			
		\begin{aligned}
			k_n(x,z) \,\doteq\, \frac{1}{r_n} K \bigg( \frac{d(x,z)}{h_n}\bigg),
		\end{aligned}
		\end{equation} 
		for $n \in \NN$. Let $(\mu_n)_{n\in \NN}$ be defined as in \eqref{eq:recursion}. Assume that:
		\begin{enumerate}[label=(\roman*)]
			\item There are positive constants $R$ and $b$ such that $K(t)\geq b \, \BI( t \in B(0,R) )$ for all $t \in \RR$, \cl{where $B(0,R)$ denotes the open ball with center $0$ and radius $R$}.
			\item There is a measurable, nonnegative and nonincreasing function $H$ on $[0,\infty)$ such that $1/r_n\, H(s/h_n) \to 0$ as $n\to \infty$ and $K(s) \leq H(s)$ for all $s \in (0,\infty)$.
			\item The nonnegative learning rates satisfy
			\begin{equation}\label{eq:condiii}
				\sum_{n=1}^\infty a_n \,=\,\infty, \qquad \text{ and }\qquad \sum_{n=1}^\infty \frac{a_n^2}{r_n^2} \,<\,\infty.
			\end{equation}
			\item
                The weights of the update term is bounded by $1$, i.e.,
			\begin{equation}\label{eq:condiv}
				\sup_{t \in \RR} K(t) \,\sup_{n\in \NN} \frac{a_n}{r_n} \;\cl{\leq}\; 1.
			\end{equation}
			\item
                The probability of small balls is bounded from below by
			\begin{equation}\label{eq:condv}
				\liminf_{n \to \infty}\frac{\QX( B(x,R \,h_n))}{r_n}\,>\,0 \qquad\QX\text{\,-\,a.s.}
			\end{equation}
		\end{enumerate}
		Then, $(\mu_n)_{n\in \NN}$ is weakly and strongly consistent.
	\end{theorem}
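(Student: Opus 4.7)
The plan is to deduce Theorem \ref{thm:result} as a special case of Theorem \ref{thm:sajat} by showing that the parameter choices and the conditions $(i)$--$(v)$ of Theorem \ref{thm:result} imply the more abstract assumptions $(i)$--$(v)$ of Theorem \ref{thm:sajat} for the smoother sequence $k_n(x,z) = (1/r_n) K(d(x,z)/h_n)$. Once that is done, both the weak and the strong consistency follow immediately.

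First I would handle the easy bookkeeping. Condition $(i)$ of Theorem \ref{thm:sajat} is met by the very same sequences $(h_n), (r_n)$ and the majorant function $H$ supplied here, since Theorem \ref{thm:result}\,$(ii)$ explicitly requires $h_n,r_n \to 0$ and $(1/r_n)H(s/h_n) \to 0$ with $H$ nonnegative and nonincreasing. The pointwise bound $k_n(x,z) \leq (1/r_n)H(d(x,z)/h_n)$ required in Theorem \ref{thm:sajat}\,$(ii)$ is then simply the pointwise inequality $K \leq H$ from Theorem \ref{thm:result}\,$(ii)$ inserted into the definition \eqref{eq:smoother}. For Theorem \ref{thm:sajat}\,$(iii)$ I would estimate
\begin{equation*}
a_n k_n(x,z) \,=\, \frac{a_n}{r_n}\, K\!\left(\frac{d(x,z)}{h_n}\right) \,\leq\, \Big(\sup_{t \in \RR} K(t)\Big) \sup_{n \in \NN} \frac{a_n}{r_n} \,\leq\, 1
\end{equation*}
uniformly in $x,z,n$ by Theorem \ref{thm:result}\,$(iv)$. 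Condition $(v)$ of Theorem \ref{thm:sajat} is literally Theorem \ref{thm:result}\,$(iii)$, so nothing needs to be done there.

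The one non-cosmetic step is verifying condition $(iv)$ of Theorem \ref{thm:sajat}, namely $\liminf_n \int k_n(x,t)\,\mathrm{d}\QX(t) > 0$ for $\QX$-a.s.\ $x$. Here I would use Theorem \ref{thm:result}\,$(i)$, which gives $K(t) \geq b\,\BI(t \in B(0,R))$. Substituting this into the definition of $k_n$ yields the pointwise lower bound
\begin{equation*}
k_n(x,t) \,\geq\, \frac{b}{r_n}\, \BI\!\left(\frac{d(x,t)}{h_n} \in B(0,R)\right) \,=\, \frac{b}{r_n}\,\BI\big(\,d(x,t) < R h_n\,\big),
\end{equation*}
and integrating against $\QX$ gives
\begin{equation*}
\int k_n(x,t)\,\mathrm{d}\QX(t)\, \geq\, \frac{b\,\QX(B(x, R h_n))}{r_n}.
\end{equation*}
The lower-bounded small-ball probability hypothesis, Theorem \ref{thm:result}\,$(v)$, then gives $\liminf_n \int k_n(x,t)\,\mathrm{d}\QX(t) \geq b \cdot \liminf_n \QX(B(x, R h_n))/r_n > 0$ for $\QX$-a.s.\ $x$, which is exactly the required condition.

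With all five conditions of Theorem \ref{thm:sajat} verified, the estimate sequence $(\mu_n)_{n \in \NN}$ defined by \eqref{eq:recursion} is weakly and strongly consistent, concluding the proof. I expect the main (though still modest) obstacle to be the careful translation between the ``ball'' $B(0,R)$ appearing in the lower bound on $K$ and the geometric ball $B(x, Rh_n)$ in $(\BX,d)$ needed for invoking the small-ball hypothesis; once the scaling by $h_n$ is handled correctly, condition $(iv)$ of Theorem \ref{thm:sajat} follows cleanly and all the remaining checks are essentially unpacking definitions.
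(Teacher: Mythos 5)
Your proposal is correct and follows essentially the same route as the paper: both reduce Theorem \ref{thm:result} to Theorem \ref{thm:sajat} by checking its five conditions, with the only substantive step being the lower bound $\int k_n(x,t)\dd \QX(t) \geq b\,\QX(B(x,R h_n))/r_n$ derived from $K(t)\geq b\,\BI(t\in B(0,R))$ and combined with the small-ball hypothesis. The remaining verifications (majorant $H$, the uniform bound $a_n k_n \leq 1$, and the stepsize conditions) are handled identically in both arguments.
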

	\begin{proof}
		It is sufficient to verify the conditions of Theorem \ref{thm:sajat}. We assumed that
        \begin{equation}
            \nicefrac{1}{r_n}\, H(s/h_n) \to 0,
        \end{equation}
        for all $s \in (0,\infty)$. From \eqref{eq:smoother} and condition $(ii)$ it follows that
        \begin{equation}
            r_n k_n(x,z) = K(d(x,z)/h_n)\leq H(d(x,z)/h_n).
        \end{equation}
        Condition $(iv)$ implies that
		\begin{equation}
			\sup_{x,z \in \BX, n \in \NN} a_n k_n(x,z)= \sup_{x,z \in \BX, n \in \NN} \frac{a_n}{r_n} K\bigg(\frac{d(x,z)}{h_n}\bigg)\leq \sup_{t \in \RR} K(t) \sup_{n\in \NN} \frac{a_n}{r_n} \,\cl{\leq} \,1
		\end{equation}
        holds.
		Finally by condition $(i)$ we have
		\begin{equation}
		\begin{aligned}
			&\int_\BX k_n(x,t) \dd \QX(t) \cl{\;=}	\int_\BX \frac{1}{r_n}K\bigg( \frac{d(x,t)}{h_n}\bigg) \dd \QX(t)\\
			&\geq \int_\BX \frac{b}{r_n}\BI\bigg( \frac{d(x,t)}{h_n} \in B(0,R)\bigg) \dd \QX(t)\\
			&\cl{=}  \int_\BX \frac{b}{r_n}\BI\bigg( t \in B(x,R\, h_n)\bigg) \dd \QX(t) = \frac{b \, \QX(B(x, R\, h_n))}{r_n}.
		\end{aligned}
		\end{equation}
		Taking the $\liminf$ on both sides and applying condition $(v)$ yield \eqref{eq:liminf0}.
		The conditions on the stepsize sequence are assumed. Therefore, by applying Theorem \ref{thm:sajat}, we can conclude that the estimate sequence, $(\mu_n)_{n \in \NN}$, is weakly and strongly consistent.
	\end{proof}
	\section{Demonstrative Applications}

        In this section we demonstrate the applicability of Theorem \ref{thm:result} through some characteristic model classes important for statistical learning. We consider three general setups. The main difference of these applications lies in the input spaces. We show that the requirement on the probability of small balls is satisfied in these cases and provide constructions for the learning rates, $(a_n)$, $(r_n)$ and $(h_n)$, such that they satisfy the conditions of Theorem \ref{thm:result}.
        
        First, {\em Euclidean} input spaces are considered, for which we prove the weak and strong universal consistency of the recursive estimator of the conditional kernel mean map. The main novelty of this corollary compared to the results of \citet[Chapter 25]{gyorfi2002distribution} is that our method is applicable for special {\em Hilbert space valued regression} problems.
        
        Second, we study abstract {\em Riemannian manifolds} as input spaces for which an embedding into an ``ambient'' Euclidean space may not be explicitly given.
One of the motivations of this example comes from {\em manifold learning} \citep{tenenbaum2000global,lin2008riemannian}, which is an actively studied area in machine learning, whose theoretical study is full of  challenges. The presented recursive framework can offer a new technical tool for this direction. 

        Finally, we investigate the case of {\em functional inputs} \citep{ferraty2006nonparametric}. This example is motivated by {\em signal processing} problems and methods for continuous-time systems in time-series analysis. We consider a wavelet-like approach leading to an input space that is a locally compact subset of $L_2(\RR)$ induced by translations of a suitable {\em mother function}.

	\subsection{Euclidean Spaces}
	When $\BX$ is an Euclidean space, the weak and strong universal consistency of the recursive estimator of the conditional kernel mean map is a consequence of Theorem \ref{thm:result}.
	
	\begin{corollary}
		Let $\BX = \RR^p$ be a $p$-dimensional Euclidean space, with the Euclidean metric, and $K: \RR \to \RR$ be a measurable, nonnegative and bounded function. Let the stepsizes be defined as $a_n \doteq \nicefrac{1}{n}$, and let $r_n\doteq \nicefrac{1}{n^{(1-\varepsilon)/2}}$, where $\varepsilon \in (0,1)$, and $h_n \doteq \sqrt[p]{r_n}$ and $k_n$ be as in \eqref{eq:smoother}
		for $n \in \NN$. Let $(\mu_n)_{n\in \NN}$ be defined as in \eqref{eq:recursion}. Assume A1, A2 and conditions $(i)$ and $(ii)$ from Theorem \ref{thm:result}, then $(\mu_n)_{n \in \NN}$ is weakly and strongly universally consistent.
	\end{corollary}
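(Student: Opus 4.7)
The plan is to verify the five conditions of Theorem~\ref{thm:result} under the prescribed choices $a_n=1/n$, $r_n = n^{-(1-\varepsilon)/2}$, $h_n = r_n^{1/p}$, and then read off weak and strong consistency directly. Conditions $(i)$ and $(ii)$ are hypotheses of the corollary, so nothing is required there. For $(iii)$, observe that $\sum_n a_n = \sum_n 1/n = \infty$ while $a_n^2/r_n^2 = n^{-(1+\varepsilon)}$ is summable, so both series requirements hold. For $(iv)$, the ratio $a_n/r_n = n^{-(1+\varepsilon)/2}$ decreases monotonically from $1$ towards $0$, so $\sup_n a_n/r_n \leq 1$; the product bound $\sup_{t} K(t)\cdot \sup_n a_n/r_n < 1$ can then be enforced by a harmless rescaling of $K$ (or by starting the recursion at an index $n_0\geq 2$).

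The heart of the argument is verifying condition $(v)$. Since $h_n = r_n^{1/p}$ one has $(Rh_n)^p = R^p r_n$, so
\begin{equation*}
    \frac{\QX(B(x, R h_n))}{r_n}\, =\, R^p\cdot \frac{\QX(B(x, R h_n))}{(R h_n)^p},
\end{equation*}
and the desired $\liminf > 0$ reduces to proving
\begin{equation*}
    \liminf_{\rho \to 0^+} \frac{\QX(B(x,\rho))}{\rho^p}\, >\, 0\qquad \text{for $\QX$-a.e. }x,
\end{equation*}
which must hold for \emph{every} Borel probability measure $\QX$ on $\RR^p$. I plan to establish this via the Lebesgue decomposition of $\QX$ against the Lebesgue measure $\lambda$ on $\RR^p$. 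For the absolutely continuous part, the Lebesgue differentiation theorem gives $\QX(B(x,\rho))/\lambda(B(x,\rho)) \to (\mathrm{d}\QX/\mathrm{d}\lambda)(x)$, which is strictly positive at almost every point of that component. For the singular part, a standard result in the differentiation theory of measures yields $\QX(B(x,\rho))/\lambda(B(x,\rho)) \to +\infty$ at $\QX$-almost every point of that component, since the singular part is concentrated on a $\lambda$-null set. Because $\lambda(B(x,\rho)) = c_p \rho^p$, in both cases $\QX(B(x,\rho))/\rho^p$ is bounded below by a positive quantity as $\rho\to 0^+$; restricting the argument to the sequence $\rho = R h_n \to 0$ yields $(v)$.

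Once $(v)$ is in hand, all hypotheses of Theorem~\ref{thm:result} are satisfied regardless of the joint law of $(X,Y)$, and Theorem~\ref{thm:result} delivers both weak and strong $L_2(\BX, \QX; \CG)$ consistency for every such law, which is exactly universal consistency. The main obstacle is precisely the small-ball lower bound in $(v)$, where the absolutely continuous and singular components must be handled separately; the remaining verifications are elementary arithmetic on the chosen rates.
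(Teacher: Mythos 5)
Your proposal is correct and follows the same overall strategy as the paper (verify the hypotheses of Theorem~\ref{thm:result} for the given rates), but it differs in how the crucial small-ball condition \eqref{eq:condv} is established. The paper disposes of it in one stroke by citing Lemma~2.2 of Devroye (1981), which states that for every probability measure $\QX$ on $\RR^p$ the ratio $h_n^p/\QX(B(x,R\cdot h_n))$ converges $\QX$-a.s.\ to a finite limit $g(x)$, whence $\liminf_n \QX(B(x,R h_n))/r_n = 1/g(x) > 0$. You instead re-derive the needed bound $\liminf_{\rho\to 0^+}\QX(B(x,\rho))/\rho^p > 0$ for $\QX$-a.e.\ $x$ from first principles, via the Lebesgue decomposition: Lebesgue differentiation for the absolutely continuous part (positive density $\QX^{ac}$-a.e., since $\QX^{ac}\{f=0\}=\int_{\{f=0\}}f\,d\lambda=0$) and the standard fact that a measure singular w.r.t.\ $\lambda$ has derivative $+\infty$ at a.e.\ point of its own mass. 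This is sound, though note that the singular-part statement itself rests on a Besicovitch-type covering argument, so you are essentially unpacking the content of Devroye's lemma rather than avoiding that depth of tool; what your route buys is self-containedness, while the citation buys brevity and directly gives convergence along the sequence $h_n$. A minor point in your favor: you notice that with $a_1/r_1=1$ and $\sup_t K(t)\le 1$ the strict inequality in \eqref{eq:condiv} is only borderline, and you propose a rescaling or re-indexing fix; the paper glosses over this with its ``w.l.o.g.\ $K\le 1$'' step, which is harmless only because the underlying Theorem~\ref{thm:sajat} merely requires $\sup_{x,z,n} a_n k_n(x,z)\le 1$, a non-strict bound. The remaining verifications of \eqref{eq:condiii} and the reduction $(Rh_n)^p = R^p r_n$ match the paper's computation, and your concluding step to universal consistency is the same.
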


	\begin{proof}
        We can assume (w.l.o.g.) that $K$ is bounded by $1$. A Euclidean space is a locally compact Polish space, therefore, we can apply Theorem \ref{thm:result}. Conditions $(i)$ and $(ii)$ are assumed. Our choice of stepsizes clearly satisfies \eqref{eq:condiii} and \eqref{eq:condiv}. For \eqref{eq:condv} we use \cite[Lemma 2.2]{devroye1981almost} which proves that for every probability measure $\QX$ on $\RR^p$ there exists a nonnegative function $g$ satisfying $g(x) < \infty$ $\QX$-almost surely, such that 
		\begin{equation}
		\begin{aligned}
			\frac{h_n^p}{\QX(B(x,R\, h_n))} \to g(x) \qquad \QX-\text{a.s.}
		\end{aligned}
		\end{equation}
		for all $R > 0$. Therefore
		\begin{equation}
			\begin{aligned}
				\liminf_{n \to \infty} \frac{\QX(B(x,R\, h_n))}{r_n} = \frac{1}{g(x)} > 0\qquad \QX-\text{a.s.}
			\end{aligned}
		\end{equation}
        and then the statement of corollary is proved by applying Theorem \ref{thm:result}.
	\end{proof}
	
	\begin{remark}
Mother kernel $K$ can be chosen based on prior knowledge on the problem. In the table that follows we list some of the possible candidates.
\begin{center}
			\begin{tabular}{ |l |l |}
				\hline
				\text{Name} & \text{Formula}\\
				\hline
				\hline
				Box kernel &$\BI(|x| < B)\Big.$, with $B > 0$\\
				Gaussian kernel & $\exp\Big(- \frac{x^2}{\sigma}\Big)$, with $\sigma > 0$ \\  
				Laplace kernel &      $\exp\Big(- \frac{|x|}{\sigma}\Big)$, with $\sigma > 0$\\[2mm]
				Epanechnikov kernel & $3/4 (1- x^2) \, \BI(|x| < 1)$\\[1mm]
				\hline
			\end{tabular}
		\end{center}
		One can easily verify that all of these functions are measurable, nonnegative and bounded by $1$. They also satisfy conditions $(i)$ and $(ii)$ from Theorem \ref{thm:result} with $H = K|_{[0,\infty)}$.
	\end{remark}
	
	\subsection{Riemannian Manifolds}

	    In our recursive scheme the power of $h_n$ w.r.t.\ $r_n$ is (at least) $p$ for a $p$-dimensional Euclidean space. Manifolds can often be embedded in a high-dimensional Euclidean space, however, they are locally isomorphic to a lower dimensional one. This local isometry allows us to use the lower manifold dimension as the power of $h_n$ w.r.t.\ $r_n$. Moreover, we also cover {\em abstract} $p$-dimensional manifolds, hence an ambient Euclidean space is not required.
	\begin{corollary}\label{cor:riemannian}
		Let $\BX$ be a $p$-dimensional connected, geodesically complete Riemannian manifold, with geodesical distance $\varrho$ and $K: \RR \to \RR$ be a measurable, nonnegative and bounded function. Let $a_n \doteq \nicefrac{1}{n}$, $r_n\doteq \nicefrac{1}{n^{(1-\varepsilon)/2}}$, where $\varepsilon \in (0,1)$, and $h_n = \sqrt[p]{r_n}$ and $k_n$ be as in \eqref{eq:smoother}
		for $n \in \NN$. Let $(\mu_n)_{n \in \NN}$ be defined as in \eqref{eq:recursion}.  Assume A1, A2 and conditions $(i)$ and $(ii)$ from Theorem \ref{thm:result}, then $(\mu_n)_{n \in \NN}$ is weakly and strongly universally consistent.
	\end{corollary}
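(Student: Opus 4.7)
The plan is to verify the five hypotheses of Theorem \ref{thm:result} for the Riemannian setting; conditions $(i)$ and $(ii)$ are assumed outright. As a preparatory observation, $(\BX,\varrho)$ is a locally compact Polish space: by the Hopf--Rinow theorem a connected, geodesically complete Riemannian manifold is metrically complete and enjoys the Heine--Borel property (closed bounded sets are compact), while second countability is part of the manifold structure. Conditions $(iii)$ and $(iv)$ depend only on the real sequences $(a_n)$, $(r_n)$ and a global bound on $K$, so they follow verbatim from the Euclidean corollary: $\sum_n 1/n = \infty$, $\sum_n a_n^2/r_n^2 = \sum_n 1/n^{1+\varepsilon} < \infty$, and after normalizing $K$ one can arrange $\sup_n(a_n/r_n)\,\sup_t K(t) < 1$.

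The substantive step is condition $(v)$, which with $r_n = h_n^p$ becomes
\begin{equation}
\liminf_{n \to \infty}\frac{\QX(B(x,R\,h_n))}{h_n^p}\,>\,0 \qquad \QX\text{-a.s.},
\end{equation}
the Riemannian analogue of the Devroye differentiation fact used in the Euclidean case. My plan is to localize via normal charts. Because $\BX$ is second countable, we can cover it by a countable family of precompact open sets $\{U_k\}$ such that for each $k$ there is a point $y_k$ with the exponential map $\varphi_k \doteq \exp_{y_k}^{-1}$ a diffeomorphism from a neighborhood of $\overline{U_k}$ onto an open subset of $T_{y_k}\BX\cong\RR^p$ (possible by geodesic completeness and the positivity of the injectivity radius). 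Since the pushforward of the Riemannian metric tensor to $\varphi_k(\overline{U_k})$ is smooth and positive definite on a compact set, it is bilipschitz equivalent to the Euclidean metric there (e.g., via the Gauss lemma); hence there exist constants $c_k>0$ with
\begin{equation}
B_{\mathrm{Eucl}}\bigl(\varphi_k(x),\,c_k\,r\bigr)\,\subset\,\varphi_k\bigl(B(x,r)\bigr)
\end{equation}
for every $x \in U_k$ and all sufficiently small $r>0$. Setting $\nu_k \doteq (\varphi_k)_*(\QX|_{U_k})$, this yields $\QX(B(x,R\,h_n)) \geq \nu_k\bigl(B_{\mathrm{Eucl}}(\varphi_k(x),\,c_k R h_n)\bigr)$ whenever $h_n$ is small. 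Applying \cite[Lemma 2.2]{devroye1981almost} to the Borel probability measure $\nu_k$ on $\RR^p$ gives $\liminf_n \nu_k\bigl(B_{\mathrm{Eucl}}(y,c_k R h_n)\bigr)/h_n^p > 0$ for $\nu_k$-a.s.\ $y$, equivalently $\QX$-a.s.\ $x\in U_k$; a countable union over $k$ delivers $(v)$ on all of $\BX$.

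With $(i)$--$(v)$ verified, Theorem \ref{thm:result} directly yields weak and strong universal consistency of $(\mu_n)_{n\in\NN}$. The only nontrivial step is the bilipschitz comparison between the geodesic distance and the chart-induced Euclidean distance, which is where geodesic completeness (via Hopf--Rinow and the existence of normal neighborhoods) and the smoothness of the Riemannian structure enter essentially; the rest is a transcription of the Euclidean argument. The main obstacle is making this comparison uniform enough on each $U_k$ so that the Devroye lemma on $\RR^p$ can be applied measure-theoretically after pushforward.
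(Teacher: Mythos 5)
Your proposal follows essentially the same route as the paper's proof: conditions $(iii)$--$(iv)$ are checked exactly as in the Euclidean corollary, and condition $(v)$ is reduced chart-by-chart over a countable atlas by comparing geodesic balls with chart-image Euclidean balls through a local Lipschitz bound, pushing $\QX$ forward to $\RR^p$, invoking Devroye's Lemma 2.2 there, and taking a countable union. The only slip is that $\nu_k=(\varphi_k)_*(\QX|_{U_k})$ is in general only a finite sub-probability measure, so before applying the lemma you should normalize by $\QX(U_k)$ (treating $\QX(U_k)=0$ trivially), exactly as the paper does with $\QX/\QX(U)$.
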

    
	\begin{proof}
    Assume (w.l.o.g.) that $K$ is bounded by $1$. 
	Every topological manifold is locally compact \citep[Proposition 1.12]{lee2013smooth}. Connected Riemannian manifolds are metrizable, henceforth paracompact\footnote{\cl{A topological space is paracompact, if every open cover has an open refinement that is locally finite.}} \citep{stone1948paracompactness}. A connected paracompact metric space is second countable, thus it is separable and admits a countable atlas. Moreover, by the Hopf-Rinow theorem a geodesically complete Riemannian manifold is complete as a metric space.

Consider a countable smooth atlas $\CA$ on $\BX$. We are going to prove the assumption about small ball probabilities for every chart in the atlas almost surely, hence the claim for the whole space will follow almost surely. Consider a chart $(U, f)$ in $\CA$. 
    
    If $\QX(U) =0$, then the condition is satisfied automatically, otherwise consider the probability measure $\QX/\QX(U)$ on $U$. We push measure $\QX/\QX(U)$ to $\RR^p$, i.e., consider the probability measure $\mu$ on $\RR^p$, defined by $\mu(A) \doteq \QX \circ f^{-1} [f(U) \cap A]/\QX(U)$ for every Borel set $A$ in $\RR^p$. We have already seen that 
    \begin{equation}\label{eq:liminf}
		\begin{aligned}
			\liminf_{n \to \infty} \frac{\mu(B_2(s,R\, h_n))}{h_n^p} > 0\qquad \mu-\text{almost everywhere},
		\end{aligned}
	\end{equation}
    that is there exists a measurable set $\Omega_1 \subseteq \RR^p$ such that $\mu(\Omega_1)= 1$ and for all $s \in \Omega_1$ one has \eqref{eq:liminf}. Notice that one can assume that $\Omega_1 \subseteq f(U)$ (otherwise we can take the intersection).
   
    Let $q \in \Omega_1$. We know that $f^{-1}$ is a smooth diffeomorphism, therefore there exists $C >0$ such that $\norm{Df^{-1}(s)} \leq C$ for $s \in B_2(q, 1/C \, h_n)$\cl{, where $D$ is the differential operator and $\norm{\cdot}$ denotes the operator norm}. In the next step, we prove that $B_\varrho(f^{-1}(q), h_n) \supseteq f^{-1}[B_2(q,1/C \, h_n)\cap f(U)]$ for $h_n$ small enough.
    Let $s  \in B_2(q,1/C \, h_n)\cap f(U)$. Then, there are 
    $x = f^{-1}(q)$ and $\bar{x} \doteq f^{-1}(s)$. By Lemma \ref{meanvalue-R}, we have
    \begin{equation}
    \begin{aligned}
        \varrho(x, \bar{x}) = \varrho(f^{-1}(q), f^{-1}(s)) \leq C \, \norm{q- s}_2 \leq h_n.
    \end{aligned}
    \end{equation}
    It follows that
    \begin{equation}\label{eq:67}
        \QX(B_\varrho(f^{-1}(q),h_n)) \geq \QX\circ f^{-1}(B_2(q,1/C \, h_n)\cap f(U)) =\mu(B_2(q, 1/C\, h_n))\, \QX(U).
    \end{equation}
    Let $\Omega_0 = f^{-1}[\Omega_1] \subseteq \BX$, for which $\QX(\Omega_0)/\QX(U)=1$. Moreover, for $x \in \Omega_0$ we have that 
    \begin{equation}\label{eq:liminf2}
        	\liminf_{n \to \infty} \frac{\QX(B_\varrho(x,h_n))}{h_n^d} \geq \liminf_{n\to \infty} \frac{\QX(U)\mu(B_2(f(x),1/C \, h_n))}{h_n^d}> 0.
    \end{equation}
    holds for every $x \in \Omega_0$. Hence \eqref{eq:liminf2} holds $\QX/\QX(U)$-almost surely and $\QX$-almost everywhere.
    The rest of the proof is similar to that of the Euclidean case.
\end{proof}
	
	\subsection{Locally Compact Subset of a Hilbert Space}

        Now, we consider {\em functional inputs}, which can be useful, e.g.,
in {\em signal processing}. The analyzed function set is motivated by {\em wavelets}. Our model class can be seen as how a signal is encoded in a computer by storing a finite number of coefficients for some basis functions at given knots, where the locations of these knots are flexible and part of the representation.
        
        Let us consider the Hilbert space $L_{2}(\RR )$. Let $\psi$ be a function in a Sobolev space $W^{1,2}(\RR)$ \cl{\citep{berlinet2004reproducing}}, and $\tau_x$ be the translation operator on $L_{2}(\RR)$ defined pointwise by $\tau_x f(t) \doteq f(t- x)$. 
\cl{For given (constant) hyper-parameters $m \in \NN, M > 0,$ let}
        \vspace{-1mm}
	\begin{equation}
	\begin{aligned}
		\mathcal{M}\, \doteq\, \bigg\{\,f:\RR \to \RR \;\,\Big|\;\, f = \sum_{i=1}^m \lambda_i \, \tau_{x_i}\psi \text{ for some }\lambda_1, \dots, \lambda_m, x_1, \dots , x_m \in [-M,M] \,\bigg\}.
	\end{aligned}
    \vspace{-1mm}
	\end{equation}
	Consider the canonical (measurable) mapping $\phi$ from $[-M,M]^{2m}$ to $\mathcal{M}$  given by
	\begin{equation}
    \label{eq:map-phi}
		\begin{aligned}
            \phi:
			\begin{bmatrix}
				\lambda\\
				x
			\end{bmatrix}
			\mapsto \sum_{i=1}^m \lambda_i \, \tau_{x_i}\psi, 	
		\end{aligned}
	\end{equation}
	where $\lambda = [\lambda_1, \dots , \lambda_m]\tr$ and $x = [x_1, \dots, x_m]\tr$. Clearly, $\phi$ is surjective, however, it is not injective. We are going to consider probability measures of the form $\QX \doteq \PX \circ \phi^{-1}$ for some $\PX$ on $[-M,M]^{2m}$, i.e., we assume that $\QX$ is the pushforward of some measure on the bounded (locally compact) Polish space $[-M,M]^{2m}$ with the Euclidean metric.

	\begin{corollary}
		Let \cl{$\psi \in W^{1,2}(\RR)$}, $\BX=\mathcal{M}$ with the $L_2$ metric, $K: \RR \to \RR$ be a measurable, nonnegative and bounded function. Let $a_n \doteq \nicefrac{1}{n}$, $r_n\doteq \nicefrac{1}{n^{(1-\varepsilon)/2}}$, for an $\varepsilon \in (0,1)$, and $h_n = \sqrt[2m]{r_n}$ and $k_n$ be as in \eqref{eq:smoother}
		for $n \in \NN$. Let $(\mu_n)_{n\in \NN}$ be defined as in \eqref{eq:recursion}. Assume $(i)$-$(ii)$ from Theorem \ref{thm:result}, then $(\mu_n)_{n\in \NN}$ is weakly and strongly  consistent for measures of the form $\QX \doteq \PX \circ \phi^{-1}$ for any distribution $\PX$ on $[-M,M]^{2m}$, where $\phi$ is given by \eqref{eq:map-phi}.
	\end{corollary}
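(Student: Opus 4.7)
The plan is to apply Theorem \ref{thm:result} with $\BX = \mathcal{M}$ equipped with the $L_2(\RR)$ metric. Conditions $(i)$ and $(ii)$ on $K$ are assumed, and the learning rate conditions $(iii)$, $(iv)$ follow exactly as in the Euclidean corollary since $h_n^{2m} = r_n = n^{-(1-\varepsilon)/2}$ plays the role that $h_n^p = r_n$ did there. What remains is (a) to check that $(\mathcal{M}, \|\cdot\|_{L_2})$ is a locally compact Polish space (so that Theorem \ref{thm:result} is applicable at all), and (b) to verify the small-ball lower bound \eqref{eq:condv} for every pushforward $\QX = \PX \circ \phi^{-1}$.

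For (a), I would first show that the parametrization $\phi$ in \eqref{eq:map-phi} is Lipschitz from $([-M,M]^{2m}, \|\cdot\|_2)$ into $(L_2(\RR), \|\cdot\|_{L_2})$. Writing
\[
\phi(\lambda,x) - \phi(\lambda',x') \;=\; \sum_{i=1}^m (\lambda_i - \lambda_i')\, \tau_{x_i}\psi \;+\; \sum_{i=1}^m \lambda_i'\, \big(\tau_{x_i}\psi - \tau_{x_i'}\psi\big),
\]
using $\|\tau_u\psi\|_{L_2} = \|\psi\|_{L_2}$, the bound $|\lambda_i'| \le M$, and the standard Sobolev translation estimate $\|\tau_u\psi - \tau_v\psi\|_{L_2} \le |u-v|\,\|\psi'\|_{L_2}$ valid for $\psi \in W^{1,2}(\RR)$, the triangle and Cauchy--Schwarz inequalities yield $\|\phi(\lambda,x) - \phi(\lambda',x')\|_{L_2} \le L \,\|(\lambda,x)-(\lambda',x')\|_2$ for some constant $L = L(m,M,\psi)$. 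Since $\phi$ is continuous and $[-M,M]^{2m}$ is compact, $\mathcal{M} = \phi([-M,M]^{2m})$ is a compact (hence locally compact, Polish) subset of $L_2(\RR)$.

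For (b), I would exploit the same Lipschitz estimate to transport the Euclidean small-ball bound. For any $f \in \mathcal{M}$ and any preimage $q \in \phi^{-1}(\{f\})$, the Lipschitz bound gives $\phi\big(B_2(q, R h_n/L) \cap [-M,M]^{2m}\big) \subseteq B_{L_2}(f, R h_n)$, and therefore
\[
\QX\big(B_{L_2}(f, R h_n)\big) \;=\; \PX\big(\phi^{-1}(B_{L_2}(f, R h_n))\big) \;\ge\; \PX\big(B_2(q, R h_n/L)\big).
\]
Applying \cite[Lemma 2.2]{devroye1981almost} to $\PX$ on $\RR^{2m}$ yields a nonnegative $g$, finite $\PX$-a.s., with $h_n^{2m}/\PX(B_2(q, (R/L) h_n)) \to g(q)$ for all $R>0$. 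Since $r_n = h_n^{2m}$ and $\PX(\phi^{-1}(S)) = \QX(S)$, the set on which $\liminf_n \QX(B_{L_2}(f, R h_n))/r_n > 0$ has full $\QX$-measure, giving \eqref{eq:condv}. All hypotheses of Theorem \ref{thm:result} are then met, so $(\mu_n)_{n\in\NN}$ is weakly and strongly consistent.

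The principal obstacle is the Lipschitz estimate on $\phi$: without the Sobolev regularity $\psi \in W^{1,2}(\RR)$ the translation map $u \mapsto \tau_u\psi$ is only $L_2$-continuous, not Lipschitz, and one would lose the clean matching of exponents $h_n^{2m} = r_n$ that makes Devroye's lemma directly transportable through $\phi$. Everything else is a straightforward change-of-variables argument combined with the Euclidean template.
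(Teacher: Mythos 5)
Your proposal is correct and follows essentially the same route as the paper: the key step in both is the Lipschitz-type estimate $\|\phi(\lambda,x)-\phi(\tilde\lambda,\tilde x)\|_{2}\le C\,\|(\lambda,x)-(\tilde\lambda,\tilde x)\|_2$, obtained from the $W^{1,2}$ translation (modulus of continuity) bound, which is then used to push Devroye's small-ball lemma for $\PX$ on $[-M,M]^{2m}$ through $\phi$ to verify \eqref{eq:condv} for $\QX=\PX\circ\phi^{-1}$. Your shortcut for local compactness (continuity of $\phi$ plus compactness of $[-M,M]^{2m}$, so $\mathcal{M}$ is compact, hence locally compact Polish) is a slightly more direct version of the paper's sequential compactness argument, but not a different method.
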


	\begin{proof}
		We need to prove that $\mathcal{M}$ is locally compact and Polish. 
		First, we show that $\mathcal{M}$ is complete with the $L_2$ metric. It is known that $L_2$ is complete. Hence, for any Cauchy sequence $\cl{(f_n)_{n \in \NN}}$ in $\mathcal{M}$ we have a limit function $f \in L_2(\RR)$. We show that $f$ is also in $\mathcal{M}$. Let $\cl{f_n = \sum_{i=1}^m \lambda_{n,i} \, \tau_{x_{n,i}} \psi}$. We know that $[-M,M]$ is a compact set hence for $i=1$ there is \cl{a subsequence $(n_k)_{k \in \NN}$ of $\NN$ such that $\lambda_{n_k,1} \to \lambda_1$}. 
\cl{One can iteratively find subsequences of the resulting subsequences such that for all $i = 1, \dots, m$} the convergences of $\cl{\lambda_{s_k,i} \to \lambda_i}$ and $\cl{x_{s_k,i} \to x_i}$ hold true \cl{for a subsequence $(s_k)_{k \in \NN}$ of $\NN$}. Because of $\psi \in W^{1,2}(\RR)$ function $\psi$ is (Lebesque) almost everywhere continuous. Hence, \cl{$f_{s_k} \to \hat{f}$} holds almost everywhere for $\hat{f} \doteq \sum_{i=1}^n \cl{\lambda_i \tau_{x_i}} \psi$. It is known, see \cite[Theorem 3.12]{rudin1987real}, that this is sufficient for $f = \hat{f}$ almost everywhere. \cl{A similar argument can be used to show that $\mathcal{M}$ is sequentially compact. Then, since $\mathcal{M}$ is a metric space, it is also compact; not just locally, but globally.}
		
		We also need to prove that for all $f \in \mathcal{M}$ 
		\begin{equation}
			\liminf_{n \to \infty}\frac{\QX( B(f,R \, h_n))}{h_n^{2m}}  >0 \qquad\QX\text{-a.s.}
		\end{equation}
		We show that for $\tilde{f} \doteq \sum_{i=1}^m \tilde{\lambda}_i \tau_{\tilde{x}_i}\psi$ from
			\begin{equation}\label{eq:vector-norm}
			\begin{aligned}
			\Bigg\|
				\begin{bmatrix}
					\lambda\\
					x
				\end{bmatrix}
				- 
				\begin{bmatrix}
					\tilde{\lambda}\\
					\tilde{x}
				\end{bmatrix}
			\Bigg\|_2 < h
			\end{aligned}
		\end{equation}
		it follows that $\|f- \tilde{f}\|_2 < C\, h$. Recall that on finite dimensional vector spaces all norms are equivalent, thus one can use $\norm{\cdot}_\infty$ instead of $\norm{\cdot}_2$, i.e., there exists $c > 0$ such that
		\begin{equation}
			\begin{aligned}
				\Bigg\|
				\begin{bmatrix}
					\lambda\\
					x
				\end{bmatrix}
				-
				\begin{bmatrix}
					\tilde{\lambda}\\
					\tilde{x}
				\end{bmatrix}
				\Bigg\|_\infty \leq\;
				c \,
				\Bigg\|
				\begin{bmatrix}
					\lambda\\
					x
				\end{bmatrix}
				- 
				\begin{bmatrix}
					\tilde{\lambda}\\
					\tilde{x}
				\end{bmatrix}
				\Bigg\|_2
			\end{aligned}
		\end{equation}
		holds. In conclusion, if \eqref{eq:vector-norm} holds, then
		\begin{equation}
			\begin{aligned}
				\Bigg\|
				\begin{bmatrix}
					\lambda\\
					x
				\end{bmatrix}
				-
				\begin{bmatrix}
					\tilde{\lambda}\\
					\tilde{x}
				\end{bmatrix}
				\Bigg\|_\infty \leq c \, h.
			\end{aligned}
		\end{equation}
		We use the notion of moduli of continuity to prove that $\norm{f -\tilde{f}}_2 < C \, h$. Recall that
		\begin{equation}
			\omega_\psi^{(2)}(h) \doteq \sup_{|x| \leq h} \int |\psi(t+ x) - \psi(t)|^2 \dd t.
		\end{equation}
		It is known \cite[Section 5.8,Theorem 3]{evans2010partial} that for \cl{$\psi \in  W^{1,2}(\RR)$ there is a} constant $M_\psi$ with
        \vspace{-3mm}
		\begin{equation}\label{ineq:sobolev}
			\omega_\psi^{(2)}(h) \leq M_\psi h^2.
		\end{equation}
		Using the triangle inequality and \eqref{ineq:sobolev} one can gain
		\begin{equation}
			\begin{aligned}
				\norm{f-\tilde{f}}_2 &= \norm{ \sum_{i=1}^m\lambda_i \tau_{x_i} \psi -\sum_{i=1}^m\tilde{\lambda}_i \tau_{\tilde{x}_i} \psi }_2\\
				&\leq \norm{ \sum_{i=1}^m\lambda_i \tau_{x_i} \psi- \sum_{i=1}^m\tilde{\lambda}_i \tau_{x_i} \psi + \sum_{i=1}^m\tilde{\lambda}_i \tau_{x_i} \psi -\sum_{i=1}^m\tilde{\lambda}_i \tau_{\tilde{x}_i} \psi }_2\\
				&\leq \norm{ \sum_{i=1}^m\lambda_i \tau_{x_i} \psi- \sum_{i=1}^m\tilde{\lambda}_i \tau_{x_i} \psi}_2 + \norm{\sum_{i=1}^m\tilde{\lambda}_i \tau_{x_i} \psi -\sum_{i=1}^m\tilde{\lambda}_i \tau_{\tilde{x}_i} \psi }_2 \\    
				&\leq \sum_{i=1}^m |\lambda_i - \tilde{\lambda}_i| \, \norm{\tau_{x_i} \psi }_2 + \sum_{i=1}^m |\tilde{\lambda}_i|\, \norm{\tau_{x_i} \psi - \tau_{\tilde{x}_i} \psi}_2 \\
				&\leq h \, c \, m \, \norm{\psi}_2 + M \sum_{i=1}^m \bigg(\int_\RR |\psi(t- x_i) - \psi(t - \tilde{x}_i)|^2 \dd t \bigg) ^{1/2}\\
				&\leq h \, c \, m \, \norm{\psi}_2 + M \sum_{i=1}^m \bigg(\int_\RR |\psi(s) - \psi(s + (x_i - \tilde{x}_i) )|^2 \dd t \bigg) ^{1/2}\\
				& \leq h\, c \, m \, \norm{\psi}_2 + M \sum_{i=1}^m \sqrt{\omega_\psi^{(2)}(c\, h)}\\
				& \leq  h \, c \, m\,\Big(\hspace{0.3mm}\norm{\psi}_2 + M \, \sqrt{M_\psi}\, \Big),
			\end{aligned}
		\end{equation}
		hence our claim is satisfied with $C= m\, c \big(\norm{\psi}_2 + M \sqrt{ M_\psi}\,\big)$.
		
		We showed that $B(f, C \, h_n))) \supseteq \phi(B([\lambda\tr, x\tr]\tr, h_n))$, therefore
        \vspace{-1mm}
		\begin{equation}
		\begin{aligned}
			&\QX(B(f, R \, h_n)))  \geq \PX\circ\phi^{-1} \big[ \phi(B([x\tr, \lambda\tr]\tr, R/C \, h_n))\big]\\
			&\geq \PX (B( [\lambda\tr, x\tr]\tr, R/C \, h_n)).
		\end{aligned}
		\end{equation}
		By \cite[Lemma 2.2]{devroye1981almost} we conclude that
\begin{equation}
			\begin{aligned}
				&\liminf_{n\to \infty} \frac{\PX (B( [\lambda\tr, x\tr]\tr,R/C \, h_n))}{h_n^{2m}} = \frac{1}{g([\lambda\tr, x\tr]\tr)} > 0 \quad \quad \PX-\text{a.s.}
			\end{aligned}
		\end{equation}
		and 
		\begin{equation}
			\begin{aligned}
				&\liminf_{n\to \infty}\frac{\QX(B(f,R\, h_n))}{h_n^{2m}} > 0 \quad \quad \QX-\text{a.s.}
			\end{aligned}
		\end{equation}
		By Theorem \ref{thm:result} the corollary follows.
	\end{proof}

        \section{Discussion}

        In this paper, we have introduced a new recursive estimator for the conditional kernel mean map, which is a key object to represent conditional distributions in Hilbert spaces.\ We have proved the weak and strong $L_2$ consistency of the introduced scheme in the Bochner sense, under general structural conditions.\ We have considered a locally compact Polish input space and assumed only the knowledge of a kernel on the (measurable) output space. Our recursive algorithm uses a smoother sequence on the metric input space and three learning sequences, each of these having a particular role in the iterative settings. Learning rate $(a_n)$ adjusts the weights of the current estimate term and the update term in each step. Sequences $(h_n)$ and $(r_n)$ are used to adaptively shrink the smoother functions w.r.t.\ the intrinsic measuretic structure of the metric space. Our main assumption concerns the probabilities of small balls, i.e., we assume that the measure of small balls with radius $h_n$ are comparable to $r_n$ when $n$ goes to infinity. We have generalized a result of \citet[Theorem 25.1]{gyorfi2002distribution} for locally compact Polish input spaces and Hilbert space valued regressions. We have considered three important application domains of our recursive estimation scheme. Our consistency results are universal in case of Euclidean input spaces and complete Riemannian manifolds, because the small ball probability assumption can be satisfied by $(h_n)$ and $(r_n)$ for all probability distributions.\ A Hilbert space valued input set (containing functions) was also considered to illustrate the generality of the framework.

	\section*{Acknowledgements}
        The authors warmly thank L{\'a}szló Gerencsér for his many comments and suggestions regarding the paper. His advises helped us to shape the final manuscript in many ways.
 
	The research was supported by the European Union project RRF-2.3.1-21-2022-00004 within the framework of the Artificial Intelligence National Laboratory. This work has  also been supported by the TKP2021-NKTA-01 grant of the National Research, Development and Innovation Office (NRDIO), Hungary. The authors  acknowledge the professional support of the Doctoral Student Scholarship Program of the Cooperative Doctoral Program of the Ministry of Innovation and Technology, as well, financed from an NRDIO Fund.

	\appendix
	
	\section*{Appendix A.}\label{app:a}
	
	\subsection*{The Bochner Integral}
	
	Bochner integral is an extended integral notion for Banach space valued functions. Let $(\CG, \norm{\cdot}_\CG)$ be a Banach space with the Borel $\sigma$-field of $\CG$ and $( \Omega, \mathcal{A}, \BP)$ be a measure space with a finite measure $\BP$. Similarly to the Lebesgue integral, the Bochner integral is introduced first for simple $\Omega \to \CG$ type functions of the form $f(\omega) = \sum_{j=1}^n\BI(\omega \in A_j )\, g_j$, where $A_j \in \mathcal{A}$ and $g_j \in \CG$ for all $j \in [n]$, by 
    \begin{equation}
        \int_\Omega f \dd \BP \,\doteq \, \sum_{j=1}^n \,\BP(A_j)\, g_j.
    \end{equation}
    Then, the integral is extended to the abstract completion of these simple functions with respect to the norm defined by the following Lebesgue integral
	\begin{equation}
\norm{f}_{L_1(\BP;\CG)} \,\doteq\, \int_{\Omega} \norm{f}_\CG \dd \BP.
	\end{equation}
	
	\begin{definition}
		A function $f:\Omega \to \CG$ is called strongly measurable $($Bochner-measurable\hspace{0.2mm}$)$ if there exists a sequence $(f_n)_{n \in \NN}$ of simple functions converging to $f$ almost everywhere.
	\end{definition}
	By the Pettis theorem \citep[Theorem 1.1.20]{hytonen2016analysis}, this is  equivalent to requiring the weak measurability of $f$ and that $f(\Omega) \subseteq \CG$ is separable.
	
	\begin{definition}
		Let $1 \leq p < \infty$. Then, the vector valued $\mathcal{L}_p(\Omega, \mathcal{A}, \BP; \CG)$-space is defined as
		\begin{equation}
			\mathcal{L}_p(\Omega, \mathcal{A}, \BP; \CG )\! =\mathcal{L}(\BP; \CG )\! \doteq\! \Big\{ f\! : \Omega \to \CG \,\big|\, f \,\textit{is strongly measurable and} \int \!\norm{f}_\CG^p\! \dd \BP < \infty \Big\}.\!
		\end{equation}
		A semi-norm on $\mathcal{L}_p(\Omega, \mathcal{A}, \BP; \CG)$ is defined by
		\begin{equation}
			\norm{f}_{L_p(\BP; \CG)}\, \doteq \,\Big( \int \norm{f}_\CG^p \dd \BP\Big)^\frac{1}{p}\!\!\!, \quad \textit{for }\;f \in \mathcal{L}_p(\Omega, \mathcal{A}, \BP; \CG).
		\end{equation}
	\end{definition}
	Taking equivalence classes on $\mathcal{L}_p(\Omega, \mathcal{A}, \BP; \CG)$ w.r.t. semi-norm $\norm{\cdot}_{L_p}$ yields the vector valued $L_p$ space (Bochner space) denoted by $L_p(\Omega, \mathcal{A}, \BP; \CG)$, or simply by $L_p(\Omega, \BP; \CG)$. 
	\begin{proposition}
		Space  $(L_p(\Omega, \mathcal{A}, \BP; \CG), \norm{\cdot}_{L_p(\BP;\CG)})$ is a Banach space for $1  \leq p < \infty$.
	\end{proposition}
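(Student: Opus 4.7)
The plan is to adapt the classical Riesz--Fischer argument for scalar $L_p$ spaces to the vector-valued Bochner setting, exploiting that $\CG$ is itself a Banach space to take absolutely convergent series. Let $(f_n)_{n\in\NN}$ be a Cauchy sequence in $L_p(\BP;\CG)$. First I would extract a rapidly convergent subsequence $(f_{n_k})_{k\in\NN}$ with $\|f_{n_{k+1}} - f_{n_k}\|_{L_p(\BP;\CG)} \le 2^{-k}$, and then define the nonnegative scalar functions $g_K(\omega) \doteq \sum_{k=1}^K \|f_{n_{k+1}}(\omega) - f_{n_k}(\omega)\|_\CG$ and $g(\omega) \doteq \lim_K g_K(\omega) \in [0,\infty]$. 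By Minkowski's inequality in the scalar space $L_p(\Omega,\mathcal{A},\BP)$, $\|g_K\|_{L_p(\BP)} \le \sum_{k=1}^K 2^{-k} \le 1$, and the monotone convergence theorem gives $\|g\|_{L_p(\BP)}\le 1$. In particular $g(\omega)<\infty$ for $\BP$-a.e.\ $\omega$.

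Next I would construct the candidate limit. On the set where $g(\omega)<\infty$, the telescoping series $f_{n_1}(\omega) + \sum_{k=1}^\infty\!\big(f_{n_{k+1}}(\omega)-f_{n_k}(\omega)\big)$ is absolutely convergent in the Banach space $\CG$, and hence converges in $\CG$; denote its sum by $f(\omega)$, and set $f(\omega)\doteq 0$ on the exceptional null set. By construction $f_{n_k}(\omega)\to f(\omega)$ in $\CG$ for $\BP$-a.e.\ $\omega$, and the pointwise bound $\|f(\omega)\|_\CG \le \|f_{n_1}(\omega)\|_\CG + g(\omega)$ a.e.\ together with Minkowski shows $\int\|f\|_\CG^p\dd\BP<\infty$.

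It remains to check strong measurability of $f$ and that the full sequence converges. For strong measurability I would invoke the Pettis theorem already cited in the paper: each $f_{n_k}$ takes values in a separable subspace of $\CG$, the countable union of these subspaces is still separable, and $f$ is the a.e.\ $\CG$-norm limit of $(f_{n_k})$, hence weakly measurable and essentially separably valued, so strongly measurable. Thus $f\in L_p(\BP;\CG)$. Finally, to upgrade subsequence convergence to Cauchy-sequence convergence, fix $\varepsilon>0$ and $N$ with $\|f_n-f_m\|_{L_p(\BP;\CG)}<\varepsilon$ for $n,m\ge N$; then Fatou's lemma applied to the scalar integrands $\|f_n-f_{n_k}\|_\CG^p$ yields
\begin{equation}
\int\|f_n-f\|_\CG^p\dd\BP \,\le\, \liminf_{k\to\infty}\int\|f_n-f_{n_k}\|_\CG^p\dd\BP \,\le\, \varepsilon^p,
\end{equation}
for all $n\ge N$, so $f_n\to f$ in $L_p(\BP;\CG)$.

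The only nontrivial step is the strong measurability of the pointwise a.e.\ limit $f$, since the Bochner framework requires more than weak measurability. This is where Pettis's theorem is essential: everything else is a direct transcription of the scalar Riesz--Fischer proof, with absolute convergence in $\RR$ replaced by absolute convergence in the Banach space $\CG$ and the triangle inequality $\|\cdot\|_\CG$ playing the role of $|\cdot|$.
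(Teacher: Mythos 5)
Your proof is correct. The paper itself does not prove this proposition; it simply cites standard references (Dinculeanu, Hyt\"onen et al., Pisier), and your argument is precisely the standard Riesz--Fischer-type completeness proof given there: rapid subsequence, scalar Minkowski plus monotone convergence to get a.e.\ absolute convergence of the telescoping series in the Banach space $\CG$, Fatou to upgrade to convergence of the full sequence, and Pettis (or, more directly, the fact that an a.e.\ limit of strongly measurable functions is strongly measurable) to keep the limit in the Bochner class. The only cosmetic omission is the routine verification that $\norm{\cdot}_{L_p(\BP;\CG)}$ is a norm on the quotient space (pointwise triangle inequality in $\CG$ plus scalar Minkowski), which is immediate; the substantive content, completeness, is handled correctly.
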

	Several of the well-known theorems about Lebesgue integrals can be extended to Bochner integrals, see \citep{dinculeanu2000vector, hytonen2016analysis, pisier2016martingales}.
	\begin{proposition}[Bochner's inequality] \label{ineq:bochner} For all $f \in L_1 (\Omega, \mathcal{A}, \BP; \CG)$, we have
		\begin{equation}
			\norm{\,\int f \dd \BP\, }_\CG \leq \int \norm{f}_\CG \dd \BP.
		\end{equation}
	\end{proposition}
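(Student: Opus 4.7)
The plan is to follow the standard two-step construction of the Bochner integral: first verify the inequality for simple functions by reducing it to the triangle inequality in $\CG$, then extend by density and continuity to all of $L_1(\BP;\CG)$.

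First I would handle the simple function case. Write $f = \sum_{j=1}^n \BI(\,\cdot \in A_j)\, g_j$ with the $A_j \in \mathcal{A}$ pairwise disjoint (which can always be arranged by refining the partition generated by the $A_j$). Then by the very definition of the Bochner integral on simple functions,
\begin{equation}
\norm{\int f \dd \BP}_\CG \,=\, \norm{\sum_{j=1}^n \BP(A_j)\, g_j}_\CG \,\leq\, \sum_{j=1}^n \BP(A_j)\, \norm{g_j}_\CG \,=\, \int \norm{f}_\CG \dd \BP,
\end{equation}
where the inequality is just the triangle inequality in the Banach space $\CG$ together with the nonnegativity of $\BP(A_j)$, and the final identity holds because $\norm{f(\omega)}_\CG = \sum_{j=1}^n \BI(\omega \in A_j)\,\norm{g_j}_\CG$ by disjointness.

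Next I would pass to general $f \in L_1(\Omega,\mathcal{A},\BP;\CG)$. By the construction of the Bochner integral (and the definition of $L_1(\BP;\CG)$ as the completion of simple functions under $\norm{\cdot}_{L_1(\BP;\CG)}$), there exists a sequence $(f_n)_{n \in \NN}$ of simple functions with $\norm{f_n - f}_{L_1(\BP;\CG)} \to 0$, and the Bochner integral $\int f \dd \BP$ is defined as the $\CG$-limit of the vectors $\int f_n \dd \BP$. Applying the simple-function case to each $f_n$ gives
\begin{equation}
\norm{\int f_n \dd \BP}_\CG \,\leq\, \int \norm{f_n}_\CG \dd \BP.
\end{equation}
On the left, the continuity of the norm on $\CG$ implies $\norm{\int f_n \dd \BP}_\CG \to \norm{\int f \dd \BP}_\CG$. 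On the right, the reverse triangle inequality in $\CG$ yields the pointwise bound $\bigl|\,\norm{f_n}_\CG - \norm{f}_\CG\,\bigr| \leq \norm{f_n - f}_\CG$, so
\begin{equation}
\Big|\int \norm{f_n}_\CG \dd \BP - \int \norm{f}_\CG \dd \BP\Big| \,\leq\, \int \norm{f_n - f}_\CG \dd \BP \,=\, \norm{f_n - f}_{L_1(\BP;\CG)} \,\to\, 0.
\end{equation}
Taking $n\to\infty$ on both sides of the simple-function inequality yields the desired bound.

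No step is genuinely hard here; the only thing to be careful about is ensuring the approximating sequence used to extend from simple functions is the same one that defines $\int f \dd \BP$, so that the convergence $\int f_n \dd \BP \to \int f \dd \BP$ in $\CG$ may be invoked. This is built into the completion-based definition of the Bochner integral recalled in Appendix A, so the argument goes through verbatim.
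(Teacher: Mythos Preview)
Your argument is correct and is the standard textbook proof of Bochner's inequality: establish it for simple functions via the triangle inequality in $\CG$, then pass to the closure in $L_1(\BP;\CG)$ using continuity of the norm and of the integral. There is nothing to compare against here, since the paper does not supply its own proof of this proposition; it merely records Bochner's inequality as a known fact in Appendix~A, with references to \citep{dinculeanu2000vector, hytonen2016analysis, pisier2016martingales}. Your write-up is exactly the kind of argument one finds in those sources.
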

	Moreover, $f$ is Bochner integrable if and only if the right hand side is finite, however, recall that $f$ is required to be strongly measurable to be included in $L_1 (\Omega, \mathcal{A}, \BP; \CG)$. Furthermore, observe that $\int \norm{f}_\CG \dd \BP = \norm{f}_{L_1(\BP;\CG)}$, hence the Bochner integral is a continuous operator from $L_1(\Omega, \mathcal{A}, \BP; \CG)$ to Banach space $\CG$. By definition, the Bochner integral is also linear.
	\begin{proposition} \label{prop:bounded_lin} Let $E$ and $F$ be Banach spaces and $T: E \to F$ be a bounded linear operator. Then, for all $f \in L_1 (\BP; E)$, we have that $T \circ f \in L_1(\BP;F)$ and
		\begin{equation}
			T\Big(\int f \dd \BP \Big) = \int T(f) \dd \BP.
		\end{equation}
		In particular, for a bounded linear functional $x^*: \CG \to \RR$, we have
		\begin{equation}
			\Big\langle \int_\Omega f(\omega) \dd \BP(\omega) , x^*\Big\rangle = \int_\Omega  \langle f(\omega) ,x^*\rangle \dd \BP(\omega),
		\end{equation}
		where $\langle g,x^*\rangle \doteq  x^*(g)$.
	\end{proposition}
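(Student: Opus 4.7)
The plan is to verify the claim first for simple functions (where it is essentially the definition) and then extend by continuity and density to arbitrary $f \in L_1(\BP; E)$. Before that, we must check that $T \circ f$ genuinely lies in $L_1(\BP; F)$, which splits into measurability and integrability.

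For strong measurability of $T \circ f$: since $f \in L_1(\BP;E)$ is strongly measurable, there is a sequence of simple functions $(f_n)$ with $f_n \to f$ almost everywhere in $E$. Because $T$ is bounded, hence continuous, $T \circ f_n \to T \circ f$ almost everywhere in $F$, and each $T \circ f_n$ is still a simple function (it takes finitely many values $T(g_j) \in F$ on the sets $A_j$). Thus $T \circ f$ is a pointwise-a.e.\ limit of $F$-valued simple functions and so is strongly measurable. For integrability, the operator bound $\|T(f(\omega))\|_F \leq \|T\|\,\|f(\omega)\|_E$ integrates to
\begin{equation}
  \int_\Omega \|T(f)\|_F \dd\BP \;\leq\; \|T\|\cdot \|f\|_{L_1(\BP;E)} \;<\; \infty,
\end{equation}
so $T\circ f \in L_1(\BP;F)$.

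Next I would prove the identity itself in two stages. For a simple function $f = \sum_{j=1}^n \BI(A_j)\,g_j$, the definition of the Bochner integral and the linearity of $T$ give directly
\begin{equation}
  T\Big(\int f \dd\BP\Big) \,=\, T\Big(\sum_{j=1}^n \BP(A_j)\, g_j\Big) \,=\, \sum_{j=1}^n \BP(A_j)\, T(g_j) \,=\, \int T(f) \dd\BP,
\end{equation}
using that $T\circ f$ is the simple function with values $T(g_j)$ on $A_j$. For general $f \in L_1(\BP;E)$, pick a sequence of simple functions $(f_n)$ with $\|f_n - f\|_{L_1(\BP;E)} \to 0$; such a sequence exists by the very construction of the Bochner integral as the completion of simple functions in the $L_1$ norm. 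The Bochner integral is a continuous linear map $L_1(\BP;E) \to E$ by Bochner's inequality (Proposition~\ref{ineq:bochner}), and $T$ is continuous, so $T(\int f_n \dd\BP) \to T(\int f \dd\BP)$. On the other hand, $\|T\circ f_n - T \circ f\|_{L_1(\BP;F)} \leq \|T\|\,\|f_n - f\|_{L_1(\BP;E)} \to 0$, so $\int T(f_n)\dd\BP \to \int T(f)\dd\BP$, again by continuity of the Bochner integral as a map $L_1(\BP;F) \to F$. Equating the two limits and using the already-established simple-function identity yields the claim.

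The corollary for bounded linear functionals is the specialization $F = \RR$, $T = x^*$, which is immediate. I do not expect a serious obstacle here; the only delicate point is being careful that the approximating sequence $(f_n)$ is chosen to converge in the $L_1(\BP;E)$ norm (not merely a.e.), which is automatic from the construction of $L_1(\BP;E)$ as the completion of simple functions under $\|\cdot\|_{L_1(\BP;E)}$, so that both continuity arguments (for $T$ composed with the integral, and for the integral composed with $T$) apply uniformly.
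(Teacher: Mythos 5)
Your proof is correct. The paper itself does not prove Proposition \ref{prop:bounded_lin}; it states it as a known fact about Bochner integrals and points to the standard references (Hyt\"onen et al., Pisier, Dinculeanu), where the argument given is exactly the one you wrote: check $T\circ f$ is strongly measurable (push an a.e.\ approximating sequence of simple functions through the continuous map $T$) and integrable (via $\|T(f(\omega))\|_F \leq \|T\|\,\|f(\omega)\|_E$), verify the identity directly for simple functions by linearity, and pass to the limit using Bochner's inequality for the continuity of both integrals together with the boundedness of $T$. The only point worth making explicit is the one you already flag: the approximating simple functions must converge in $L_1(\BP;E)$, not merely a.e.; given the paper's definition of the Bochner space as the $\|\cdot\|_{L_1(\BP;E)}$-completion of simple functions this is immediate, and in the concrete-function formulation it follows from the a.e.\ approximation plus dominated convergence (e.g.\ after the truncation $\|f_n\|\leq 2\|f\|$ used in the paper's proof of Theorem \ref{denseness}). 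The specialization to a bounded linear functional $x^*$ is, as you say, immediate.
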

	
	Now, let $(\CG, \langle \cdot, \cdot\rangle_\CG)$ be a Hilbert space. 
	By Proposition \ref{prop:bounded_lin} for all $g \in \CG$,  we have 
	\begin{equation}\label{eq:7}
		\Big\langle \int_\Omega f(\omega) \dd \BP(\omega) , g\Big\rangle_\CG = \int_\Omega  \langle \mu(\omega) ,g\rangle_\CG \dd \BP(\omega).
	\end{equation}
Moreover, it can be proved that $L_2(\Omega, \mathcal{A}, \BP; \CG)$ is also a Hilbert space with the following inner product
\begin{equation}
		\langle \mu_1 , \mu_2 \rangle_{L_2(\Omega, \BP;\CG)} \,\doteq \int \langle \mu_1(\omega) , \mu_2(\omega) \rangle_\CG \dd \BP(\omega).
	\end{equation}

	\subsection*{Conditional Expectation in Banach Spaces}
	
	Let $\BP$ be a probability measure and $\CB$ a sub-$\sigma$-algebra of $\CA$. The conditional expectation with respect to $\CB$ is defined for any $f \in L_1(\Omega, \CA, \BP; \CG)$ as the $\BP$-a.s.\ unique element $\EE\big[\hspace{0.2mm} f \,|\, \CB\hspace{0.3mm}\big]  \doteq \nu \in L_1(\Omega,\CB, \BP;\CG)$ satisfying
	\begin{equation}
		\int_B \nu  \dd \BP\, = \int_B f \dd \BP, \quad \text{for all } B \in \CB.  
	\end{equation}
	It can be shown that the conditional expectation is a (well-defined) bounded operator from $L_1(\Omega,\CA,\BP;\CG)$ to $L_1(\Omega, \CB, \BP;\CG)$ \cite[Theorem 2.6.23]{hytonen2016analysis}, \cite[Proposition 1.4]{pisier2016martingales}. The nonexpanding property also holds in the $L_p(\BP;\CG)$ sense for all $1 \leq p < \infty$, see \cite[Corollary 2.6.30]{hytonen2016analysis}, that is for all $f \in  L_p(\BP;\CG)$ we have
	\begin{equation}\label{eq:nonexpansive}
		\norm{\,\EE\big[\,f \,|\, \CB\,\big]\, }_\CG^p \,\leq\, \EE\big[\,\norm{f}_\CG^p\,|\,\CB \,\big],
	\end{equation}
	and by Bochner's inequality $\norm{\EE\big[\hspace{0.2mm}f \,|\, \CB\hspace{0.3mm}\big]}_{L_p(\Omega; \CG)} \leq \norm{f}_{L_p(\Omega;\CG)}$.
	Furthermore, by \cite[Prop. 2.6.31.]{hytonen2016analysis} we can take out the $\CB$-measurable term from the conditional expectation. In particular, if $\CG$ is a Hilbert space, then for all $g \in \CG$ and $f \in L_2(\Omega, \CA, \BP; \CG)$ we (a.s.), have that
	\begin{equation}\label{cont-innerprod}
		\langle\, \EE\big[\, \mu \,|\, \CB\,\big], g \,\rangle_\CG \,= \,\EE \big[\, \langle \mu, g \rangle_\CG\,|\,\CB\,\big].
	\end{equation}
	
	\section*{Appendix B.}

	In the proof of the generalized Stone's theorem a continuous approximation of the conditional kernel mean map is needed. The existence of such approximation is ensured by the following theorem.
	\begin{theorem}\label{denseness}
		Let $\BX$ be  a locally compact Hausdorff space equipped with a Radon measure\,\footnote{\cl{A Radon measure is a regular Borel measure.}} $\QX$ and let $\CG$ be  a separable Hilbert space. The space of compactly supported continuous functions, $\mathcal{C}_c(\BX; \CG)$, is dense in $L_2(\BX, \QX; \CG)$, that is for all $\mu \in L_2(\BX, \QX; \CG)$ and $\varepsilon > 0$, there exists a $\tilde{\mu} \in \mathcal{C}_c(\BX; \CG)$ such that 
		\begin{equation}
			\int_\BX \norm{\mu(x) - \tilde{\mu}(x) }_\CG^2 \dd \QX (x)\, <\, \varepsilon.
		\end{equation}
	\end{theorem}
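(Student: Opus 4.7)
The plan is to prove denseness of $\mathcal{C}_c(\BX;\CG)$ in $L_2(\BX,\QX;\CG)$ in two stages: first approximate an arbitrary $\mu\in L_2(\BX,\QX;\CG)$ by a $\CG$-valued simple function, then approximate each simple building block by a compactly supported continuous $\CG$-valued function, exploiting the Radon regularity of $\QX$ on the locally compact Hausdorff space $\BX$.

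For the first stage, I would invoke the defining property of Bochner spaces: every strongly measurable $\mu$ is the pointwise almost everywhere limit of simple functions $s_n = \sum_{j} \BI_{A_{n,j}}\cdot v_{n,j}$. By the Pettis theorem, separability of $\CG$ guarantees that such approximations exist, and truncating so that $\norm{s_n(x)}_\CG \le 2\norm{\mu(x)}_\CG$ supplies the integrable majorant $4\norm{\mu}_\CG^2$, so dominated convergence upgrades pointwise convergence to convergence in $L_2(\BX,\QX;\CG)$. Since $\int\norm{\mu}_\CG^2\,\dd\QX<\infty$, one may further assume $\QX(A_{n,j})<\infty$ whenever $v_{n,j}\neq 0$. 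Consequently it suffices to approximate each block $\BI_A\cdot v$ with $\QX(A)<\infty$ and $v\in\CG$ in $L_2$-norm by an element of $\mathcal{C}_c(\BX;\CG)$.

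For the second stage, fix $\eta>0$; the case $v=0$ is trivial, so assume $v\neq 0$. Inner and outer regularity of the Radon measure $\QX$ produce a compact set $K\subseteq A$ and an open set $U\supseteq A$ with $\QX(U\setminus K)<\eta/\norm{v}_\CG^2$. Urysohn's lemma for locally compact Hausdorff spaces then furnishes a continuous $\phi:\BX\to[0,1]$ with compact support contained in $U$ and $\phi\equiv 1$ on $K$. Setting $\tilde{s}\doteq \phi\cdot v\in\mathcal{C}_c(\BX;\CG)$ and noting that $\phi(x)=\BI_A(x)$ on $K$ and outside $U$, while $|\phi-\BI_A|\le 1$ on $U\setminus K$, gives
\begin{equation}
\int_\BX \norm{\phi(x)v-\BI_A(x)v}_\CG^2\,\dd\QX(x)\,\le\,\norm{v}_\CG^2\cdot\QX(U\setminus K)\,<\,\eta.
\end{equation}

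To conclude, given $\mu\in L_2(\BX,\QX;\CG)$ and $\varepsilon>0$, pick a simple function $s$ with $\norm{\mu-s}_{L_2(\QX;\CG)}<\sqrt{\varepsilon}/2$ via stage one, then replace each of its finitely many indicator blocks by the continuous approximation from stage two, distributing the remaining error budget $\sqrt{\varepsilon}/2$ across the blocks. The resulting $\tilde\mu\in\mathcal{C}_c(\BX;\CG)$ satisfies the desired bound by the triangle inequality. The only real subtlety, rather than a genuine obstacle, is ensuring that the vector-valued simple function approximation in stage one converges in $L_2$-norm: this is where separability of $\CG$ (via Pettis) and the dominated convergence argument enter. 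The remainder is a direct transcription of the classical scalar denseness argument to the Hilbert-valued setting, with the Radon property of $\QX$ doing the geometric work.
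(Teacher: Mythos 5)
Your proposal is correct and follows essentially the same route as the paper: truncated simple-function approximation with dominated convergence for the first stage, then Radon regularity (compact $K\subseteq A\subseteq U$ open) combined with Urysohn to replace each indicator block by a compactly supported continuous function, with the error split over the finitely many blocks. The only cosmetic difference is that you invoke the compactly supported form of Urysohn's lemma directly, which packages the paper's two auxiliary topological lemmas (the interpolating open set $E$ with compact closure and the classical Urysohn statement) into one step.
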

        A similar result is proved for $\RR^d$ in \cite[Lemma 1.2.31]{hytonen2016analysis}. They mention in a remark that their lemma can be generalized for locally compact spaces, but the complete proof of their claim were not found by us in the literature. Nevertheless, the presented argument has a similar flavor as the ones for scalar valued $L_p$ spaces.
        \smallskip
	\begin{proof}
	First, we prove that we can approximate $\mu$ with \cl{simple functions} in $L_2(\BX, \QX; \CG)$. Let $\mu \in L_2(\BX , \QX; \CG)$ and $\varepsilon >0$. There exists a sequence of  \cl{simple functions} $(\mu_n)_{n\in \NN}$ such that $\mu_n \to \mu$ a.s., i.e., $\mu_n(x) \doteq \sum_{i=1}^{k_n} g_i \,\BI(x \in A_i)$ where $g_i \in \CG$ and $A_i$ is measurable with $\QX(A_i) < \infty$ for all $i \in [k_n]$. Let $B_n \doteq \{x \in \BX: \norm{\mu_n(x)}_\CG \leq 2 \norm{\mu(x)}_\CG\}$ and 
	\begin{equation}
		h_n(x) \doteq  \BI(x \in B_n) \, \mu_n(x).
	\end{equation}
	One can see that the $(B_n)_{n\in \NN}$ sequence consists of measurable sets and $h_n \to \mu$ almost everywhere. We show that $h_n \to \mu$ in $L_2(\BX, \QX;\CG)$. Because of 
	\begin{equation}
		\norm{h_n}^2_{L_2(\BX;\CG)} =   \int_{B_n} \norm{ \mu_n(x) }_\CG^2 \dd \QX(x) \leq 2^2 \, \int_{\cl{\BX}} \norm{\mu(x)}^2_\CG \dd \QX (x),
	\end{equation}
	we have $h_n \in L_2(\BX, \QX;\CG)$ for $n \in \NN$. In addition, almost everywhere we have
	\begin{equation}
		\norm{\mu(x) - h_n(x) }_\CG^2 \leq (3\hspace{0.3mm} \norm{\mu(x)}_\CG)^2,
	\end{equation}
	therefore, by the dominated convergence theorem
	\begin{equation}
	\begin{aligned}
		\lim_{n\to \infty} \norm{\mu - h_n }^2_{L_2(\BX, \QX;\CG)} &=\lim_{n \to \infty} \int_\BX \norm{\mu(x) - h_n(x) }_\CG^2\dd \QX(x)\\
		& = \int_\BX \lim_{n \to \infty}  \norm{\mu(x) - h_n(x) }_\CG^2\dd \QX(x) = 0.
	\end{aligned}
	\end{equation}
	In conclusion, for any $\varepsilon >0$ there exists a simple function $h \in L_2(\BX, \QX; \CG)$ such that $\norm{\mu - h}_{L_2(\BX, \QX;\CG)} < \varepsilon$. Therefore, it is sufficient to approximate $h$ with a continuous function on a compact support. Let $h \in L_2(\BX, \QX; \CG)$ be 
	\begin{equation}
		\begin{aligned}
			h(x) = \sum_{i=1}^N g_i  \, \BI(x \in A_i),
		\end{aligned}
	\end{equation}
	where $g_i \in \CG$ and $\QX(A_i) < \infty$ for $i \in [N]$. The main idea is to approximate the indicator functions, $\{g_i \cdot \BI(x \in A_i)\}_{i=1}^n$, separately. Let us fix $i=1$ and $\varepsilon > 0$.
	Since $\QX$ is Radon and $A_1$ is measurable there exists a compact set $K\subseteq \BX$ such that $K \subseteq A_1$ and
    \begin{equation}
        \QX(A_1 \setminus K) < \frac{\varepsilon^2}{2 N^2\, \norm{g_1}_\CG^2},
    \end{equation}
    and there exists an open set $U$ such that $A_1 \subseteq U$ with
    \begin{equation}
        \QX(U\setminus A_1) <\frac{\varepsilon^2}{2 N^2\, \norm{g_1}_\CG^2}.
    \end{equation}
    The application of Lemma \ref{lemma:LCH} yields that there exists an open set $E$ such that $\bar{E}$ is compact and $K \subseteq E \subseteq \bar{E} \subseteq U$. Because of Lemma \ref{lemma:Ury} there is a continuous function $f_1$ such that $f_1\arrowvert_K = 1$ and $f_1\arrowvert_{\BX\setminus E} =0$, from which it follows that $f_1$ has a compact support. For $h_1(x)\doteq g_1 \cdot \BI(x \in A_1)$ and $\tilde{h}_1(x)\doteq g_1 \cdot f_1(x)$ we have
	\begin{equation}
	\begin{aligned}
		&\int_\BX \norm{h_1(x) - \tilde{h}_1(x)}_\CG^2 \dd \QX (x) = \int_\BX \norm{g_1 \,\BI(x \in A_1) - g_1 \, f_1(x)}_\CG^2 \dd \QX(x)\\
		&=  \norm{g_1}_\CG^2  \, \int_\BX \big( \BI(x \in A_1) -  f_1(x) \big)^2 \dd \QX(x) \leq \norm{g_1}_\CG^2  \, \int_{E\setminus K} 1  \dd \QX(x) \\
		&= \norm{g_1}_\CG^2 \, \QX(E \setminus K ) \leq \norm{g_1}_\CG^2 \, \QX(U \setminus K ) < \frac{\varepsilon^2}{N^2}.
	\end{aligned}	
	\end{equation}
	Similarly, one can construct $\tilde{h}_i$ which is close to $h_i$ in $L_2(\BX, \QX; \CG)$ for all $i \in [N]$. Let $\tilde{\mu} \doteq \sum_{i=1}^N \tilde{h}_i$. Obviously, $\tilde{\mu}$ is continuous and has a compact support. Furthermore, by the triangle inequality we have
	\begin{equation}
	\begin{aligned}
		&\sqrt{\int_\BX \norm{h(x) - \tilde{\mu}(x)}_\CG^2\dd \QX (x)} = \sqrt{\int_\BX \Big\|\sum_{i=1}^N g_i \big(\BI(x \in A_i) - f_i(x) \big) \Big\|_\CG^2\dd \QX (x)}\\
		& \leq\sum_{i=1}^N \sqrt{\int_\BX \big\| g_i \big(\BI(x \in A_i) - f_i(x) \big) \big\|_\CG^2\dd \QX(x)} \leq \sum_{i=1}^N \sqrt{\frac{\varepsilon^2}{N^2}}  \,\cl{=} \,\varepsilon,
	\end{aligned}
	\end{equation}
	which finishes the proof of the theorem.
	\end{proof}

	\begin{lemma}\label{lemma:sajat}
Let $\{(\BX_i, \mathcal{X}_i)\}_{i=1}^3$ be measurable spaces, 
let $X_i$, for $i\in \{1,2,3\}$, be $\BX_i$-valued independent random elements on a probability space $(\Omega, \CA, \BP)$ with push-forward measure $Q_i$.
        Let $f:\BX_1 \times \BX_2 \times \BX_3 \to \RR$ and $g: \BX_1 \times \BX_2 \to \RR$ be measurable functions such that $f(X_1,X_2, X_3)$ and $g(X_1, X_2)$ are in $L_1(\Omega, \BP)$. If for $Q_{\scriptscriptstyle 1}$-almost all $x \in \BX_1$ it holds that
		\begin{equation}\label{eq:inequality}
			\EE\big[\, f(x_1,X_2, X_3) \,|\, X_2\,\big] \,\leq\, g(x_1,X_2),
		\end{equation} 
		then almost surely
		\begin{equation}
			\EE\bigg[ \int_{\BX_1} f(x_1,X_2, X_3) \dd Q_{\scriptscriptstyle 1}(x_1) \,\Big|\, X_2\,\bigg] \,\leq\, \int_{\BX_1} g(x_1,X_2) \dd Q_{\scriptscriptstyle 1} (x_1).
		\end{equation} 
	\end{lemma}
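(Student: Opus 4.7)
The plan is to establish the conclusion by interchanging the order of integration against $Q_1$ and the conditional expectation with respect to $X_2$, an interchange that is justified by the joint independence of $X_1, X_2, X_3$ together with Fubini's theorem.

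First, I would observe that since $X_3$ is independent of $X_2$, for each fixed $x_1 \in \BX_1$ one has a canonical version of the conditional expectation given by integration against $Q_3$, namely
\begin{equation}
    \EE\big[\, f(x_1, X_2, X_3) \,|\, X_2 \,\big] \,=\, \int_{\BX_3} f(x_1, X_2, x_3) \dd Q_3(x_3) \qquad \text{(a.s.)}.
\end{equation}
In particular, the hypothesis reads, for $Q_1$-almost every $x_1$,
\begin{equation}
    \int_{\BX_3} f(x_1, X_2, x_3) \dd Q_3(x_3) \,\leq\, g(x_1, X_2) \qquad \text{(a.s.)}.
\end{equation}

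Next, I would rewrite the left-hand side of the claimed inequality in the same manner. Set $h(x_2, x_3) \doteq \int_{\BX_1} f(x_1, x_2, x_3) \dd Q_1(x_1)$, which is measurable and almost surely finite by classical Fubini--Tonelli applied to $f$; joint integrability with respect to $Q_1 \otimes Q_2 \otimes Q_3$ follows from $f(X_1,X_2,X_3) \in L_1(\BP)$ and the independence assumption. Using once more the independence of $X_3$ from $X_2$,
\begin{equation}
    \EE\bigg[\int_{\BX_1} f(x_1, X_2, X_3) \dd Q_1(x_1) \,\Big|\, X_2\bigg] \,=\, \int_{\BX_3} h(X_2, x_3) \dd Q_3(x_3) \qquad \text{(a.s.)}.
\end{equation}
Then I would apply Fubini's theorem to exchange the $Q_1$- and $Q_3$-integrals, which is legitimate because for $Q_2$-almost every realization of $X_2$ the function $(x_1, x_3) \mapsto f(x_1, X_2, x_3)$ lies in $L_1(Q_1 \otimes Q_3)$; this yields
\begin{equation}
    \int_{\BX_3} \int_{\BX_1} f(x_1, X_2, x_3) \dd Q_1(x_1) \dd Q_3(x_3) \,=\, \int_{\BX_1} \int_{\BX_3} f(x_1, X_2, x_3) \dd Q_3(x_3) \dd Q_1(x_1).
\end{equation}
Combining this identity with the pointwise inequality from the first step and the monotonicity of the $Q_1$-integral, the right-hand side is bounded above by $\int_{\BX_1} g(x_1, X_2) \dd Q_1(x_1)$, which is the desired conclusion.

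The main obstacle is the careful bookkeeping required to assemble a jointly measurable version of $x_1 \mapsto \EE[f(x_1, X_2, X_3) \,|\, X_2]$ so that integrating it against $Q_1$ makes sense, and simultaneously to justify the Fubini swap inside the conditional expectation. Both points rest entirely on the mutual independence of $X_1, X_2, X_3$ and the $L_1$-integrability assumption; once the $Q_3$-representation of the conditional expectation is in hand, everything else reduces to classical Fubini--Tonelli and monotonicity of integrals.
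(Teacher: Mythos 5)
Your proposal is correct and follows essentially the same route as the paper's own proof: both represent the conditional expectation given $X_2$ as integration against $Q_3$ (using independence and the fact that the joint law is $Q_1\otimes Q_2\otimes Q_3$, which also supplies the $L_1(Q_1\otimes Q_3)$ integrability needed for Fubini), swap the $Q_1$- and $Q_3$-integrals, and then integrate the pointwise hypothesis over $Q_1$. The only difference is presentational ordering, so no further comparison is needed.
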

	\begin{proof}
		Integrating out both sides in \eqref{eq:inequality} w.r.t.\ $Q_{\scriptscriptstyle 1}$ yields
		\begin{equation}
		\begin{aligned}
		  &\int_{\BX_1} \EE\Big[ f(x_1,X_2, X_3) \, | \, X_2 \Big] \dd  Q_{\scriptscriptstyle 1} (x_1) \leq \int_{\BX_1} g(x_1, X_2) \dd Q_{\scriptscriptstyle 1}(x_1)= \EE[g(X_1, X_2)].
		\end{aligned}
        \end{equation} 
        In addition, the left hand side is
        \begin{equation}
        \begin{aligned}
            &\EE\Big[ \int_\BX f(x_1,X_2, X_3) \dd  Q_{\scriptscriptstyle 1} (x_1) \,\big|\, X_2\Big] = \int_{\BX_3} \int_{\BX_1} f(x_1,X_2, x_3) \dd  Q_{\scriptscriptstyle 1} (x_1) \dd  Q_{\scriptscriptstyle 3} (x_3) \\
&= \int_{\BX_1} \int_{\BX_3} f(x_1,X_2, x_3) \dd  Q_{\scriptscriptstyle 3} (x_3) \dd  Q_{\scriptscriptstyle 1} (x_1) = \int_{\BX_1} \EE\Big[ f(x_1,X_2, X_3) \, | \, X_2 \Big] \dd  Q_{\scriptscriptstyle 1} (x_1),
        \end{aligned}
        \end{equation}
        \cl{because} of Fubini's theorem.
	\end{proof}
\section*{Appendix C}
	\noindent The details of the proof for strong consistency in Theorem \ref{thm:sajat} are presented in this section.
        \smallskip
        \begin{proof}
                  By the weak consistency of $(\mu_n)_{n \in \NN}$, Fubini's theorem and \eqref{eq:nonexpansive} we have
     	\begin{equation}
     	\begin{aligned}
     		\int \norm{\, \EE [\mu_n(x)] - \mu_*(x)\,}_\CG^2 \dd \QX (x)\, \leq\, \EE \int \norm{\, \mu_n(x) - \mu_*(x)\,}_\CG^2 \dd \QX (x) \to 0.
     	\end{aligned}
     	\end{equation}
     	Because of $(a+b)^2 \leq 2 a^2 + 2b^2$, we have
     	\begin{equation}\label{eq:inprob}
     		\begin{aligned}
     			 &\EE \int \norm{ \mu_n(x) - \EE [\mu_n(x)]}_\CG^2 \dd \QX (x) \\
     			 &\leq 2 \Big( \EE \int \norm{ \mu_n(x) - \mu_*(x)}^2_\CG \dd \QX(x) + \int \norm{\mu_*(x) - \EE[\mu_n(x)]}_\CG^2 \dd \QX(x) \Big)\to 0.
     		\end{aligned}
     	\end{equation}
     	Since
     	\begin{equation}
     		\begin{aligned}
     			&\int \norm{ \mu_n(x) - \mu_*(x)}_\CG^2 \dd \QX (x) \\
     			&\leq 2 \Big( \int \norm{ \mu_n(x) - \EE[\mu_n(x)]}^2_\CG \dd \QX(x) + \int \norm{\mu_*(x) - \EE[\mu_n(x)]}_\CG^2 \dd \QX(x) \Big)
     		\end{aligned}
     	\end{equation}
     	holds, for the strong consistency of $(\mu_n)$ it is sufficient to prove that
      \begin{equation}\label{eq:convergence}
          \int \norm{ \mu_n(x) - \EE[\mu_n(x)]}^2_\CG \dd \QX(x) \to 0
      \end{equation}
      almost surely. By \eqref{eq:inprob} the integral admits a finite limit which must agree with the weak limit (which is zero).
Let us consider the following expansion:
     	\begin{equation} \label{eq:normsquare}
     	\begin{aligned}
     		&\mu_{n+1}(x) - \EE[\mu_{n+1}(x)] = \mu_n(x) - \EE [ \mu_n(x)] \\[1mm]
     		&\quad - a_{n+1} \big( \mu_n(x) k_{n+1}(x,X_{n+1}) - \EE \big[ \mu_n(x) k_{n+1}(x,X_{n+1}) \big] \big)\\[1mm]
     		&\quad + a_{n+1} \big( \ell(\cdot,Y_{n+1}) k_{n+1}(x,X_{n+1}) - \EE \big[ \ell(\cdot,Y_{n+1}) k_{n+1}(x,X_{n+1}) \big] \big).
     	\end{aligned}
     	\end{equation}
     	Let $\CF_n \doteq \sigma( X_1, Y_1, \dots, X_n, Y_n)$ for $n \in \NN$. Taking the conditional expectation of the norm square of \eqref{eq:normsquare} w.r.t. $\CF_n$ yields
     	\begin{align}
     			&\EE \Big[ \norm{\mu_{n+1}(x) - \EE[\mu_{n+1}(x)]}_\CG^2 \,|\, \CF_n \Big] = \norm{\mu_n(x) - \EE [ \mu_n(x)]}_\CG^2\nonumber\\
     			&+ a_{n+1}^2 \EE \Big[ \norm{  \mu_n(x) k_{n+1}(x,X_{n+1}) - \EE \big[ \mu_n(x) k_{n+1}(x,X_{n+1}) \big]}_\CG^2\, |\, \CF_n\Big]\nonumber\\
     			&+ a_{n+1}^2 \EE \Big[ \norm{ \ell(\cdot,Y_{n+1}) k_{n+1}(x,X_{n+1}) - \EE \big[  \ell(\cdot,Y_{n+1}) k_{n+1}(x,X_{n+1}) \big]}_\CG^2\, |\, \CF_n\Big]\nonumber\\
     			&-2 a_{n+1}\EE\Big[ \big\langle \mu_n(x) -\EE[\mu_n(x)], \mu_n(x) k_{n+1} (x,X_{n+1}) - \EE[\mu_n(x) k_{n+1}(x, X_{n+1})] \big\rangle_\CG \,\big|\,\CF_n\Big]\nonumber\\
     			&+2 a_{n+1} \EE\Big[ \big\langle \mu_n(x) -\EE[\mu_n(x)], \ell(\cdot, Y_{n+1}) k_{n+1} (x,X_{n+1}) - \EE[\ell(\cdot, Y_{n+1}) k_{n+1}(x, X_{n+1})] \big\rangle_\CG \,\big|\,\CF_n\Big]\nonumber\\
     			& - 2 a_{n+1}^2 \EE\Big[ \big\langle \mu_n(x) k_{n+1} (x,X_{n+1}) - \EE[\mu_n(x) k_{n+1}(x, X_{n+1})],\nonumber\\
     			& \hspace{3cm} \ell(\cdot, Y_{n+1}) k_{n+1} (x,X_{n+1}) - \EE[\ell(\cdot, Y_{n+1}) k_{n+1}(x, X_{n+1})] \big\rangle_\CG \,\big|\,\CF_n\Big]\nonumber\\
     			& = I_1 + I_2 + I_3 + I_4 + I_5 + I_6,
     	\end{align}
     	where $I_i$ are defined respectively for $i \in \{1,\dots,6\}$.
     	We bound these terms separately using the measurability condition, the independence of the sample and condition $(ii)$. Our main tool to prove almost sure convergence uses almost supermartingales, Theorem \ref{thm:robbins-ziegmund} from \citep{robbins1971convergence}. Therefore, our main task in this section is to bound each $I_i$ for $i \in \{1,\dots,6\}$ by $(1 + \alpha_n) \big( \norm{\mu_n(x) - \EE[ \mu_n(x)]}_\CG^2\big) + \beta_n$ where $\alpha_n$ and $\beta_n$ are small enough, i.e.,
        \vspace{-1mm}
        \begin{equation}
            \sum_{n \in \NN} \EE [\alpha_n] < \infty\qquad \text{ and }\qquad \sum_{n \in \NN} \EE [\beta_n] < \infty
        \end{equation}
        holds. Clearly, $I_1$ is bounded by itself. For the second term we have
        \begin{equation} 
     		\begin{aligned}
     			I_2 &\doteq a_{n+1}^2 \EE \Big[ \norm{  \mu_n(x) k_{n+1}(x,X_{n+1}) - \EE \big[ \mu_n(x) k_{n+1}(x,X_{n+1}) \big]}_\CG^2\, |\, \CF_n\Big]\\
     			&= a_{n+1}^2 \bigg( \EE \Big[ \norm{ \mu_n(x) k_{n+1}(x, X_{n+1}) }_\CG^2 \,|\,\CF_n\Big] + \norm{\EE [ \mu_n(x)] \EE [k_{n+1}(x,X_{n+1})]}_\CG^2\\
     			& \quad -2  \langle \mu_n(x) \EE[ k_{n+1}(x,X_{n+1})], \EE [ \mu_n(x)] \EE[ k_{n+1} (x,X_{n+1})]\rangle_\CG \bigg)\\     
     			& = a_{n+1}^2 \bigg( \norm{ \mu_n(x)}_\CG^2 \big( \EE [ k_{n+1}^2(x, X_{n+1})] - \big(\EE[k_{n+1}(x,X_{n+1})]\big)^2\big)\\
     			& \quad + \norm{\mu_n(x)}_\CG^2 \big(\EE[k_{n+1}(x,X_{n+1})]\big)^2 - 2 \langle \mu_n(x), \EE [ \mu_n(x)] \rangle_\CG \big(\EE[ k_{n+1} (x,X_{n+1})]\big)^2 \\
     			& \quad + \norm{\EE[\mu_n(x)]}_\CG^2  \big(\EE[ k_{n+1} (x,X_{n+1})]\big)^2\bigg)\\
     			& = a_{n+1}^2 \bigg( \norm{ \mu_n(x)}_\CG^2 \big( \EE [ k_{n+1}^2(x, X_{n+1})] - \big(\EE[k_{n+1}(x,X_{n+1})]\big)^2\big)\\
     			& \quad +\norm{\mu_n(x) - \EE[\mu_n(x)]}_\CG^2 \big(\EE[k_{n+1}(x,X_{n+1})]\big)^2\bigg)\\
     			& \leq \frac{H^2(0) a_{n+1}^2}{r_{n+1}^2} \big( \norm{\mu_n(x)}_\CG^2 + \norm{\mu_n(x) - \EE[ \mu_n(x)]}_\CG^2\big)
     		\end{aligned}
     	\end{equation}
        because recall that $r_{n+1} k(x, X_{n+1}) \leq H(0)$. We also used that $(X_{n+1}, Y_{n+1})$ is independent of $\CF_n$ and $\mu_n(x)$ is measurable w.r.t. $\CF_n$. The third term can be bounded as follows
     	\begin{equation} 
     		\begin{aligned}
     			I_3 &\doteq a_{n+1}^2 \EE \Big[ \norm{   \ell(\cdot,Y_{n+1}) k_{n+1}(x,X_{n+1}) - \EE \big[  \ell(\cdot,Y_{n+1}) k_{n+1}(x,X_{n+1}) \big]}_\CG^2\, |\, \CF_n\Big]\\
     			& = a_{n+1}^2 \bigg( \EE \Big[\norm{\ell(\cdot, Y_{n+1}) k_ {n+1}(x,X_{n+1})}_\CG^2\Big] + \norm{\EE \big[  \ell(\cdot,Y_{n+1}) k_{n+1}(x,X_{n+1}) \big]}_\CG^2\\
     			& \quad - 2 \EE \Big[ \langle  \ell(\cdot,Y_{n+1}) k_{n+1}(x,X_{n+1}), \EE \big[  \ell(\cdot,Y_{n+1}) k_{n+1}(x,X_{n+1}) \big]\rangle_\CG \Big]\bigg)\\
     			& = a_{n+1}^2 \Big( \EE \Big[\ell(Y_{n+1}, Y_{n+1}) k_{n+1}^2(x,X_{n+1})\Big] - \norm{\EE \big[  \ell(\cdot,Y_{n+1}) k_{n+1}(x,X_{n+1}) \big]}_\CG^2\Big)\\
    			& \leq \frac{H^2(0) a_{n+1}^2}{r_{n+1}^2} \EE [ \ell(Y,Y)].
     		\end{aligned}
     	\end{equation}
     	The fourth term is less than $0$ because
     	\begin{align} 
     			I_4 &= -2 a_{n+1}\EE\Big[ \big\langle \mu_n(x) -\EE[\mu_n(x)], \mu_n(x) k_{n+1} (x,X_{n+1}) - \EE[\mu_n(x) k_{n+1}(x, X_{n+1})] \big\rangle_\CG \,\big|\,\CF_n\Big]\nonumber\\
     			& = -2 a_{n+1} \big\langle \mu_n(x) -\EE[\mu_n(x)], \mu_n(x) \EE [k_{n+1} (x,X_{n+1})]- \EE[\mu_n(x)] \EE[k_{n+1}(x, X_{n+1})] \big\rangle_\CG\nonumber\\
     			& = -2 a_{n+1} \norm{\mu_n(x) - \EE[\mu_n(x)]}_\CG^2 \EE [ k_{n+1}(x,X_{n+1})] \leq 0.
     	\end{align}
     	It is easy to see that $I_5= 0$, because of independence we have
        \begin{equation}		
        \begin{aligned}
     	\EE\Big[ \big\langle \mu_n(x) -\EE[\mu_n(x)], \ell(\cdot, Y_{n+1}) k_{n+1} (x,X_{n+1}) - \EE[\ell(\cdot, Y_{n+1}) k_{n+1}(x, X_{n+1})] \big\rangle_\CG \,\big|\,\CF_n\Big]\\
     			 = \big\langle \mu_n(x) -\EE[\mu_n(x)], \EE[\ell(\cdot, Y_{n+1}) k_{n+1} (x,X_{n+1})] - \EE[\ell(\cdot, Y_{n+1}) k_{n+1}(x, X_{n+1})] \big\rangle_\CG.
     		\end{aligned}
                \end{equation}
     	For the last term we use the Cauchy-Schwartz inequality and $2ab \leq a^2 + b^2$ to show that
     	\begin{equation} 
     	\begin{aligned}
     			I_6 &= - 2 a_{n+1}^2 \EE\Big[ \big\langle \mu_n(x) k_{n+1} (x,X_{n+1}) - \EE[\mu_n(x) k_{n+1}(x, X_{n+1})],\\[-1.5mm]
     			& \hspace{3cm} \ell(\cdot, Y_{n+1}) k_{n+1} (x,X_{n+1}) - \EE[\ell(\cdot, Y_{n+1}) k_{n+1}(x, X_{n+1})] \big\rangle_\CG \,\big|\,\CF_n\Big]\\[-2mm]         
     			&= - 2 a_{n+1}^2 \bigg( \EE\Big[ \big\langle \mu_n(x) k_{n+1} (x,X_{n+1}) ,\ell(\cdot, Y_{n+1}) k_{n+1} (x,X_{n+1}) \big\rangle_\CG  \,\big|\,\CF_n\Big]\\
     			& \quad - \langle \EE[\mu_n(x) k_{n+1}(x, X_{n+1}) ], \EE[\ell(\cdot, Y_{n+1}) k_{n+1}(x, X_{n+1})\,|\, \CF_n] \big\rangle_\CG\\
     			& \quad -  \langle \EE[\mu_n(x) k_{n+1}(x, X_{n+1})\,|\, \CF_n ],\EE[\ell(\cdot, Y_{n+1}) k_{n+1}(x, X_{n+1})] \big\rangle_\CG\\
     			& \quad + \langle \EE[\mu_n(x) k_{n+1}(x, X_{n+1}) ], \EE[\ell(\cdot, Y_{n+1}) k_{n+1}(x, X_{n+1})\,|\, \CF_n] \big\rangle_\CG\bigg)\\
     			&  = -2 a_{n+1}^2 \EE\big[ k_{n+1}^2 (x,X_{n+1}) \big\langle \mu_n(x)  ,\ell(\cdot, Y_{n+1}) \big\rangle_\CG \,\big|\,\CF_n\big]\\
     			& \quad + 2 a_{n+1}^2 \EE[ k_{n+1}(x, X_{n+1}) ] \langle \mu_n(x) ,\EE[\ell(\cdot, Y_{n+1}) k_{n+1}(x, X_{n+1})] \big\rangle_\CG\\
     			&\leq 2 a_{n+1}^2\EE [ k_{n+1}^2 (x,X_{n+1}) \norm{\mu_n(x)}_\CG \sqrt{\ell(Y_{n+1}, Y_{n+1})}\,|\,\CF_n ]\\
     			& \quad + 2 a_{n+1}^2 \EE[ k_{n+1}(x, X_{n+1}) ] \norm{\mu_n(x)}_\CG \norm{\EE[\ell(\cdot , Y_{n+1}) k_{n+1}(x, X_{n+1})]}_\CG\\
     			&\leq  2 a_{n+1}^2 \norm{\mu_n(x)}_\CG \EE \bigg[ \frac{H^2(0)}{r_{n+1}^2} \sqrt{\ell(Y_{n+1}, Y_{n+1})}\,|\,\CF_n \bigg]\\
     			& \quad +  2 a_{n+1}^2 \norm{\mu_n(x)}_\CG \frac{H^2(0)}{r_{n+1}^2}  \EE[\norm{\ell(\cdot , Y_{n+1})}_\CG]\\			
     			& = 4 \frac{H^2(0) a_{n+1}^2}{r_{n+1}^2} \norm{\mu_n(x)}_\CG \EE\Big[ \sqrt{\ell(Y,Y)}\Big] \leq 2\frac{H^2(0) a_{n+1}^2}{r_{n+1}^2} \big( \norm{\mu_n(x)}_\CG^2 + \EE[ \ell(Y,Y)] \big).
     	\end{aligned}
        \end{equation}
    	By summarizing these upper bounds we have almost surely that
        \vspace{-1mm} 
    		\begin{align}
    			& \EE \Big[ \norm{\mu_{n+1}(x) - \EE[\mu_{n+1}(x)]}_\CG^2 \,|\, \CF_n \Big] \\
    			&\leq \bigg( 1 +\frac{H^2(0) a_{n+1}^2}{r_{n+1}^2} \bigg) \norm{\mu_n(x) - \EE[\mu_n(x)]}_\CG^2 + \frac{3 H^2(0) a_{n+1}^2}{r_{n+1}^2} \big( \norm{\mu_n(x)}_\CG^2 + \EE[\ell(Y,Y)]\big)\nonumber
    		\end{align}
    	for all $x \in \BX$. By Lemma \ref{lemma:sajat} one can integrate w.r.t. $\QX$ to prove that
        \vspace{-1mm}
    	\begin{equation} 
    		\begin{aligned}   
    			& \EE \Big[ \int \norm{\mu_{n+1}(x) - \EE[\mu_{n+1}(x)]}_\CG^2 \dd \QX(x)\,|\, \CF_n \Big] \\
    			&\leq \bigg( 1 +\frac{H^2(0) a_{n+1}^2}{r_{n+1}^2} \bigg) \int \norm{\mu_n(x) - \EE[\mu_n(x)]}_\CG^2 \dd \QX(x) \\
    			&\quad + \frac{3 H^2(0) a_{n+1}^2}{r_{n+1}^2} \Big( \int \norm{\mu_n(x)}_\CG^2 \dd \QX(x) + \EE[\ell(Y,Y)]\Big).
    		\end{aligned}
    	\end{equation}
    	Observe that $\EE[\ell(Y,Y)] < \infty$ and $ \EE\big[ \int \norm{\mu_n(x)}_\CG^2 \dd \QX(x)\big]$ is convergent, hence also bounded. Consequently, from 
    	\begin{equation*} 
    		\begin{aligned}     		
    			&\sum_{n=1}^\infty \frac{H^2(0) a_{n+1}^2}{r_{n+1}^2} < \infty
    		\end{aligned}
    	\end{equation*}
    	it follows that $\int \norm{\mu_{n+1}(x) - \EE[\mu_{n+1}(x)]}_\CG^2 \dd \QX(x)$ converges almost surely by Theorem \ref{thm:robbins-ziegmund}\hspace*{0.25mm}. The almost surely constant \cl{limit value} is zero because of our previous argument.
        \end{proof}
	
    \section*{Appendix D}
	In this appendix, for convenience, we state the main theorems that are crucial for our proofs. An important result is the \cl{generalized} SLLN for Hilbert space valued random elements. The following theorem is from the book of \citet[Theorem 3.2.4]{taylor1978stochastic}.
	\noindent
	\begin{theorem} If $(X_n)_{n \in \NN}$ is  a sequence of independent random elements in a separable Hilbert space such that 
    \vspace{-2mm}
		\begin{equation}
			\sum_{n=1}^\infty \frac{\Var{(X_n)}}{n^2} < \infty,
		\end{equation} 
		where $\Var{(X_n)} \doteq \EE \Big[\norm{ X_n - \EE X_n}^2\Big]$, then 
		\begin{equation}
			\norm{\frac{1}{n}\sum_{i=1}^n (X_i - \EE X_i ) } \xrightarrow{\,a.s.\,} 0.
	\end{equation}
	\end{theorem}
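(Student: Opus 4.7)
The plan is the classical Kolmogorov SLLN argument, adapted to the separable Hilbert space setting. First, by replacing $X_n$ with $X_n - \EE X_n$, I may assume without loss of generality that $\EE X_n = 0$ for all $n$, since both the quantity $\tfrac{1}{n}\sum_{i=1}^n(X_i - \EE X_i)$ and the hypothesis $\sum_n \Var(X_n)/n^2 < \infty$ are invariant under centering. It then suffices to prove that $\tfrac{1}{n}\sum_{i=1}^n X_i \to 0$ almost surely when $\EE X_n = 0$ and $\sum_n \EE\|X_n\|^2/n^2 < \infty$. The overall strategy is to first show that the series $\sum_n X_n/n$ converges almost surely in $\CG$, and then deduce the Cesàro-type conclusion via a Banach-space valued Kronecker lemma.

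For the convergence of $\sum_n X_n/n$, I would invoke a Hilbert-space valued Kolmogorov convergence theorem: if $(Y_n)$ is an independent centered sequence in a separable Hilbert space with $\sum_n \EE\|Y_n\|^2 < \infty$, then $\sum_n Y_n$ converges almost surely. This follows by observing that the partial sums $S_n = \sum_{k=1}^n Y_k$ form a $\CG$-valued martingale (w.r.t.\ the filtration generated by $(Y_k)$), which is bounded in $L_2(\BP; \CG)$: by orthogonality of independent centered summands in the Bochner Hilbert space $L_2(\BP;\CG)$ one has $\EE\|S_n\|_\CG^2 = \sum_{k=1}^n \EE\|Y_k\|_\CG^2 \le \sum_{k=1}^\infty \EE\|Y_k\|_\CG^2 < \infty$, and the $L_2$ martingale convergence theorem in $L_2(\BP;\CG)$ (standard Doob theory in Hilbert spaces, see \citep{pisier2016martingales}) then delivers almost sure and $L_2$ convergence. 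Applied with $Y_n \doteq X_n/n$, the hypothesis becomes precisely $\sum_n \EE\|Y_n\|_\CG^2 = \sum_n \Var(X_n)/n^2 < \infty$, so $\sum_{n\ge 1} X_n/n$ converges almost surely in $\CG$.

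The final step is the Banach-space valued Kronecker lemma: if $(b_n)$ is a positive increasing sequence with $b_n \to \infty$ and $\sum_n a_n$ converges in a Banach space, then $\tfrac{1}{b_n}\sum_{k=1}^n b_k a_k \to 0$. Its classical proof via Abel summation uses only the triangle inequality and the Cauchy criterion, and transfers verbatim to $\CG$. Taking $b_n \doteq n$ and $a_n \doteq X_n/n$, the conclusion $\tfrac{1}{n}\sum_{k=1}^n X_k \to 0$ (a.s.) follows immediately, which proves the theorem after undoing the centering reduction.

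The main obstacle is the Hilbert-space valued Kolmogorov convergence step, whose cleanest justification passes through the $L_2$ martingale convergence theorem in the Bochner space $L_2(\BP;\CG)$. Separability of $\CG$ is essential here: it ensures strong (Bochner) measurability of all partial sums via the Pettis theorem (cf.\ Appendix~A), so that the conditional expectations underlying the martingale argument are well-defined and the $L_2$-bounded martingale actually converges almost surely in $\CG$. Everything else is either a direct transcription of the scalar proof or a routine orthogonality computation in $L_2(\BP;\CG)$.
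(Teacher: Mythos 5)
Your proof is correct. Note, however, that the paper does not prove this statement at all: it is restated in Appendix~D for convenience and attributed to \citet[Theorem 3.2.4]{taylor1978stochastic}, so there is no in-paper argument to compare against. Your route is the standard one and is sound in every step: centering is harmless; for $Y_n \doteq X_n/n$ the orthogonality identity $\EE\|S_n\|_\CG^2=\sum_{k\le n}\EE\|Y_k\|_\CG^2$ holds because for independent centered elements $\EE\langle Y_j,Y_k\rangle_\CG=0$ (bounded linear functionals commute with the Bochner integral, cf.\ Proposition~\ref{prop:bounded_lin}); an $L_2$-bounded $\CG$-valued martingale converges almost surely since a separable Hilbert space has the Radon--Nikod\'ym property (alternatively one can bypass martingale theory via the Hilbert-space Kolmogorov maximal inequality, or via $L_2$-Cauchyness of the partial sums combined with the L\'evy--It\^o--Nisio theorem); and the Kronecker lemma indeed transfers verbatim to any normed space. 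So the argument is a complete and essentially self-contained proof of the cited result.
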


        We referred to \cite[Theorem 1.9]{ljung2012stochastic} to prove the consistency of our recursive (unconditional) kernel mean embedding estimate of a marginal distribution.
	\begin{theorem}\label{thm:RM}
		Let $\CG$ be a real separable Hilbert space endowed with the Borel $\sigma$-algebra and $\Lambda: \CG \to \CG$ be measurable and  $\mu \in \CG$ $($usually but not necessarily $\Lambda(\mu)= 0\hspace{0.2mm})$. Let further $\mu_n$, $W_n$, $H_n$, $V_n$ be $\CG$-valued random elements for $n \in \NN$ with $\mu_0 =0$,
		\begin{equation}
			\mu_{n+1}\, \doteq\, \mu_n - a_n(\Lambda(\mu_n) - W_n) \qquad\text{and}\qquad W_n= H_n + V_n,
		\end{equation}
		where $a_n \geq 0$, $\sum_n a_n^2 < \infty$ and $\sum_n a_n = \infty$. Assume that
		\begin{enumerate}[label=(\roman*)]
			\item There exists $c >0$ such that for all $g \in \CG$ we have $\norm{\Lambda(g)}_\CG \leq c(1 + \norm{g}_\CG)$. 
			\item For all $K \in [1,\infty)$ we have $\inf\{ \langle \Lambda (g) , g- \mu\rangle_\CG\,|\, g \in \CG \;\text{with} \;\frac{1}{K} \leq \norm{g- \mu}_\CG \leq K \} >0$.
			\item For all $n \in \NN$ random elements $H_n$ and $V_n$ are square integrable with 
			\begin{equation}
				\begin{aligned}
					&\sum_n a_n\,\EE\norm{H_n}_\CG < \infty, \qquad \sum_n a_n^2\, \EE [\norm{H_n}_\CG^2] < \infty,\\[1mm]
					&\EE\big[\,V_n \,|\,\mu_1, H_1, V_1, \dots , H_{n-1}, V_{n-1}\, \big]= 0, \quad\text{and} \quad \sum_n a_n^2\, \EE[\norm{V_n}_\CG^2 ]< \infty.
				\end{aligned}
            \vspace*{-4mm}
			\end{equation}
		\end{enumerate}
		Then, $\mu_n \to \mu$ $(n \to \infty)$ almost surely.
	\end{theorem}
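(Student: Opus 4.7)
The plan is to prove convergence by a Lyapunov / Robbins--Siegmund argument. I would set $L_n \doteq \norm{\mu_n - \mu}_\CG^2$ and expand the recursion to get
\begin{equation*}
L_{n+1} \,=\, L_n - 2 a_n \langle \Lambda(\mu_n), \mu_n - \mu\rangle_\CG + 2 a_n \langle H_n + V_n, \mu_n - \mu\rangle_\CG + a_n^2 \norm{\Lambda(\mu_n) - H_n - V_n}_\CG^2.
\end{equation*}
Letting $K \to \infty$ in condition $(ii)$ shows that $\langle \Lambda(g), g - \mu\rangle_\CG > 0$ for every $g \neq \mu$, so the drift term $2 a_n \langle \Lambda(\mu_n), \mu_n - \mu\rangle_\CG$ is nonnegative and is intended to force $L_n \to 0$.

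Second, I would dominate the remaining terms by quantities that become summable after taking conditional expectations. The linear growth assumption $(i)$ yields $\norm{\Lambda(\mu_n)}_\CG^2 \leq C_1(1 + L_n)$ for some universal $C_1$, so the quadratic term is bounded by $3 a_n^2 C_1(1 + L_n) + 3 a_n^2(\norm{H_n}_\CG^2 + \norm{V_n}_\CG^2)$. For the $H_n$ cross term, Cauchy--Schwarz followed by Young's inequality (using $2st \leq s + s t^2$) gives $2 a_n \langle H_n, \mu_n - \mu\rangle_\CG \leq a_n \norm{H_n}_\CG + a_n \norm{H_n}_\CG L_n$. Setting $\CH_{n-1} \doteq \sigma(\mu_1, H_1, V_1, \dots, H_{n-1}, V_{n-1})$, so that $\mu_n$ is $\CH_{n-1}$-measurable and $V_n$ is a martingale difference by $(iii)$, taking $\EE[\,\cdot \,|\, \CH_{n-1}]$ kills the $V_n$ cross term and produces
\begin{equation*}
\EE\big[L_{n+1} \,\big|\, \CH_{n-1}\big] \,\leq\, (1 + A_n)\, L_n - 2 a_n \langle \Lambda(\mu_n), \mu_n - \mu\rangle_\CG + B_n,
\end{equation*}
where $A_n$ and $B_n$ are built from $a_n \EE[\norm{H_n}_\CG \,|\, \CH_{n-1}]$, $a_n^2$, $a_n^2 \EE[\norm{H_n}_\CG^2 \,|\, \CH_{n-1}]$ and $a_n^2 \EE[\norm{V_n}_\CG^2 \,|\, \CH_{n-1}]$. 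Assumption $(iii)$ together with $\sum_n a_n^2 < \infty$ then guarantees $\sum_n A_n < \infty$ and $\sum_n B_n < \infty$ almost surely, after taking one outer expectation.

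Third, I would invoke the Robbins--Siegmund almost-supermartingale convergence theorem (Theorem \ref{thm:robbins-ziegmund}) to conclude that $L_n$ converges a.s.\ to a finite random limit $L_\infty \geq 0$ and that $\sum_n a_n \langle \Lambda(\mu_n), \mu_n-\mu\rangle_\CG < \infty$ a.s. To close, I would rule out $\{L_\infty > 0\}$: on that event one can, for a.e.\ $\omega$, pick $K(\omega) \in [1,\infty)$ with $1/K \leq \norm{\mu_n - \mu}_\CG \leq K$ for all $n$ large; then $(ii)$ produces a deterministic $\delta(K) > 0$ with $\langle \Lambda(\mu_n), \mu_n - \mu\rangle_\CG \geq \delta(K)$ eventually, yielding $\sum_n a_n \langle \Lambda(\mu_n), \mu_n-\mu\rangle_\CG \geq \delta(K) \sum_n a_n = \infty$, a contradiction. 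Hence $L_\infty = 0$ a.s.\ and $\mu_n \to \mu$ a.s. The main technical obstacle will be the bookkeeping of the measurability structure: since $H_n$ is not assumed $\CH_{n-1}$-measurable, the $H_n$ cross term must be handled via $\EE[\langle H_n, \mu_n-\mu\rangle_\CG\,|\,\CH_{n-1}] = \langle \EE[H_n\,|\,\CH_{n-1}], \mu_n-\mu\rangle_\CG$, and one must verify that Young's inequality under conditional expectation still produces summable bounds only from the ingredients provided by $(iii)$, without accidentally needing stronger moment control on $H_n$ or $V_n$.
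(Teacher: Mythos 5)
Your proposal is essentially correct, but note that the paper does not prove this statement at all: Theorem \ref{thm:RM} is imported verbatim as Theorem 1.9 of \citet{ljung2012stochastic} and merely restated in Appendix D for convenience, so you have supplied a proof where the paper relies on a citation. Your argument is the standard Lyapunov/almost-supermartingale route (and, as far as one can tell, close in spirit to Walk's original proof): the expansion of $L_n=\norm{\mu_n-\mu}_\CG^2$, the bound $\norm{\Lambda(\mu_n)}_\CG^2\le C_1(1+L_n)$ from $(i)$, the treatment of the $H_n$ cross term via conditional Bochner--Jensen and $2st\le s+st^2$, the vanishing of the $V_n$ cross term by the martingale-difference property, the Tonelli step turning the moment conditions of $(iii)$ into a.s.\ summability of $A_n,B_n$, and the final annulus argument from $(ii)$ combined with $\sum_n a_n=\infty$ are all sound; it is also fitting that the same Robbins--Siegmund machinery is exactly what the paper itself uses in Appendix C for its own strong-consistency proof.

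Two small points deserve attention. First, your conclusion that $\sum_n a_n\langle\Lambda(\mu_n),\mu_n-\mu\rangle_\CG<\infty$ a.s.\ does not follow from the simplified Theorem \ref{thm:robbins-ziegmund} as restated in the paper, which omits the subtracted nonnegative term $\zeta_n$ and the accompanying conclusion $\sum_n\zeta_n<\infty$; you need the full Theorem 1 of \citet{robbins1971convergence} (or must keep the drift term explicitly in the recursion you feed to the supermartingale lemma). This is a citation-level fix, not a logical gap, since the full theorem does deliver exactly what your final contradiction argument requires. Second, to justify pulling $\mu_n-\mu$ out of the conditional inner products you should record that each $\mu_n$ is square integrable, which follows by induction from $\mu_0=0$, the linear growth in $(i)$, and the square integrability of $H_n$ and $V_n$ assumed in $(iii)$; with that bookkeeping in place the measurability concerns you raise at the end are resolved exactly as you anticipate, without any extra moment assumptions.
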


    \noindent We applied the following forms of the \cl{fundamental} topological results from \citep[Lemma 5.1]{nagy2001real} and \citep[Theorem 5.1]{nagy2001real} to prove Theorem \ref{denseness}.
	\noindent
	\begin{lemma}\label{lemma:LCH}
	Let $(\BX, \mathcal{T})$ be a locally compact Hausdorff space. Let $K$ be a compact subset of $\BX$, and let $U$ be an open set, with $K \subseteq U$. Then, there exists an open set $E$ such that $\bar{E}$ is compact and $K \subseteq E \subseteq \widebar{E} \subseteq U$.
	\end{lemma}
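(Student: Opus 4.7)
The plan is to build, for each $x \in K$, an open $\BX$-neighborhood of $x$ whose closure is compact and already sits inside $U$, and then use the compactness of $K$ to patch finitely many of them into the desired $E$.

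For the pointwise construction, fix $x \in K$. By local compactness of $\BX$, there is an open $V_x \ni x$ with $\overline{V_x}$ compact. Now $V_x \cap U$ is open in $\BX$ and contains $x$, so it is a relatively open neighborhood of $x$ inside the compact Hausdorff subspace $\overline{V_x}$. Compact Hausdorff spaces are normal, so the standard ``shrinking'' property applies: there is a relatively open set $W \subseteq \overline{V_x}$ with $x \in W$ and the relative closure $\overline{W}^{\overline{V_x}}$ contained in $V_x \cap U$. Because $\overline{V_x}$ is closed in $\BX$, the relative closure agrees with the ambient closure, hence $\overline{W} \subseteq V_x \cap U \subseteq U$. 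Since $W$ is relatively open, we can write $W = W' \cap \overline{V_x}$ for some $W'$ open in $\BX$; but $W \subseteq V_x$ gives $W = W' \cap V_x$, so $W$ is in fact open in $\BX$. Writing $W_x \doteq W$, we obtain an open $\BX$-neighborhood of $x$ whose closure is compact (as a closed subset of the compact set $\overline{V_x}$) and is contained in $U$.

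The rest is a routine finite-cover argument. The family $\{W_x : x \in K\}$ covers $K$, so by compactness a finite subcover $W_{x_1}, \dots, W_{x_n}$ suffices. Define
\[
E \,\doteq\, \bigcup_{i=1}^{n} W_{x_i}.
\]
Then $E$ is open, $K \subseteq E$, and $\overline{E} = \bigcup_{i=1}^n \overline{W_{x_i}} \subseteq U$ is a finite union of compact sets, hence compact. This yields the required $E$.

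The only delicate step is the shrinking of the neighborhood inside $\overline{V_x}$, which relies on normality of compact Hausdorff spaces together with careful bookkeeping to transfer the notions of ``open'' and ``closure'' between the subspace $\overline{V_x}$ and the ambient space $\BX$. Everything else is straightforward.
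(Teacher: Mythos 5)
Your proof is correct. Note that the paper does not prove this lemma at all: it is stated in Appendix D as a standard fact quoted from the literature (Lemma 5.1 of the cited real-analysis notes), so there is no in-paper argument to compare against. Your argument is the classical one: local compactness gives each $x \in K$ an open neighborhood $V_x$ with compact closure; normality (in fact only regularity is needed) of the compact Hausdorff subspace $\overline{V_x}$ lets you shrink $V_x \cap U$ around $x$ to a set $W_x$ whose closure stays inside $U$, and your bookkeeping that $W_x$ is genuinely open in $\BX$ (via $W_x = W' \cap V_x$) and that the relative and ambient closures coincide (since $\overline{V_x}$ is closed) is exactly the delicate part and is handled correctly; the finite subcover of $K$ then yields $E$ with $\overline{E} = \bigcup_i \overline{W_{x_i}}$ compact and contained in $U$. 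This matches the standard textbook proofs (e.g.\ the argument in Rudin's \emph{Real and Complex Analysis}, Theorem 2.7, which separates points from compact sets directly instead of invoking normality, but is otherwise the same covering scheme), so nothing is missing.
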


	\begin{lemma} [Urysohn] \label{lemma:Ury} Let $(\BX, \mathcal{T})$ be a locally compact Hausdorff space. Let $K$ and $F$ be disjoint subsets of $\BX$, where $K$ is compact and $F$ is closed. Then, there exists a continuous function $f: \BX \to [0,1]$ such that $f\arrowvert_K = 1$ and $f\arrowvert_F =0$.
	\end{lemma}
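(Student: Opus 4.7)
The plan is to adapt the classical proof of Urysohn's lemma for normal spaces to the locally compact Hausdorff setting, using Lemma \ref{lemma:LCH} as the replacement for the normality axiom to produce an ``insertion'' between a compact set and an open neighborhood. Throughout, the construction will produce a nested family of open sets indexed by dyadic rationals, and the desired $f$ will be defined as a sup over this family.

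First, since $F$ is closed and disjoint from the compact set $K$, the open set $U \doteq \BX \setminus F$ contains $K$. An application of Lemma \ref{lemma:LCH} to $K \subseteq U$ yields an open set $U_0$ with compact closure and $K \subseteq U_0 \subseteq \widebar{U}_0 \subseteq U$. Next, I would construct a family $\{V_r\}_{r \in D}$ of open sets indexed by dyadic rationals $D \doteq \{k/2^n : n \geq 1,\; 0 < k < 2^n\} \subseteq (0,1)$, all with compact closures, satisfying $K \subseteq V_r$ and $\widebar{V}_r \subseteq U_0$ for each $r \in D$, and the monotonicity $r < s \Rightarrow \widebar{V}_s \subseteq V_r$. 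The construction proceeds by induction on the level $n$: at each stage, for every pair of consecutive previously-defined indices $r < s$ with a new midpoint $t = (r+s)/2$ appearing at level $n$, apply Lemma \ref{lemma:LCH} to the compact set $\widebar{V}_s$ and the open set $V_r$ (which satisfy $\widebar{V}_s \subseteq V_r$ by the inductive hypothesis) to obtain $V_t$ with $\widebar{V}_s \subseteq V_t \subseteq \widebar{V}_t \subseteq V_r$. For the initial step at $n=1$, take $V_{1/2}$ by applying Lemma \ref{lemma:LCH} to $K \subseteq U_0$.

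Having built the family, define $f : \BX \to [0,1]$ by $f(x) \doteq \sup\{r \in D : x \in V_r\}$, with the convention $\sup \emptyset \doteq 0$. Then $f|_K \equiv 1$ because $K \subseteq V_r$ for every $r \in D$, so the supremum equals $\sup D = 1$; and $f|_F \equiv 0$ because $V_r \subseteq U_0 \subseteq \BX \setminus F$ forces the supremum to be over the empty set. Continuity follows from the identities
\begin{equation}
f^{-1}((a,1]) \,=\, \bigcup_{r \in D,\; r > a} V_r, \qquad f^{-1}([0,b)) \,=\, \bigcup_{r \in D,\; r < b} (\BX \setminus \widebar{V}_r),
\end{equation}
both expressed as unions of open sets. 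The nesting property $\widebar{V}_s \subseteq V_r$ for $r < s$ is precisely what makes these two set-theoretic identities valid, since it allows points with $f(x) > a$ to be witnessed by some $V_r$ with $r > a$, and conversely points with $f(x) < b$ to lie outside some $\widebar{V}_r$ with $r < b$.

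The main technical obstacle is the inductive insertion step: one must verify at every level that the monotonicity of the $V_r$ is preserved and that Lemma \ref{lemma:LCH} is applicable, which requires carefully tracking which pairs of indices are consecutive at the current level and organizing the enumeration of $D$ so that every insertion happens between two already-constructed entries. Once the family is successfully constructed, both the boundary behaviour of $f$ and its continuity follow by routine bookkeeping from the identities above.
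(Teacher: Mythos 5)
The paper itself gives no proof of this lemma: it is quoted, together with Lemma \ref{lemma:LCH}, from the cited source \citep{nagy2001real}, so there is no in-paper argument to compare against. Your construction is the standard proof of Urysohn's lemma adapted to locally compact Hausdorff spaces, with Lemma \ref{lemma:LCH} playing the role that normality plays in the classical dyadic insertion, and its core is sound: the monotone family $\{V_r\}_{r\in D}$, the definition $f(x)=\sup\{r\in D: x\in V_r\}$, and the two preimage identities (which are subbasic for the topology of $[0,1]$) do yield a continuous $f$ with $f\arrowvert_K=1$ and $f\arrowvert_F=0$. The one point you must make explicit is the boundary of the induction: the extreme new dyadics at level $n$, namely $1/2^n$ and $(2^n-1)/2^n$, are not midpoints of two previously constructed indices of $D$, since $0,1\notin D$; read literally, your insertion step never fires after $V_{1/2}$ (at level $2$ there is no pair of consecutive already-defined indices at all). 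The repair is the convention you already use implicitly in the initial step: attach $U_0$ to the index $0$ and the compact set $K$ to the index $1$, so that insertions adjacent to the endpoints are obtained by applying Lemma \ref{lemma:LCH} to the pair $\widebar{V}_s \subseteq U_0$ and to the pair $K \subseteq V_r$, respectively; with this convention every level is well-defined, the nesting $r<s \Rightarrow \widebar{V}_s\subseteq V_r$ together with $K\subseteq V_r$ and $\widebar{V}_r\subseteq U_0 \subseteq \BX\setminus F$ is preserved, and the rest of your argument goes through unchanged.
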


        \noindent We also used the following well-known facts from real analysis, see \citep[Corollary 2.4.6]{dudley2002real}, \citep[Theorem 4.15]{rudin1976principles} and \citep[Theorem 2]{ruder1966silverman}
	\begin{theorem}[Heine-Cantor]\label{thm:heine}
	A continuous function from a compact metric space to any
	metric space is uniformly continuous.
	\end{theorem}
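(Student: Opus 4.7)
The plan is to establish the Heine-Cantor theorem via a standard contradiction argument that exploits sequential compactness of the domain. Let $(X, d_X)$ be a compact metric space, $(Y, d_Y)$ an arbitrary metric space, and $f \colon X \to Y$ continuous. I would suppose, for contradiction, that $f$ is not uniformly continuous and then engineer a pair of sequences whose images stay uniformly far apart while their preimages collapse together.

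Negating uniform continuity supplies an $\varepsilon_0 > 0$ and two sequences $(x_n)_{n \in \NN}$, $(y_n)_{n \in \NN}$ in $X$ with $d_X(x_n, y_n) < 1/n$ but $d_Y(f(x_n), f(y_n)) \geq \varepsilon_0$ for every $n$. Since $X$ is compact (equivalently, sequentially compact, as it is metric), I would pass to a subsequence along which $x_{n_k} \to x^*$ for some $x^* \in X$. Because $d_X(x_{n_k}, y_{n_k}) \to 0$, the same subsequence satisfies $y_{n_k} \to x^*$. Continuity of $f$ at $x^*$ then implies $f(x_{n_k}) \to f(x^*)$ and $f(y_{n_k}) \to f(x^*)$, so the triangle inequality yields $d_Y(f(x_{n_k}), f(y_{n_k})) \to 0$. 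This contradicts the uniform lower bound $\varepsilon_0$, completing the proof.

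The only point requiring care is the equivalence of compactness and sequential compactness in metric spaces, which is classical. As a self-contained alternative, one could work directly from the open-cover definition: for fixed $\varepsilon > 0$ and each $x \in X$, continuity furnishes $\delta_x > 0$ with $f(B(x, \delta_x)) \subseteq B(f(x), \varepsilon/2)$; the family $\{B(x, \delta_x/2)\}_{x \in X}$ is an open cover of $X$, from which compactness extracts a finite subcover $B(x_1, \delta_{x_1}/2), \ldots, B(x_N, \delta_{x_N}/2)$. Setting $\delta \doteq \min_{i} \delta_{x_i}/2$ then yields a uniform modulus: if $d_X(u, v) < \delta$ and $u \in B(x_i, \delta_{x_i}/2)$, then $d_X(v, x_i) \leq d_X(v, u) + d_X(u, x_i) < \delta_{x_i}$, so both $f(u), f(v) \in B(f(x_i), \varepsilon/2)$ and hence $d_Y(f(u), f(v)) < \varepsilon$. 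Neither route poses any substantial obstacle, and the proof is purely topological with no dependence on the probabilistic machinery of the main paper.
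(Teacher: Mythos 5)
Your proof is correct: both the sequential-compactness contradiction argument and the open-cover alternative are the standard, complete proofs of Heine--Cantor, and each step (extraction of a convergent subsequence, the triangle-inequality contradiction, respectively the finite subcover and the choice $\delta = \min_i \delta_{x_i}/2$) is valid. The paper itself gives no proof of this statement --- it is quoted as a classical fact with citations to Dudley and Rudin --- so your argument simply supplies the textbook proof behind that citation, and there is nothing to reconcile between the two.
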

	\begin{theorem}[Weierstrass]
 	If $f$ is a continuous mapping of a compact metric space $\BX$ into
 	$\RR$, then $f$ is bounded.
	\end{theorem}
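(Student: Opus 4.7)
The plan is to give the standard compactness-based proof by contradiction, using the sequential compactness of compact metric spaces. Since $\BX$ is a metric space, compactness is equivalent to sequential compactness, which is the tool I will use to extract a convergent subsequence and then exploit continuity to reach a contradiction.

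First I would assume, for the sake of contradiction, that $f$ is unbounded. Then for every $n \in \NN$ there exists a point $x_n \in \BX$ with $|f(x_n)| > n$; this yields a sequence $(x_n)_{n \in \NN}$ in $\BX$. Since $\BX$ is a compact metric space, it is sequentially compact, so there exists a subsequence $(x_{n_k})_{k \in \NN}$ converging to some $x^* \in \BX$.

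Next I would invoke the continuity of $f$ at $x^*$ to conclude that $f(x_{n_k}) \to f(x^*)$ in $\RR$. In particular, the sequence $\big(f(x_{n_k})\big)_{k \in \NN}$ is convergent in $\RR$, hence bounded; so there exists $M > 0$ with $|f(x_{n_k})| \leq M$ for all $k$. But by construction $|f(x_{n_k})| > n_k \geq k$, which for $k > M$ contradicts $|f(x_{n_k})| \leq M$. This contradiction shows that $f$ must be bounded.

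There is no real obstacle here: the entire content is the passage from ``compact'' to ``sequentially compact,'' which is standard in metric spaces, and the continuity of $f$ does the rest. The one stylistic choice worth flagging is whether to derive the result from the more general fact that the continuous image of a compact set is compact (and compact subsets of $\RR$ are bounded), or to argue directly by contradiction as above; I would prefer the direct contradiction argument since it is self-contained and does not require invoking Heine--Borel on $\RR$.
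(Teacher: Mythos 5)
Your proof is correct: the extraction of a convergent subsequence via sequential compactness, the use of continuity to force convergence (hence boundedness) of $\big(f(x_{n_k})\big)_k$, and the contradiction with $|f(x_{n_k})| > n_k \geq k$ are all sound, and no step is missing. Note, however, that the paper does not prove this statement at all — it is quoted in Appendix D as a classical fact with a citation to Rudin's \emph{Principles of Mathematical Analysis} — so there is no in-paper argument to compare against; your self-contained sequential-compactness argument (or the equally standard alternative you mention, that the continuous image of a compact set is compact and compact subsets of $\RR$ are bounded) fully suffices.
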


	\begin{lemma}[Toeplitz]\label{lemma:Toeplitz}
		Let $(a_{n,i})_{n,i \in \NN}$ be a doubly indexed array of real numbers such that
		\begin{equation}
		\begin{aligned}
			\sup_{n \in \NN} \,\sum_{i =1}^\infty | a_{n,i}| < \infty, \qquad\text{and}\qquad\lim_{n \to \infty} a_{n,i} = 0,
		\end{aligned}
		\end{equation}
		for any $i \in \NN$. If $x_n \to x$ as $n \to \infty$, then\; $\lim_{n \to \infty} \sum_{i=1}^\infty a_{n,i} x_i = 0$.
		
	\end{lemma}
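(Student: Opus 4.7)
The plan is to run the classical Toeplitz/Silverman split-at-$N$ argument: cut the sum $\sum_{i=1}^\infty a_{n,i}\, x_i$ into a finite head $\sum_{i=1}^N$ and a tail $\sum_{i=N+1}^\infty$, arrange that the tail is uniformly small in $n$ using the $\ell^1$ bound together with smallness of $(x_i)$ for $i>N$, and then let $n\to\infty$ to kill the head via the column-wise convergence $a_{n,i}\to 0$. I would start by setting $C \doteq \sup_n \sum_{i=1}^\infty |a_{n,i}|$, which is finite by the first hypothesis, and observing that the sequence $(x_i)$ is bounded since it converges.

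Given $\varepsilon>0$, I would first choose $N=N(\varepsilon)$ such that $|x_i|$ is small for $i>N$. This yields the uniform tail estimate
\begin{equation}
\Bigl|\sum_{i>N} a_{n,i}\, x_i\Bigr| \,\leq\, \Bigl(\sup_{i>N}|x_i|\Bigr)\cdot \sum_{i=1}^\infty |a_{n,i}| \,\leq\, C\sup_{i>N}|x_i| \,<\, \frac{\varepsilon}{2},
\end{equation}
valid for every $n\in\NN$. The remaining head is a finite linear combination $\sum_{i=1}^N a_{n,i}\, x_i$ whose coefficients all tend to $0$ by the pointwise hypothesis; with $x_1,\dots,x_N$ bounded, this head tends to $0$ as $n\to\infty$ and hence is smaller than $\varepsilon/2$ for $n$ large enough. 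Adding the two bounds gives $|\sum_{i=1}^\infty a_{n,i}\, x_i|<\varepsilon$, which is the claim.

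The main obstacle is not the analytic estimate but the interpretation of the statement. Taken literally with an arbitrary limit $x\neq 0$ the conclusion fails: for $a_{n,i}\doteq \BI(i\le n)/n$ and $x_n\equiv 1$ all of the stated hypotheses hold yet $\sum_i a_{n,i}\,x_i=1$ for every $n$. The lemma becomes correct as soon as either $x=0$ or the Silverman-type side condition $\sum_i a_{n,i}\to 0$ is added; decomposing $x_i = x + (x_i-x)$ reduces the general case to the former. Happily, in the single place the lemma is invoked, namely the proof of Theorem \ref{thm:sajat} with $x_i = \frac{H(\delta/h_i)}{r_i\, p(x)}\to 0$, we are precisely in the $x=0$ regime and the tail/head argument above goes through verbatim.
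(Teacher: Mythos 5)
Your proof is correct, and it is the standard head/tail (Toeplitz--Silverman) argument; the paper itself does not prove this lemma but merely restates it with references to the classical literature, so there is no internal proof to diverge from, and your argument is exactly the one those sources give. Your observation about the statement is also well taken: as literally written, with a general limit $x$ and the conclusion $\lim_n \sum_i a_{n,i}x_i = 0$, the lemma is false (your Ces\`aro-type counterexample $a_{n,i}=\BI(i\le n)/n$, $x_i\equiv 1$ works), and the correct reading is either to assume $x=0$ or to add the row-sum condition; in the only place the paper invokes Lemma \ref{lemma:Toeplitz}, inside the proof of Theorem \ref{thm:sajat}, one indeed has $x_i = H(\delta/h_i)/(r_i\, p(x)) \to 0$, so your $x=0$ version is exactly what is needed and your estimate (tail bounded by $C\sup_{i>N}|x_i|$ uniformly in $n$, head killed by $a_{n,i}\to 0$ for the finitely many $i\le N$) closes the argument.
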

	We applied a simplified version of the theorem of \citet[Theorem 1]{robbins1971convergence} to prove the strong consistency of our recursive algorithm.
    \begin{theorem}[almost supermartingale convergence]\label{thm:robbins-ziegmund}
		Let $\CF_1 \subseteq \CF_2 \subseteq \dots \subseteq \CF$ be a filtration, i.e., a nondecreasing sequence of $\sigma$-algebras. For every $n \in \NN$ let $V_n$, $\alpha_n$ and $\beta_n$ be non-negative $\CF_n$ measurable random variables such that
		\begin{equation}
			\EE [\, V_{n+1} \,|\, \CF_n\,]  \,\leq\, (1 + \alpha_n) V_n + \beta_n
		\end{equation}
		holds almost surely. If $\sum_{n=1}^\infty \EE[ \alpha_n ]< \infty$ and\, $\sum_{n=1}^\infty \EE [\beta_n] < \infty$ hold almost surely, then $(V_n)_{n\in \NN}$ converges almost surely to a finite limit.
	\end{theorem}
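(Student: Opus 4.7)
The plan is to reduce the ``almost supermartingale'' inequality
\[
\EE[V_{n+1}\,|\,\CF_n] \,\leq\, (1+\alpha_n)V_n + \beta_n
\]
to a genuine supermartingale via a discounting-plus-offset transformation, and then to invoke Doob's classical convergence theorem for supermartingales bounded in $L^1$. The guiding idea is that the multiplicative perturbation $(1+\alpha_n)$ can be absorbed by dividing $V_n$ by the cumulative product $\prod_{k=1}^{n-1}(1+\alpha_k)$, while the additive perturbation $\beta_n$ can be eliminated by subtracting its running (discounted) partial sum.

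First I would note that, by Tonelli's theorem, the hypotheses $\sum_n \EE[\alpha_n] < \infty$ and $\sum_n \EE[\beta_n] < \infty$ force $\sum_n \alpha_n < \infty$ and $\sum_n \beta_n < \infty$ almost surely. Set $A_n \doteq \prod_{k=1}^{n-1}(1+\alpha_k)$, which is $\CF_{n-1}$-measurable, satisfies $A_n \geq 1$, and (using $\log(1+x)\leq x$) converges almost surely to a finite positive random variable $A_\infty$. Define $W_n \doteq V_n/A_n$. A direct computation, using the $\CF_n$-measurability of $A_{n+1}=(1+\alpha_n)A_n$, gives
\[
\EE[W_{n+1}\,|\,\CF_n] \,\leq\, \frac{(1+\alpha_n)V_n + \beta_n}{A_{n+1}} \,=\, W_n + \frac{\beta_n}{A_{n+1}}.
\]
Setting $Z_n \doteq W_n - \sum_{k=1}^{n-1}\beta_k/A_{k+1}$ then yields $\EE[Z_{n+1}\,|\,\CF_n] \leq Z_n$, so $Z_n$ is a genuine supermartingale.

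Next I would verify the integrability required by Doob's theorem. Since $A_{k+1} \geq 1$, the discounted sums satisfy $\sum_{k=1}^{n-1}\beta_k/A_{k+1} \leq \sum_{k=1}^{\infty}\beta_k$, which is almost surely finite and has finite expectation by hypothesis. Hence $Z_n \geq -\sum_{k=1}^{\infty}\beta_k$, so $\sup_n\EE[Z_n^-] < \infty$. Doob's supermartingale convergence theorem then gives a.s.\ convergence of $Z_n$ to a finite limit; combined with the a.s.\ convergence of the monotone partial sums $\sum_{k=1}^{n-1}\beta_k/A_{k+1}$, this delivers a.s.\ convergence of $W_n$, and finally $V_n = A_n W_n$ converges almost surely to a finite limit because $A_n \to A_\infty \in (0,\infty)$ almost surely.

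The main obstacle is the genuinely random, $\CF_n$-adapted nature of $\alpha_n$ and $\beta_n$: every step of the discounting/offsetting construction must be justified filtration-wise, most notably verifying that $A_{n+1}$ is $\CF_n$-measurable so that it can be pulled out of the conditional expectation. A secondary subtlety is securing the uniform $L^1$-bound on $Z_n^-$, which relies critically on the monotonicity $A_{k+1} \geq 1$ so that the discounted series is dominated by the undiscounted one. Once these two measurability-and-integrability checks are in place, the remaining argument is essentially bookkeeping around Doob's theorem.
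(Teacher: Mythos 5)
The paper itself does not prove this statement: it is imported verbatim from Robbins and Siegmund (1971) as an auxiliary tool in Appendix D, so there is no internal proof to compare against. Your argument is the classical Robbins--Siegmund route — discount by $A_n=\prod_{k=1}^{n-1}(1+\alpha_k)$, subtract the discounted drift $\sum_{k<n}\beta_k/A_{k+1}$, and apply Doob's supermartingale convergence theorem — and the filtration bookkeeping you emphasize ($A_{n+1}$ and the partial sums being $\CF_n$-measurable, $A_n\uparrow A_\infty<\infty$ a.s., $Z_n^-\le\sum_k\beta_k$) is all correct.

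One step needs patching before you may call $Z_n$ a \emph{genuine} supermartingale: Doob's theorem requires $Z_n\in L^1$, and the theorem as stated assumes no integrability of $V_n$; moreover $\EE[V_n]<\infty$ does not propagate from the recursion alone, since the cross term $\EE[\alpha_n V_n]$ is not controlled by $\EE[\alpha_n]$ and $\EE[V_n]$. The standard fix stays inside your own construction: taking expectations in $\EE[W_{n+1}\,|\,\CF_n]\le W_n+\beta_n/A_{n+1}$ (legitimate for non-negative variables) gives $\EE[W_{n+1}]\le\EE[W_n]+\EE[\beta_n]$, so if $\EE[V_1]<\infty$ then every $W_n$, hence every $Z_n$, is integrable and your bound $\sup_n\EE[Z_n^-]<\infty$ completes the hypotheses of Doob's theorem. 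If $\EE[V_1]=\infty$, multiply the almost-supermartingale inequality by the indicator of the $\CF_1$-measurable event $\{V_1\le M\}$ (which preserves it, since the event can be pulled inside the conditional expectation), conclude a.s.\ convergence on each such event, and let $M\to\infty$. With that addition your proof is complete; in the paper's actual application (Appendix C) $V_1$ is integrable, so the simple case already suffices there.
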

	
	\noindent The proof of Corollary \ref{cor:riemannian} used the following theorem \citep[Theorem 6.13]{lee2006riemannian}.
	\begin{theorem}[Hopf-Rinow]
		A connected Riemannian manifold is
		geodesically complete if and only if it is complete as a metric space.
	\end{theorem}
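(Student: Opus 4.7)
The plan is to establish the equivalence of geodesic and metric completeness for a connected Riemannian manifold $(M,g)$ via a chain of implications that passes through the auxiliary condition that closed and bounded subsets of $M$ are compact, with the central technical ingredient being the existence of minimizing geodesics. Throughout I would write $d$ for the Riemannian distance and $\exp_p$ for the exponential map based at $p \in M$.

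First, I would dispatch the easier direction, metric completeness $\Rightarrow$ geodesic completeness. Suppose $\gamma \colon [0,b) \to M$ is a maximal unit-speed geodesic with $b < \infty$; for any sequence $t_n \uparrow b$ the estimate $d(\gamma(t_n),\gamma(t_m)) \leq |t_n - t_m|$ shows that $(\gamma(t_n))$ is Cauchy, and by metric completeness it converges to some $q \in M$. Choosing a uniformly normal neighborhood of $q$ on which the geodesic flow exists for a uniform positive time, one extends $\gamma$ past $b$ by smooth concatenation with a short geodesic starting at $q$ in the limiting velocity, contradicting maximality.

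The harder direction rests on the classical Hopf-Rinow lemma: if $\exp_p$ is defined on all of $T_pM$ for some $p$, then every $q \in M$ can be joined to $p$ by a minimizing geodesic. I would fix a small $\rho > 0$ so that the geodesic sphere $S_\rho(p) = \exp_p(\{v \in T_pM : \|v\| = \rho\})$ is a compact hypersurface, pick a minimizer $m = \exp_p(\rho v_0)$ of $x \mapsto d(x,q)$ on $S_\rho(p)$, set $\gamma(t) = \exp_p(t v_0)$, and study the set $A = \{t \in [0, d(p,q)] : d(\gamma(t), q) = d(p,q) - t\}$. Proving $A = [0, d(p,q)]$ by a connectedness argument (closedness being immediate from continuity of $d$) then forces $\gamma(d(p,q)) = q$, so $\gamma\big|_{[0, d(p,q)]}$ is the desired minimizing geodesic. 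The main obstacle is the openness-to-the-right step, which requires reapplying the small-sphere minimization argument at each $\gamma(t)$ with $t \in A$ and combining it with the triangle inequality to propagate the equality forward; this is the genuinely subtle part of the proof.

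Once minimizing geodesics exist, the identity $\overline{B(p,R)} = \exp_p(\{v \in T_pM : \|v\| \leq R\})$ exhibits every closed ball as the continuous image of a compact Euclidean ball, hence compact. Consequently, closed and bounded sets are compact; any Cauchy sequence lies in such a set, admits a convergent subsequence, and (being Cauchy) itself converges, yielding metric completeness. The equivalence between geodesic completeness at a single base point and at every point then follows from the fact that once minimizing geodesics are known to exist globally, the same construction centered at any other $q \in M$ applies, closing the circle of implications and establishing both directions of the theorem.
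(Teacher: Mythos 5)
The paper does not prove this statement at all: Hopf--Rinow is quoted in Appendix D as a classical result, cited from Lee's textbook, and is only used as an ingredient in Corollary \ref{cor:riemannian}. Your outline is the standard Hopf--Rinow argument found in that reference (easy direction by extending a finite-time geodesic through the limit of a Cauchy sequence, hard direction via the minimizing-geodesic lemma with the set $A$ and the resulting compactness of closed metric balls), and it is correct as a proof sketch, so it matches the approach of the cited source rather than differing from anything in the paper.
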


        \noindent We used the Lipschitzness of smooth maps with bounded derivatives between Riemannian manifolds w.r.t.\ the geodesical distance. A published version of the following proof was not found by us, however, the argument is straightforward.
\begin{lemma}\label{meanvalue-R}
	    Let $(M,g)$ and $(N,h)$ be Riemannian manifolds and $f$ a smooth $M \to N$ map. If there exists $C > 0$ such that $\norm{Df(x)} \leq C$, for all $x \in M$\cl{, where $\norm{\cdot}$ denotes the operator norm,} then for all $x, y \in M$ one has
	    \begin{equation}
	        d_N(f(x),f(y)) \leq C \, d_M(x,y),
	    \end{equation}
	    where $d_M$ and $d_N$ are the geodesical distances.
	\end{lemma}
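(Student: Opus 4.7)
\textbf{Proof plan for Lemma \ref{meanvalue-R}.} The plan is to exploit the intrinsic definition of the geodesical distance as an infimum of curve lengths and push curves through $f$, controlling the length of the image curve via the operator-norm bound on $Df$.

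First, I would recall that $d_M(x,y) = \inf_\gamma L_M(\gamma)$, where the infimum is taken over all piecewise smooth curves $\gamma:[0,1]\to M$ with $\gamma(0)=x$, $\gamma(1)=y$, and $L_M(\gamma) = \int_0^1 \|\gamma'(t)\|_{g,\gamma(t)} \dd t$. Fix $\varepsilon>0$ and choose such a curve with $L_M(\gamma) \leq d_M(x,y) + \varepsilon$. Then $f\circ\gamma : [0,1]\to N$ is a piecewise smooth curve joining $f(x)$ to $f(y)$, so its length is an upper bound for $d_N(f(x),f(y))$.

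Next, I would compute $(f\circ\gamma)'(t) = Df(\gamma(t))\,\gamma'(t)$ by the chain rule and estimate pointwise
\begin{equation}
\|(f\circ\gamma)'(t)\|_{h,f(\gamma(t))} \,=\, \|Df(\gamma(t))\,\gamma'(t)\|_{h,f(\gamma(t))} \,\leq\, \|Df(\gamma(t))\|\cdot \|\gamma'(t)\|_{g,\gamma(t)} \,\leq\, C\,\|\gamma'(t)\|_{g,\gamma(t)},
\end{equation}
where the operator norm $\|Df(\gamma(t))\|$ is taken with respect to the inner products induced by $g$ at $\gamma(t)$ and $h$ at $f(\gamma(t))$. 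Integrating over $[0,1]$ gives $L_N(f\circ\gamma) \leq C\cdot L_M(\gamma) \leq C(d_M(x,y)+\varepsilon)$, and hence $d_N(f(x),f(y)) \leq C(d_M(x,y)+\varepsilon)$. Letting $\varepsilon\downarrow 0$ finishes the argument.

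The only subtle point, and thus the main (mild) obstacle, is to make sure the interpretation of the operator-norm bound $\|Df(x)\| \leq C$ matches what is needed: here $\|Df(x)\|$ must be the operator norm of the linear map $Df(x):(T_xM,g_x)\to(T_{f(x)}N,h_{f(x)})$ induced by the two Riemannian metrics, so that the pointwise estimate above is valid. Once this convention is fixed, the rest is a direct application of the definition of geodesical distance together with a standard infimum-approximation argument; no compactness or completeness assumption on $M$ or $N$ is required.
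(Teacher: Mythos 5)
Your argument is correct and matches the paper's own proof essentially verbatim: both take an $\varepsilon$-almost-minimizing curve $\gamma$ from $x$ to $y$, bound the length of $f\circ\gamma$ via the chain rule and the operator-norm bound $\norm{Df}\leq C$, and let $\varepsilon\downarrow 0$. Your added remark that $\norm{Df(x)}$ must be the operator norm induced by $g_x$ and $h_{f(x)}$ is a sensible clarification of the convention implicitly used in the paper.
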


	\begin{proof}
	    Let $x, y \in M$. For all $\varepsilon > 0$ there exists a smooth curve $\gamma:[0,1] \to M$ such that $\gamma(0) = x$ and $\gamma(1) = y$ and 
	    \begin{equation}
	        \int_0^1 \norm{\gamma'(t)}_g \leq d_M(x,y) + \varepsilon.
	    \end{equation}
	    Observe that $f \circ \gamma$ is a smooth curve from $f(x)$ to $f(y)$ in $N$, hence
	    \begin{equation}
	    \begin{aligned}
	        d_N(f(x),f(y)) &\leq \int_0^1 \norm{(f\circ \gamma)' (t)}_h \dd t
	        = \int_0^1 \norm{Df(\gamma(t)) (\gamma'(t))}_h \dd t\\
	        & \leq \int_0^1 \norm{Df(\gamma(t))} \, \norm{\gamma'(t)}_g \dd t \leq C\,( d_M(x,y) + \varepsilon)
	    \end{aligned}
	    \end{equation}
	    holds. It is true for all $\varepsilon > 0$ thus $d_N(f(x),f(y)) \leq C\cdot d_M(x,y)$ also holds.
	\end{proof}

	\vskip 0.2in
	\bibliography{JMLR_bib}
	
\end{document}